\documentclass{article}

\PassOptionsToPackage{numbers, compress}{natbib}

\usepackage[final]{neurips_2025}




\usepackage[utf8]{inputenc} 
\usepackage[T1]{fontenc}    
\usepackage{hyperref}       
\usepackage{url}            
\usepackage{booktabs}       
\usepackage{amsfonts}       
\usepackage{nicefrac}       
\usepackage{microtype}      
\usepackage{xcolor}         
\usepackage{xpatch}
\usepackage{amsmath,amssymb,amsfonts,amsthm}
\usepackage{cleveref}
\usepackage{enumitem}
\usepackage{xpatch}
\usepackage{wrapfig}
\usepackage{graphicx}
\usepackage{subcaption}
\usepackage{multirow}

\usepackage{enumitem}

\theoremstyle{plain}
\newtheorem{theorem}{Theorem}[section]

\newtheorem{lemma}[theorem]{Lemma}

\theoremstyle{definition}
\newtheorem{definition}[theorem]{Definition}
\newtheorem{assumption}[theorem]{Assumption}
\theoremstyle{remark}
\newtheorem{remark}[theorem]{Remark}

\bibliographystyle{plainnat}
\makeatletter
\xapptocmd{\NAT@bibsetnum}{\setlength{\leftmargin}{0pt}\setlength{\itemindent}{\labelwidth}\addtolength{\itemindent}{\labelsep}}{}{}
\makeatother

\title{From Sequence to Structure: Uncovering Substructure Reasoning in Transformers}

%


\author{%
  Xinnan Dai$^{1\dagger}$\thanks{corresponding: \{guokai1, daixinna\}@msu.edu, $\dagger$ equal contribution, code is available at \\\href{https://github.com/DDigimon/From_Sequence_to_Structure}{\texttt{https://github.com/DDigimon/From\_Sequence\_to\_Structure}}}, \ 
 Kai Yang$^{2\dagger}$, \
  Jay Revolinsky$^{1}$, \
  Kai Guo$^{1*}$,\\
  \textbf{Aoran Wang$^{3}$, \
  Bohang Zhang$^{2}$, \
  Jiliang Tang$^{1}$} \\
  $^1$Michigan State University, \
  $^2$Peking University, \
$^3$University of Luxembourg \\
\{daixinna, revolins, guokai1, tangjili\}@msu.edu,\
yangkai@alumni.pku.edu.cn,\\
zhangbohang@pku.edu.cn,\ ralf.wong@outlook.com
}


\usepackage{amsmath,amsfonts,bm}
\usepackage[version=4]{mhchem}
\usepackage{adjustbox}
\usepackage{caption}
\usepackage{subcaption}








\def\eqref#1{equation~\ref{#1}}









\def\1{\bm{1}}








\def\vone{{\bm{1}}}

\def\vk{{\bm{k}}}

\def\vq{{\bm{q}}}



\DeclareMathAlphabet{\mathsfit}{\encodingdefault}{\sfdefault}{m}{sl}
\SetMathAlphabet{\mathsfit}{bold}{\encodingdefault}{\sfdefault}{bx}{n}


\def\gF{{\mathcal{F}}}

\def\gT{{\mathcal{T}}}



\def\sZ{{\mathbb{Z}}}










\begin{document}

\maketitle

\begin{abstract}
Recent studies suggest that large language models (LLMs) possess the capability to solve graph reasoning tasks. Notably, even when graph structures are embedded within textual descriptions, LLMs can still effectively answer related questions. This raises a fundamental question: \textit{How can a decoder-only Transformer architecture understand underlying graph structures?} To address this, 
we start with the substructure extraction task, interpreting the inner mechanisms inside the transformers and analyzing the impact of the input queries. Specifically, through both empirical results and theoretical analysis, we present Induced Substructure Filtration (ISF), a perspective that captures the substructure identification in the multi-layer transformers. We further validate the ISF process in LLMs, revealing consistent internal dynamics across layers. Building on these insights, we explore the broader capabilities of Transformers in handling diverse graph types. Specifically, we introduce the concept of thinking in substructures to efficiently extract complex composite patterns, and demonstrate that decoder-only Transformers can successfully extract substructures from attributed graphs, such as molecular graphs. Together, our findings offer a new insight on how sequence-based Transformers perform the substructure extraction task over graph data.




  
\end{abstract}
\setlength{\belowdisplayskip}{5pt} \setlength{\belowdisplayshortskip}{4pt}
\setlength{\abovedisplayskip}{5pt} \setlength{\abovedisplayshortskip}{4pt}





\section{Introduction}


It is evident from recent studies that large language models (LLMs) are capable of understanding structured data~\cite{GraphMem, wang2024instructgraph, peng2025rewarding}. For example, when graph structures are presented in textual sequence, LLMs can identify node connections~\cite{talkgraph,wang2023can}, detect graph patterns~\cite{chen2024graphwiz, graphPatt}, and compare common subgraphs across a given set~\cite{tang2024grapharena,graphPatt}. However, transformers, which serve as the backbone of LLMs, are inherently designed for sequential textual data, which does not naturally capture graph structures. This gap raises a fundamental question: How can a sequence-based decoder-only transformer comprehend structured data like graphs?

To answer this question, existing research focuses mainly on basic graph reasoning tasks to build the concept of the mechanism by which transformers understand graph structures~\cite{sanford2024understanding,yehudai2025depth}. The shortest path is one of the basic tasks~\cite{wang2023can,dai2024revisiting,bachmann2024pitfalls}. Based on the shortest path task, 
SLN~\cite{cohen2025spectral} suggests that a form of spectral navigation implicitly emerges within Transformer layers, enabling global coordination across nodes. Meanwhile, 
Abulhair et al.~\cite{transformer_struggle} and ALPINE~\cite{wang2024alpine} argue that Transformers learn to find paths by composing and merging multiple candidate paths based on the provided edge list. However, these studies are limited to linear paths, while real-world graphs often contain more complex, non-linear substructures such as cycles, trees, and other motifs. As a result, existing understandings drawn from path-finding tasks may not generalize to comprehensive graph understanding and are limited to explain why LLMs can do various graph tasks.

In this work, we explore how Transformers tackle the broader challenge of substructure understanding, with a particular emphasis on the task of substructure extraction. Building on the use of LLMs for substructure extraction~\cite{graphPatt, chen2024graphwiz}, ~\Cref{fig:intro}a) illustrates the overall process with Transformers: 
Transformer-based models receive a query prompt, which is composed of a textual graph representation and a question prompt as input. Then, they identify the relevant substructure, and generates an answer for the given graph.

To investigate how Transformers derive answers from input tokens, in~\Cref{sec:pattern_extract} we conduct empirical and theoretical analyses of train-from-scratch Transformers, focusing on the internal Transformer mechanisms and input queries,
as shown in ~\Cref{fig:intro}b). To understand the internal behavior of the model, we introduce a new perspective, Induced Substructure Filtration (ISF), which suggests that Transformers perform a layer-wise node aggregation process to detect substructures. 

To verify the reliability of our interpretation modules, we demonstrate that our approach also applies to understanding LLM behavior and argue that it can inform the development of future methods. In~\Cref{sec:LLMs}, we show that our explanation for Transformers aligns with the behaviors observed in LLMs, particularly in how they tackle various textual graph representations and perform graph extraction tasks. Furthermore, in~\Cref{sec:Inference_conclusions}, we explore the potential of Transformers in graph understanding, building on insights from our interpretation modules. We argue that it is reasonable to extend Transformers to handle attributed graphs, such as molecular graphs. Finally, we introduce the Thinking-In-Substructure framework, which enhances Transformers' capabilities in complex graph reasoning tasks.
In summary, we offer a new perspective on how Transformers understand graph structures from sequential inputs. Our key contributions are summarized as follows:
\begin{enumerate}[topsep=0.5pt, itemsep=-1.85pt, wide, leftmargin=*]
    \item We provide insights into how Transformers extract substructures, based on experiments and theory, focusing on internal mechanisms and input queries.

\item We propose Induced Substructure Filtration (ISF) to explain how Transformers identify substructures across layers.

\item  We show that our interpretation is applicable to LLMs, explaining their behaviors in graph tasks and supporting the extension of Transformers to attributed graphs and more complex graph reasoning.
\end{enumerate}




\begin{figure*}
    \centering
    \includegraphics[width=\linewidth]{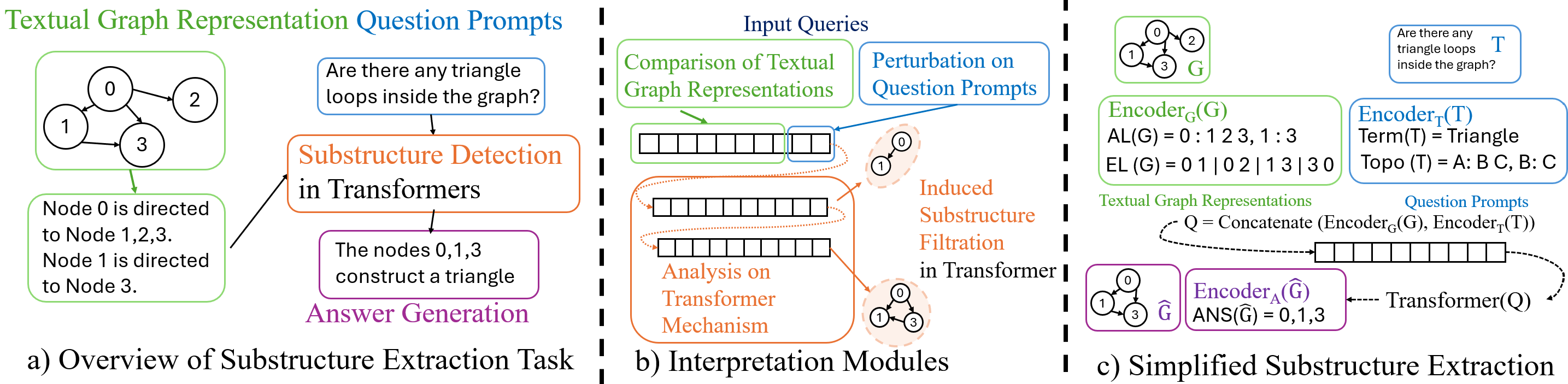}
    \caption{An overview of the interpretation for the substructure extraction task.
a) Substructure Extraction Task: The Transformer receives a graph description and a question prompt as input and generates an answer.
b) Interpretation Modules: The analysis includes input queries and internal Transformer processing.
c) Simplified Substructure Extraction: The extraction process is simplified to highlight the core mechanism.}
    \label{fig:intro}
    
\end{figure*}

\section{Preliminary}
\label{sec:preliminary}
We combine theoretical insights and experimental results to demonstrate how Transformers solve the substructure extraction task. To support this, we briefly introduce the core notations and definitions used in this work and provide an overview of the experimental setup for the following empirical studies.

\subsection{Problem formulations}


\paragraph{Substructure extraction in transformers} Although we prompt LLMs to interpret natural language sentences to answer substructure-related questions, we simplify the process by converting these sentences into symbolic tokens. This enables us to analyze and train Transformers from scratch. The simplified process is illustrated in~\Cref{fig:intro} c). Given a graph $G=\{V,E\}$ and a question prompt $T$, we encode them using  $\mathrm{Encoder_G}$ and $\mathrm{Encoder_T}$ respectively to obtain simplified sentence sequences. These are then concatenated, with the encoded question placed after the graph representation, forming the input query $Q$. The objective is to extract the set of isomorphic subgraphs $\hat{G}=\{g_1,g_2,\cdots,g_s\}$, where each $g_i$ represents an instance of the desired substructure within the input graph. The overall process is defined as: $\mathrm{Encoder_A}(\hat{G})=\mathrm{Transformers}(Q)=\mathrm{Transformers}(\mathrm{Encoder_G}(G),\mathrm{Encoder_T}(T))$. We introduce each encoder of this framework in the following paragraphs.


\paragraph{Textual graph representations ($\mathrm{Encoder_G}$)} To input graphs into a Transformer, prior work~\cite{talkgraph,graphPatt} often converts them into textual sequences using either the Adjacency List (AL) or Edge List (EL). Specifically, for a graph $G=\{V,E\}$ and a vertex
$v_i\in V$, the AL format captures its neighborhood
$N(v_i) = \{v \in V \mid (v_i, v) \in E\} = \{v_i^1, \cdots, v_i^{m_i}\}$, where ${m_i}$ is the number of neighbors of nodes $v_i$.  
In the textual representation, each node and its neighbors are formatted as a sentence: the central node and its neighbors are separated by a colon “:”, and different such groups are separated by commas “,”, which formulated as:
\begin{align*}
        \mathsf{AL}(G) = (v_1; \text{``:''};  v_1^1;  \cdots ; v_1^{m_1}; \text{``,''} ; \cdots ; \text{``,''};  v_n; \text{``:''} ; v_n^1 ; \cdots ; v_n^{m_n}). 
\end{align*}
Instead of focusing on central nodes, the EL format enumerates all possible edges $(v_i, v_j) \in E$. Each edge pair is separated by a vertical bar “|”. The representation of EL is formulated as:
%
\begin{align*}
    \mathsf{EL}(G) = (v_1 ; v_1^1 ; \text{``|''} ; \cdots; \text{``|''} ; v_1;  v_1^{m_1} ;\text{``|''} ;\cdots ; \text{``|''} ; v_n ; v_n^1 ; \text{``|''} ; \cdots ; \text{``|''} ; v_n ; v_n^{m_n} ).
\end{align*}
The details of the definitions are in the~\Cref{def:adj_list_repre} and~\Cref{def:edge_list_repre} in~\Cref{app_sec:preliminaries}.

\paragraph{Question prompt ($\mathrm{Encoder_T}$)} Next, we define the question prompt to determine which substructures should be extracted from the input graph. This prompt, denoted as instruction 
$T$, can be either terminology-based or topology-based, as described in~\cite{graphPatt}. If the substructures are well-known, such as a “triangle”, they can be defined using either terminology or topological instructions, represented as $\mathsf{Term}(T)=(\mathrm{triangle})$ and $\mathsf{Topo}(T)=(\mathrm{A:BC,B:C})$, respectively. However, in most cases, the substructures are not clearly defined by terminology, so we rely on topology-based definitions. 

\paragraph{Answer generation ($\mathrm{Encoder_A}$)} The output of the Transformer is a text sequence. However, this sequence must correspond to a unique substructure. To align the substructures with the text output, we constrain the Transformer to output the node sets for each substructure, separated by commas. Formally, the output is represented as: $\mathsf{ANS}(\hat{G})=(v^{g_1}_1,v^{g_1}_2,\dots,v^{g_1}_{w},``,",\dots,``," v^{g_s}_1,v^{g_s}_2,\dots,v^{g_s}_{w})$,
where each group 
$\left\{v_1^{g_i}, \ldots, v_{w}^{g_i}\right\}$ denotes the nodes in subgraph $g_i$, and commas ``," are used to delimit different substructures.



\subsection{Experiment settings}

\paragraph{Transformer training} We train Transformer models using the same architecture as GPT-2 but in a lightweight version, with only 384 hidden dimensions and a small number of layers depending on the tasks. The details for each task are shown in the~\Cref{app_sec:experiment_setting}. During training, the model is optimized only to predict $\mathsf{ANS}(\hat{G})$. For evaluation, we use accuracy as the metric. A predicted answer is considered correct only if the $\mathsf{ANS}(\hat{G})$ is exactly the same with the ground truth.

\paragraph{Dataset setting} We generate over 5 million directed graphs, with node counts ranging from 4 to 16 and edge counts from 3 to 120. The graphs are constructed based on specific requirements detailed in the following empirical studies. To prevent result copying from the same graphs, we ensure that the graphs in the training and testing sets are non-isomorphic.


\section{Interpretations for Substructure Extraction in Transformers}
\label{sec:pattern_extract}

In this section, we present our insights into how Transformers perform substructure understanding. We focus on two main aspects: the internal mechanisms of Transformers in solving the substructure extraction task, discussed in~\Cref{subsec:ISF}, and the impact of input query formulation on extraction performance~\Cref{sub_sec:input_forms}.


\subsection{Induced Substructure Filtration in Transformer}
\label{subsec:ISF}
In this subsection, we introduce how Transformers solve the substructure extraction task. First, we show that Transformers can extract substructures of 
diverse shapes, as detailed in~\Cref{sec:single_structure}. We then analyze the underlying mechanism and propose the ISF process in~\Cref{subsubsec:ISF}. Finally, we demonstrate how ISF generalizes to cases with multiple substructures of varying numbers and shapes in~\Cref{sec:syn_substructure}.
\subsubsection{Single substructure extraction}
\label{sec:single_structure}

We begin with the Single-Shape-Single-Num case, evaluating whether Transformers can extract a specific target substructure from a given graph whose scale and shape may vary. The selected substructures contain 3 to 5 nodes and 3 to 6 edges. We also investigate the effects of dataset size and the number of Transformer layers. To this end, we vary the training set size from 100K to 400K and the number of Transformer layers from 2 to 5. For each substructure, we evaluate the extraction accuracy on 30K test graphs, averaging results over three runs.~\Cref{tab:main_table} shows the results for various substructure extraction tasks.



\begin{table}[]
\vspace{-0.5cm}
\caption{Transformers extract the substructures from the given graph sequence}
\resizebox{\textwidth}{!}{\begin{tabular}{ll|ccccccccc}
\toprule
\multirow{2}{*}{\# Training} & \multicolumn{1}{l|}{\multirow{2}{*}{\# Layer}} & Triangle      & Path          & Square        & Diagonal      & T\_triangle   & F\_Triangle   & Diamond       & Pentagon      & House         \\ 

                               & \multicolumn{1}{l|}{}                          &  {\raisebox{0.3cm}{\includegraphics[width=0.08\textwidth]{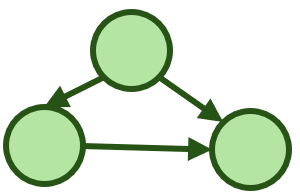}}}             &  {\raisebox{0cm}{\includegraphics[width=0.08\textwidth]{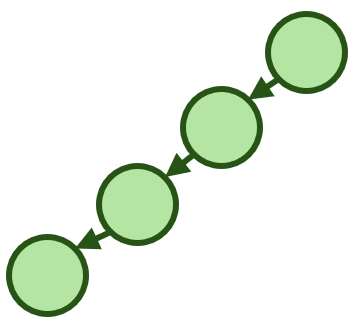}}}             &    {\raisebox{0cm}{\includegraphics[width=0.08\textwidth]{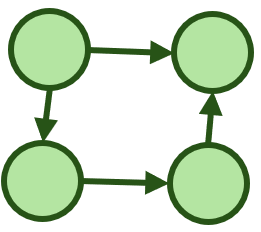}}}           &   {\raisebox{0cm}{\includegraphics[width=0.08\textwidth]{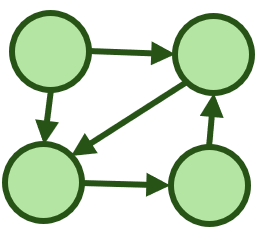}}}            &    {\raisebox{0cm}{\includegraphics[width=0.08\textwidth]{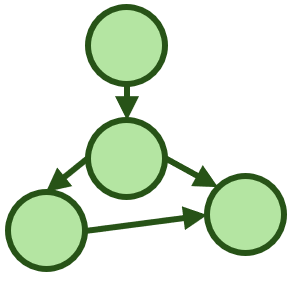}}}           &      {\raisebox{0cm}{\includegraphics[width=0.08\textwidth]{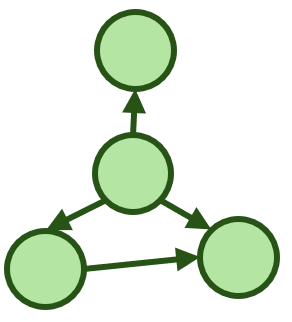}}}         &        {\raisebox{0cm}{\includegraphics[width=0.08\textwidth]{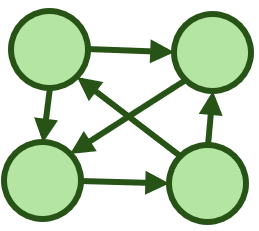}}}       &    {\raisebox{0cm}{\includegraphics[width=0.08\textwidth]{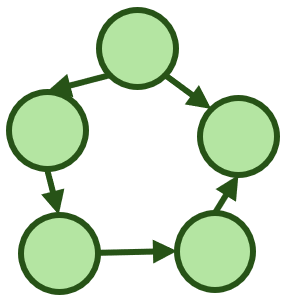}}}           &      {\raisebox{0cm}{\includegraphics[width=0.08\textwidth]{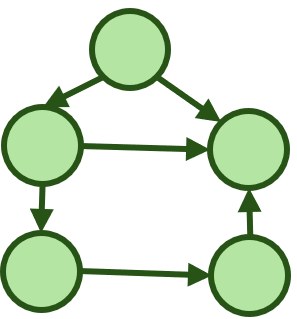}}}         \\
                               \midrule
\multirow{2}{*}{100K}          & 2                                             & 0.5301 ± 0.06 & 0.5534 ± 0.03 & 0.1936 ± 0.04 & 0.1163 ± 0.00 & 0.1911 ± 0.03 & 0.2877 ± 0.03 & 0.0656 ± 0.01 & 0.3628 ± 0.01 & 0.3705 ± 0.00 \\
                               & 3                                             & 0.9662 ± 0.00 & 0.8066 ± 0.02 & 0.3991 ± 0.00 & 0.4417 ± 0.07 & 0.4974 ± 0.00 & 0.5329 ± 0.04 & 0.1635 ± 0.03 & 0.5638 ± 0.01 & 0.5603 ± 0.00 \\
\midrule
\multirow{2}{*}{300K}          & 3                                             & 0.9948 ± 0.00 & 0.9195 ± 0.01 & 0.7247 ± 0.01 & 0.6313 ± 0.07 & 0.7775 ± 0.02 & 0.7831 ± 0.06 & 0.6189 ± 0.02 & 0.7063 ± 0.05 & 0.7455 ± 0.03 \\
                               & 4                                             & 0.9947 ± 0.00 & 0.9493 ± 0.02 & 0.9403 ± 0.02 & 0.9140 ± 0.02 & 0.9080 ± 0.03 & 0.8097 ± 0.02 & 0.8765 ± 0.02 & 0.8634 ± 0.00 & 0.8386 ± 0.04 \\
\midrule
\multirow{2}{*}{400K}          & 4                                             & 0.9802 ± 0.02 & 0.9802 ± 0.01 & 0.9620 ± 0.00 & 0.9596 ± 0.00 & 0.9287 ± 0.02 & 0.8534 ± 0.01 & 0.9048 ± 0.03 & 0.8612 ± 0.01 & 0.8023 ± 0.05 \\
                               & 5                                             & 0.9977 ± 0.00 & 0.9948 ± 0.00 & 0.9679 ± 0.02 & 0.9707 ± 0.01 & 0.9430 ± 0.04 & 0.8750 ± 0.02 & 0.9306 ± 0.02 & 0.8922 ± 0.01 & 0.8530 ± 0.02 \\
                               \bottomrule
\end{tabular}}
\label{tab:main_table}

\vspace{-0.5cm}
\end{table}

The Transformers are capable to extract the target shape of substructures from the given graph with at least 2 layer transformers, achieving over 85\% accuracy. However, different substructures exhibit varying requirements in terms of both data scale and model depth. 
For instance, the 3-cycle (triangle) structure can achieve 99\% accuracy with just 3 layers and 100K training examples, while the 5-cycle (pentagon) structure requires over 5 layers and at least 400K examples to attain comparable performance. Furthermore, we observe that the minimum number of Transformer layers required to achieve 85\% accuracy correlates with the number of nodes in the substructure. For example, all 4-node substructures can achieve promised results with 4-layer transformers, while 5-node substructures need 5-layer transformers. This suggests that the number of layers is a crucial factor in a Transformer's ability to understand graph structures. To understand how the number of layers affects a Transformer's grasp of graph structures, we analyze substructure extraction across layers in the following subsections.


\subsubsection{Induced Subgraph Filtration}
\label{subsubsec:ISF}
\paragraph{Visualization Results}
We visualize token embeddings to better understand how Transformers extract substructures. Since decoder-only Transformers process input left to right~\cite{chenlijie_layer_theory}, we use the final token embeddings to reflect their graph understanding. We apply t-SNE to project these embeddings into a 2D space, labeling each graph by its substructure 
answer, which is represented by the node IDs as illustrated in $\mathsf{ANS}(\hat{G})$. As an example, we use the square substructure extraction task, shown in~\Cref{fig:single-structure}. The legends are the node IDs. For example, "0431" indicates that the model first identifies node 0 (with out-degree 2), followed by its neighbor 4 (out-degree 1), then node 3 (a neighbor of 4), and finally node 1 (a neighbor of both 3 and 0)

\begin{figure*}[h!]
    \centering
    \begin{minipage}[b]{0.5\textwidth}
        \centering
        \includegraphics[width=0.48\textwidth]{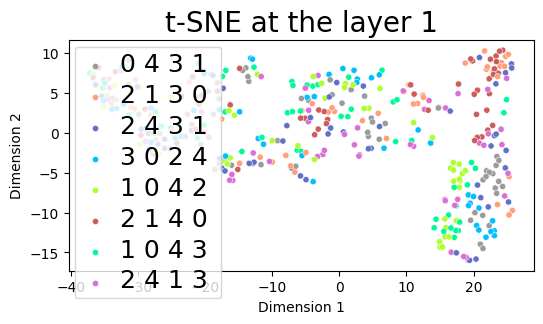}
        \includegraphics[width=0.48\textwidth]{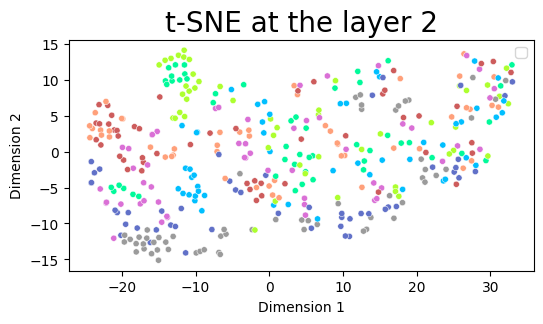}


        \includegraphics[width=0.48\textwidth]{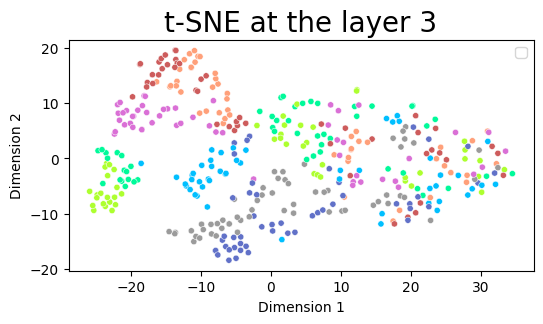}
        \includegraphics[width=0.48\textwidth]{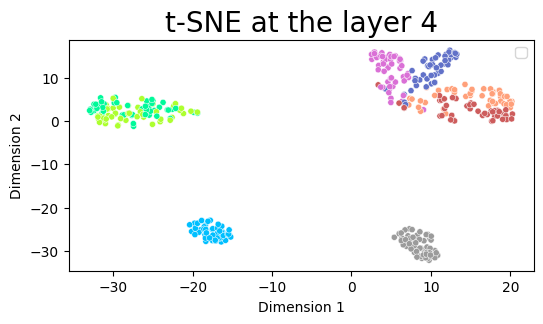}

        \caption{Visualization across 4 layers. We show the node ID distributions of target substructures. Legends indicate the node IDs}
        \label{fig:single-structure}
    \end{minipage}
    \hspace{0.1cm} 
    \begin{minipage}[b]{0.22\textwidth}
        \centering
        \includegraphics[width=\textwidth]{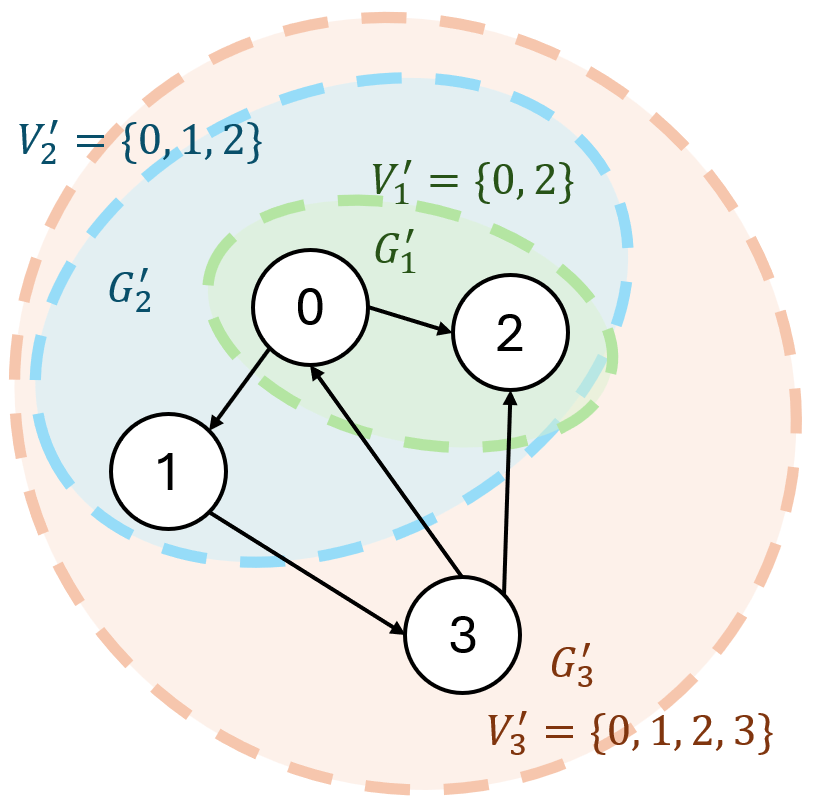}
        \caption{4-Node 3-Filtration and Induced Subgraph Filtration}
        \label{fig:alg}
        
    \end{minipage}
    \hspace{0.1cm} 
        \begin{minipage}[b]{0.24\textwidth}
        \centering
        \includegraphics[width=\textwidth]{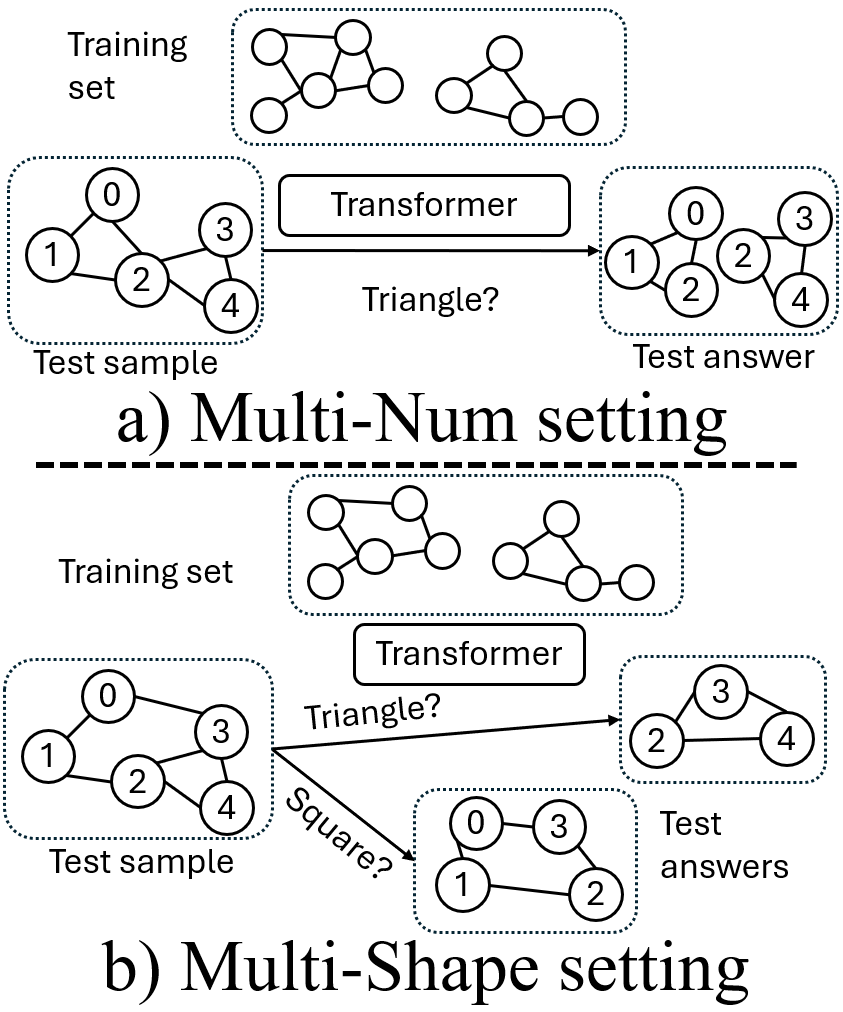}
        \caption{Tasks in Simultaneous detection}
        \label{fig:multi_task}
    \end{minipage}
\end{figure*}

The visualization results reveal how Transformers identify substructures across layers, with graphs sharing similar answers gradually clustering together. Although the visualization targets the final graph token, the substructure answers are already determined by the last layer before the generation step.
We also observe that substructures form progressively across layers. For example, in layer 2, 
graphs with substructure ID `2431' (dark blue) begin clustering near `0431' (grey) and `2413' (purple), sharing `31' and `24'. By layer 3, `2431' moves closer to `0431', which shares `431'. In the final layer, substructure types are clearly separated. Since outputs follow a left-to-right order, those with similar starting tokens cluster together. Transformers infer substructures before generation, progressively organizing substructures across layers.

\paragraph{Theoretical modeling}
We further provide a theoretical framework for this process by introducing filtrations to formalize progressive substructure extraction. Using the square extraction visualization as an example, we model it as a 4-node, 3-filtration process, indicating that the target substructure contains 4 nodes and requires 3 filtration steps, each corresponding to an induced subgraph. More generally, we define this framework as a $k$-Node $m$-Filtration, referred to as Induced Subgraph Filtration, as defined in~\Cref{def:subgraph_filter}.


\begin{definition}[$k$-Node $m$-Filtration and Induced Subgraph Filtration]
\label{def:subgraph_filter}
    A \emph{$k$-node $m$-filtration} on $V'$ ($|V'|=k$) is $\gF(V') = (V_1', \dots, V_m')$ where $\varnothing \neq V_1' \subseteq \dots \subseteq V_m' = V'$. For $G'=(V', E')$, this yields an \emph{induced subgraph filtration} $(G'_1, \dots, G'_m)$ where $G'_i = G'[V_i']$.
\end{definition}


\Cref{fig:alg} illustrates the concept defined in \Cref{def:subgraph_filter}. We model the extracted substructure as $G' = (V', E')$ with $|V'| = k$, and represent the extraction process using an $m$-Filtration, where $m$ denotes the number of gathering operations required for the Transformers to identify the substructure.


Further, to capture the matches of $G'$ in $G=(V,E)$, we define a Substructure Isomorphism Indicator Tensor. 

\begin{definition}[Subgraph Isomorphism Indicator Tensor]
\label{def:subgraph_iso_tensor_simple}
    For graphs $G=(V,E)$ ($|V|=n$) and $G'=(V',E')$ ($|V'|=k$), the \emph{subgraph isomorphism indicator tensor} $\mathcal{T}(G, G')$ is $k$-dimensional ($n \times \dots \times n$) where its entry for an ordered $k$-tuple of vertices $(v_{j_1}, \dots, v_{j_k})$ from $V$ is $1$ if these vertices induce a subgraph isomorphic to $G'$ (via a predefined mapping $v'_p \mapsto v_{j_p}$ for $p=1, \dots, k$), and $\mathcal{T}_{j_1, \dots, j_k} \leq 0$ otherwise (see \Cref{def:subgraph_iso_tensor} for details).
\end{definition}

\Cref{thm:filtration_transformer} (proof in \Cref{app:proof-sec:extraction}) shows that Transformers can progressively compute $\mathcal{T}(G, G')$ for each substructure along the filtration.  $O(n^k)$ is the hidden dimension needed for the Transformer to check all $k$-node subgraphs in an $n$-node graph.

\begin{theorem}[Expressiveness for Progressive Identification]
\label{thm:filtration_transformer}
    Given a $k$-node $m$-filtration $\gF(V')$ on $V'=\{v_1', \dots, v_k'\}$. For any directed graphs $G=(V,E)$ ($|V|=n$) and $G'=(V',E')$, a log-precision Transformer with $m+2$ layers, constant heads, and $O(n^k)$ hidden dimension can output $\mathsf{vec}(\gT(G, G'[V_i']))$ at layer $i+2$ for $i\in\{1,\dots,m\}$.
\end{theorem}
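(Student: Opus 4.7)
The statement is an expressiveness claim, so I would prove it by explicitly constructing a Transformer of the prescribed shape that carries out the progressive identification along the filtration. The construction decomposes into two preprocessing layers that turn the serialized graph (AL or EL) into a random-access copy of the adjacency matrix $A\in\{0,1\}^{n\times n}$ in the residual stream of a designated output token, followed by $m$ filtration layers indexed by $i=1,\dots,m$, where layer $i+2$ produces $\mathsf{vec}(\gT(G,G'[V_i']))$ at that same token. Because $\gT(G,G'[V_i'])$ has $n^{|V_i'|}\le n^k$ entries, allocating $O(n^k)$ hidden dimensions suffices to hold the current tensor, its predecessor, and the adjacency matrix simultaneously.

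\textbf{Preprocessing (Layers 1--2).} Layer~1 tags each input token with its role (vertex identifier, or separator ``:'', ``,'', ``|'') and with the ordered pair $(a,b)$ of vertex indices that the local context declares to span an edge: in EL this is read from adjacent tokens, and in AL it follows from pairing the ``central'' vertex before a colon with each neighbor that appears after it until the next comma. Both patterns are constant-depth local lookups realizable by constant-head attention keyed on positional offsets and separator-type indicators. Layer~2 then aggregates these per-token edge markers into a single vector by routing, for each coordinate $(a,b)\in[n]\times[n]$, the presence bit into the $(a,b)$ slot of the output token's residual stream. This uses $O(n^2)$ of the $O(n^k)$ hidden dimensions.

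\textbf{Filtration layers (Layers $3$ through $m+2$).} Suppose inductively that layer $i+1$ stores $S_{i-1}:=\mathsf{vec}(\gT(G,G'[V_{i-1}']))$ next to $A$, with base case $S_0\equiv 1$. Layer $i+2$ implements the recursion
\[
S_i[j_1,\dots,j_{|V_i'|}] = S_{i-1}[j_1,\dots,j_{|V_{i-1}'|}]\cdot C_i(j_1,\dots,j_{|V_i'|}),
\]
where $C_i$ is a product of a constant number of $\{0,1\}$-valued factors determined by the newly admitted vertices: an $A[j_p,j_q]$ for every edge present in $G'[V_i']$ but not in $G'[V_{i-1}']$, a $1-A[j_p,j_q]$ for every such non-edge, and a distinctness indicator $\mathbf{1}[j_p\neq j_q]$ for each newly introduced pair. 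A single attention head suffices to expose both $A$ and $S_{i-1}$ to the MLP, and a two-layer ReLU MLP of width $O(n^{|V_i'|})$ computes the coordinatewise product across all $n^{|V_i'|}$ entries in parallel; every local formula is a bounded fan-in AND that is exactly realizable at log precision because all inputs are $\{0,1\}$-valued.

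\textbf{Main obstacle.} The trickiest step is the preprocessing: giving a uniform edge-extraction circuit that handles both AL and EL formats despite variable spacing and variable neighbor counts. I would address this by letting one attention head scan leftward for the most recent ``:'' or ``|'' and another scan for the nearest preceding vertex identifier, so that each vertex token can deterministically identify its edge partner using only constant heads. A secondary subtlety is confirming that the whole pipeline respects log precision: since every intermediate scalar is either in $\{0,1\}$ or a positional index in $[n]$, $O(\log n)$ bits per coordinate are sufficient throughout, and the only multiplications performed are among single-bit factors whose product remains a single bit.
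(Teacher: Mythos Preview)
Your two-stage plan (two preprocessing layers followed by $m$ filtration layers) matches the paper's construction, but there is a genuine gap in how you handle $G'$. You hardcode the pattern into the weights: the factor $C_i$ is built from ``every edge present in $G'[V_i']$'', which presupposes the Transformer already knows $E'$ at construction time. The theorem, however, quantifies $\exists\,\text{Transformer}\ \forall\,G,G'$ (the formal restatement in the appendix makes this order explicit and describes the Transformer as ``processing $G,G'$''). Accordingly, the paper's first two layers extract \emph{both} $\mathsf{vec}(A(G))$ and $\mathsf{vec}(A(G'))$, and the per-layer update uses an additive recursion
\[
\gT(G,G'[V_i'])_{j_1,\dots,j_{k_i}} \;=\; \gT(G,G'[V_{i-1}'])_{j_1,\dots,j_{k_{i-1}}} + \!\!\sum_{\substack{x,y\le k_i\\ x>k_{i-1}\,\vee\, y>k_{i-1}}}\!\!\bigl[A(G')_{x,y}A(G)_{j_x,j_y}-A(G')_{x,y}\bigr] - \mathbf{1}[\exists\,j_x=j_y],
\]
which reads $A(G')$ as data rather than wiring it in. Your multiplicative $C_i$ could be adapted the same way (conditioning each factor on the bit $A(G')_{x,y}$), but as written your construction only establishes the weaker $\forall\,G'\ \exists\,\text{Transformer}\ \forall\,G$ claim.

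A second, smaller discrepancy: your $C_i$ multiplies in $(1-A[j_p,j_q])$ for every non-edge of $G'[V_i']$, which computes the \emph{induced} subgraph isomorphism indicator. The paper's tensor $\gT$ only requires edge preservation---non-edges of $G'$ are allowed to map to edges of $G$---so those factors would force entries to $0$ where the definition demands $1$. (The word ``induced'' in the filtration definition refers to taking induced subgraphs $G'[V_i']$ of the pattern, not to induced matching in the host graph.)
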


Furthermore, the Transformer extracts the unique instance of $G'$, meaning the answer is uniquely determined by the given graph representation and question prompt, and $\mathcal{T}(G, G')$ contains exactly one entry equal to 1. This leads to~\Cref{ass:unique_iso} and~\Cref{thm:unique_subgraph_transformer}. With this condition, the substructure extraction task is solvable for transformers.

\begin{assumption}[Single-Shape-Single-Num]
\label{ass:unique_iso}
    For graphs $G, G'$, there is a \emph{unique} $k$-tuple of indices $(i_1, \dots, i_k)$ for which $\gT(G,G')_{i_1, \dots, i_k} = 1$.
\end{assumption}
\begin{theorem}[Expressiveness for Pattern Extraction]
\label{thm:unique_subgraph_transformer}
    Under \Cref{ass:unique_iso}, for directed graphs $G=(V,E)$ ($|V|=n$) and $G'=(V',E')$ ($|V'|=k$), a log-precision Transformer with constant depth, constant heads, and $O(n^k)$ hidden dimension can output the unique $k$-tuple of vertices $(v_{i_1}, \dots, v_{i_k})$ for which $\gT(G,G')_{i_1, \dots, i_k} = 1$.
\end{theorem}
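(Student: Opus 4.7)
The plan is to derive \Cref{thm:unique_subgraph_transformer} as a short corollary of \Cref{thm:filtration_transformer} by taking the trivial filtration and appending a constant number of read-out layers. First, I would instantiate \Cref{thm:filtration_transformer} with the one-step filtration $\gF(V')=(V')$, i.e., $m=1$. This already yields a log-precision Transformer with $3$ layers, constant heads, and $O(n^k)$ hidden dimension whose representation at layer $3$ contains $\mathsf{vec}(\gT(G, G'))$. Under \Cref{ass:unique_iso}, exactly one entry of $\gT(G,G')$ equals $1$ and the rest are $\leq 0$; denote its index by $(i_1^\star,\dots,i_k^\star)$.

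Next, I would attach a constant number of additional Transformer blocks that decode this unique $1$-entry into a sequence of vertex tokens. The first extra block uses a ReLU-type MLP to threshold every entry of $\gT(G,G')$ into $\{0,1\}$, producing a one-hot tensor over $[n]^k$. The subsequent blocks emit the tokens $v_{i_1^\star},\dots,v_{i_k^\star}$ autoregressively: at the $p$-th output step, an attention head with a position-dependent query picks out axis $p$ and computes the one-dimensional marginal
\begin{equation*}
M_p(j) \;=\; \sum_{i_1,\dots,i_{p-1},i_{p+1},\dots,i_k} \gT(G,G')_{i_1,\dots,i_{p-1},\,j,\,i_{p+1},\dots,i_k}.
\end{equation*}
By the unique-instance assumption, $M_p(j) = \mathbf{1}[j = i_p^\star]$, so a standard hard-attention $\argmax$ (realizable with a single softmax head at log precision) recovers $i_p^\star$, and a final linear map plus embedding lookup emits the token $v_{i_p^\star}$ in the format prescribed by $\mathsf{ANS}$.

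The main obstacle is the bookkeeping required to route the $O(n^k)$-dimensional hidden state through a fixed number of heads. Concretely, one must (i) fix a deterministic layout of $\mathsf{vec}(\gT(G,G'))$, e.g.\ lexicographic ordering of $(i_1,\dots,i_k)$, so that the $p$-th axis marginalization is realized as a fixed linear read followed by attention, and (ii) use positional encodings of the output step $p$ as attention queries so that the same small set of heads can be reused across all $k$ generation steps. Both ingredients are routine log-precision constructions. Combining the $3$ layers from \Cref{thm:filtration_transformer} with the $O(1)$ read-out layers above yields a Transformer of constant depth, constant heads, and $O(n^k)$ hidden dimension that outputs the unique $k$-tuple, completing the proof.
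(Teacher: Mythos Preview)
Your proposal is correct and follows essentially the same route as the paper: invoke \Cref{thm:filtration_transformer} with the trivial filtration $m=1$ to obtain $\mathsf{vec}(\gT(G,G'))$, apply a ReLU to turn it into a one-hot vector under \Cref{ass:unique_iso}, and then read off the index tuple in constantly many additional layers using positional information at each output step. The only cosmetic difference is in the readout: the paper extracts each coordinate directly as the scalar $i_x=\sum_{i_1,\dots,i_k} i_x\cdot\text{ReLU}(\gT(G,G'))_{i_1,\dots,i_k}$ via a single linear map and then selects the $p$-th one by an inner product with a one-hot positional code, whereas you first marginalize to a length-$n$ one-hot and then argmax; both are fixed linear reads on the hidden state (done in the MLP rather than by attention over tokens), so the constructions and resource bounds coincide.
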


\begin{remark}
\label{rmk:graph-rep}
    \Cref{thm:filtration_transformer,thm:unique_subgraph_transformer} hold for various input graph representations (e.g., adjacency lists $\mathsf{AL}(G)$ or edge lists $\mathsf{EL}(G)$), as formally defined in \Cref{def:adj_list_repre,def:edge_list_repre}.
\end{remark}

\subsubsection{Simultaneous detection of multiple substructures}
\label{sec:syn_substructure}


As graphs often contain multiple and diverse substructures, we further demonstrate how the ISF process adapts to such scenarios. Specifically, we evaluate whether a 4-layer Transformer can accurately detect both repeated and differently shaped substructures. To this end, we design the Single-Shape-Multi-Num and Multi-Shape-Single-Num evaluation settings, with the pipeline illustrated in~\Cref{fig:multi_task}.



\paragraph{Single-Shape-Multi-Num} As shown in~\Cref{fig:multi_task} a), the Single-Shape-Multi-Num task involves training and testing on graph sets where each sample may contain multiple target substructures—up to five in total. This task evaluates whether Transformers can successfully extract all target substructures within a single graph. We define the task in~\Cref{def:multi-num-extraction}.
\begin{definition}[Single-Shape-Multi-Num Extraction]
\label{def:multi-num-extraction}
    The Single-Shape-Multi-Num extraction task requires a model to output all $k$-tuples of vertices $(v_{i_1}, \dots, v_{i_k})$ corresponding to occurrences of directed graph $G'=(V',E')$ ($|V'|=k$) in $G=(V,E)$ ($|V|=n$).
    Formally, the objective is to output all tuples satisfying $\gT(G,G')_{i_1,\dots,i_k}=1$, where $\gT(G,G')$ is Subgraph Isomorphism Indicator Tensor defined in \Cref{def:subgraph_iso_tensor_simple}.

\end{definition}

As shown in~\Cref{fig:multi-num_bar}, Both triangles and square detections can achieve over 85\% accuracy. The number of substructures has little impact on the Transformers' ability. They can identify multiple patterns at once. We further analyze examples with two substructures and find that answers are still often determined before the final generation step~(\Cref{fig:multi-num}). The theoretical explanation is provided in~\Cref{thm:multinum_subgraph_transformer}.

\begin{theorem}[Expressiveness for Single-Shape-Multi-Num Extraction]
\label{thm:multinum_subgraph_transformer}
    Fix integers $n\ge k \ge 1$.
    There exists a log-precision Transformer with constant depth, constant number of attention heads, and $O(n^k)$ hidden dimension that can complete Single-Shape-Multi-Num Extraction defined in~\Cref{def:multi-num-extraction} for directed graphs $G=(V,E)$ ($|V|=n$) and $G'=(V',E')$ ($|V'|=k$).

\end{theorem}

\begin{figure*}[h!]

    \begin{minipage}[t]{0.24\textwidth}
        \centering
        \adjustbox{valign=t}{%
            \includegraphics[width=\textwidth]{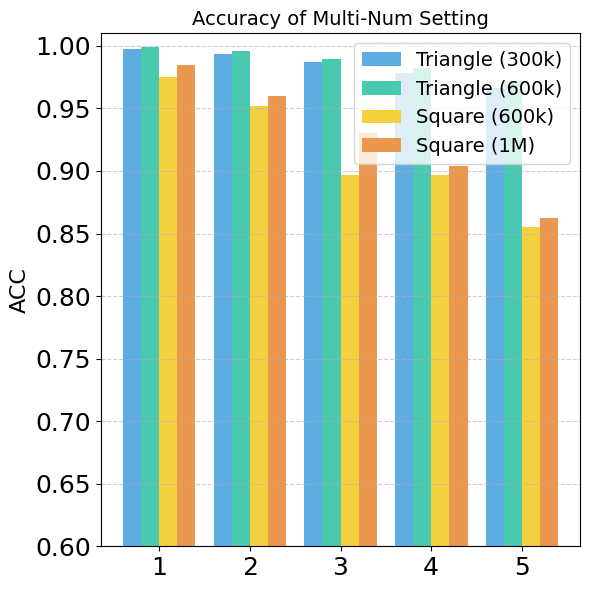}%
        }
        \caption{The Multi-Num setting results}
        \label{fig:multi-num_bar}
    \end{minipage}%
    \hspace{0.1cm}%
    \begin{minipage}[t]{0.24\textwidth}
        \centering
        \adjustbox{valign=t}{%
            \includegraphics[width=\textwidth]{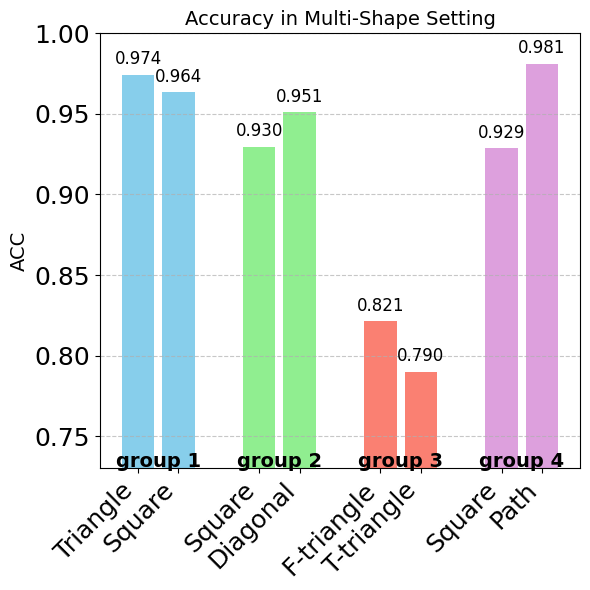}%
        }
        \caption{The Multi-Shape setting results}
        \label{fig:multi_shape_bar}
    \end{minipage}%
    \hspace{0.1cm}%
    \begin{minipage}[t]{0.24\textwidth}
        \centering
        \adjustbox{valign=t}{%
            \includegraphics[width=\textwidth]{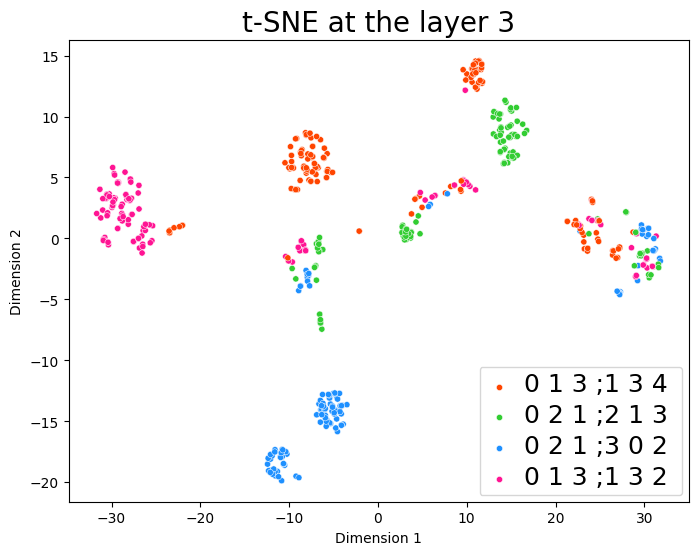}%
        }
        \caption{Transformers has organized the answers before the generate the answers.}
        \label{fig:multi-num}
    \end{minipage}%
    \hspace{0.1cm}%
    \begin{minipage}[t]{0.24\textwidth}
        \centering
        \adjustbox{valign=t}{%
            \includegraphics[width=\textwidth]{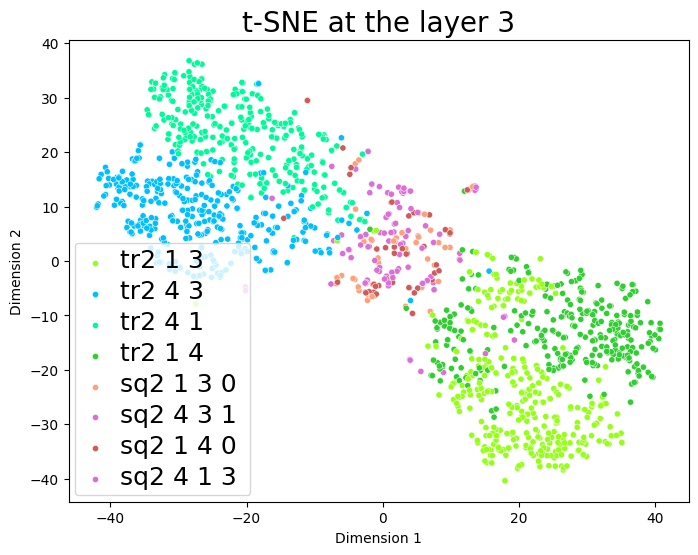}%
        }
        \caption{Triangles (tr) are identified at layer 3 when trained with squares (sq).}
        \label{fig:multi-shape}
    \end{minipage}
    
    \label{fig:multiple_shape_bar}
\end{figure*}

\paragraph{Multi-Shape-Single-Num} 
As illustrated in~\Cref{fig:multi_task} b), the Multi-Shape-Single-Num task involves training and testing on graph sets where each sample may contain multiple substructures of different shapes. This task evaluates whether Transformers can identify diverse substructure types within a single graph defined in~\Cref{def:multi-shape-extraction}. 

\begin{definition}[Multi-Shape-Single-Num Extraction]
\label{def:multi-shape-extraction}
    The Multi-Shape-Single-Num extraction task requires a model to find all the occurrences for any directed graph $G'=(V',E')$ ($|V'|\leq k$) in $G=(V,E)$ ($|V|=n$) satisfying \Cref{ass:unique_iso}.
    Formally, the objective is to output the \emph{unique} $k'$-tuple of vertices $(v_{i_1}, \dots, v_{i_{k'}})$ for which $\gT(G,G')_{i_1,\dots,i_{k'}}=1$, where $\gT(G,G')$ is Subgraph Isomorphism Indicator Tensor defined in~\Cref{def:subgraph_iso_tensor_simple}.

\end{definition}

We create this task by mixing single-shape training data and dividing it into four groups (details in~\Cref{app_sec:multi-shape}), as shown in~\Cref{fig:multi_shape_bar}. Each substructure can be extracted independently, guided by its specific question prompt. Therefore, in visualization, we take the embedding of the last input query token rather than the last graph representation token. From~\Cref{fig:multi-shape}, we find that Transformers often identify simpler substructures, like triangles, in layer 3, instead of delaying all predictions to the final layer (the Transformer has 4 layers in this setting). We summarize the mechanism in~\Cref{thm:multishape_subgraph_transformer}.

\begin{theorem}[Expressiveness for Multi-Shape-Single-Num Extraction]
\label{thm:multishape_subgraph_transformer}
    Fix integers $n\ge k \ge 1$.
    There exists a log-precision Transformer with constant depth, constant heads, and $O(n^k)$ hidden dimension that can complete Multi-Shape-Single-Num Extraction defined in \Cref{def:multi-shape-extraction} for a directed graph $G=(V,E)$ ($|V|=n$) and any target subgraph $G'=(V',E')$ with $|V'|=k' \leq k$ satisfying \Cref{ass:unique_iso}.

\end{theorem}



These two properties form the foundation for understanding how decoder-only Transformers decompose complex graphs into simpler ones for substructure extraction, as discussed in~\Cref{sub_sec:tins}.

\subsection{Impact of Input Query Formulation}
\label{sub_sec:input_forms}
As illustrated in~\Cref{sec:preliminary}, the input query consists of two components: a text-based graph representation and a question prompt. In~\Cref{subsec:text_understand} and~\Cref{subsec:question_prompt}, we discuss how Transformers perform the substructure extraction task based on these inputs.


\subsubsection{Text-Based Graph Representations}
\label{subsec:text_understand}

We start with the comparison of different text-based graph representation methods. Specifically, we focus on the two basic methods, which are the neighborhood-based AL and the edge-based EL.
To ensure controllable input lengths, we conduct a toy experiment using graphs with 4 to 8 nodes, keeping both representations within a 100-token limit. We then vary the number of Transformer layers to extract substructures (e.g., triangle, square, or pentagon) from the graph representations. Experimental details are provided in the~\Cref{app_tab:al_vs_el} in~\Cref{app_sec:adj_vs_edge} 
and the results are shown in~\Cref{fig:toy_example}.

\begin{wrapfigure}{r}{0.3\textwidth} 
  \centering
  \includegraphics[width=0.3\textwidth]{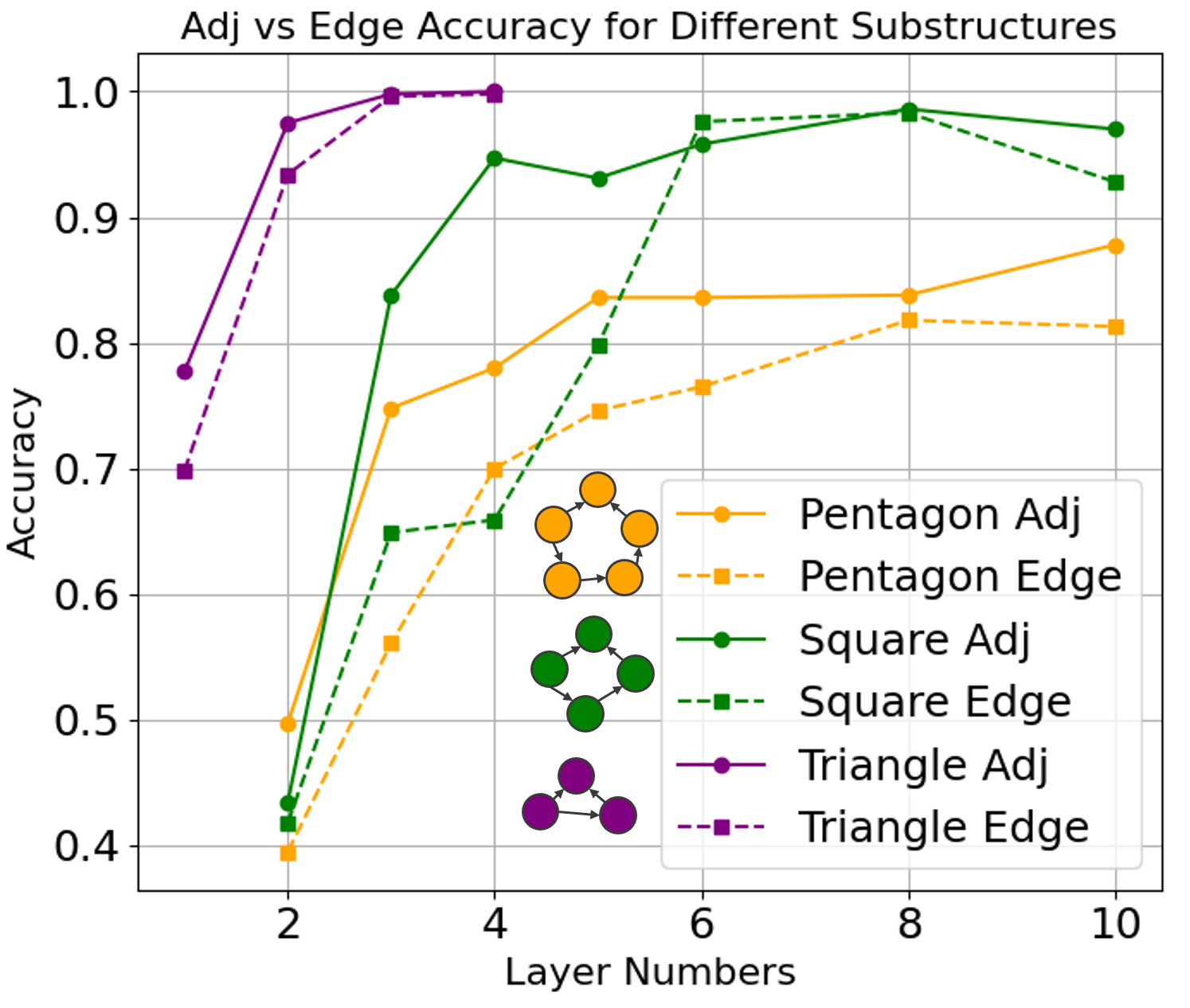}
  \caption{Using AL and EL to predict substructures with varying numbers of Transformer layers.}
  \label{fig:toy_example}
  \vspace{-0.3cm}
\end{wrapfigure}




The experimental results indicate that both AL and EL formats allow Transformers to extract substructures from text-based graph representations. As the size of the target substructure increases, both formats require more Transformer layers to achieve more than 80\% accuracy.
As mentioned in \Cref{rmk:graph-rep}, the theoretical results hold for both AL and EL formats.
The intuition 
is that both formats can be transformed into the same binary adjacency matrix $A(G)$, which encodes the structure of graph $G$. This matrix is then vectorized as $\mathsf{vec}(A(G))$ for processing within the model, where we provides the details in~\Cref{lem:adjacency_mtr_extraction} in~\Cref{app_sec:tech_lem}.

Although we theoretically show that EL and AL are equivalent in representational power, experimental results reveal that EL requires more Transformer layers to achieve comparable performance. This may be because EL inherently requires more tokens to explicitly represent all edges, whereas AL encodes the same information with fewer tokens and additional padding, benefiting from its more compact structure. In practice, for a fully connected graph, the AL needs $2\times |V|^2-|V|$ tokens, while the EL needs $3\times(|V|^2-|V|)$ tokens. Therefore, for efficiency, we mainly adopt AL in our discussions.~\Cref{app_sec:AL_EL_vs} provides the details of the training efficiency comparison on AL and EL.


\subsubsection{Question prompt encoders}
\label{subsec:question_prompt}



We explore how question prompts influence structural understanding in substructure extraction. The question prompt $T$ claims the target substructure, expressed as either terminology-based $\mathsf{Term}(T)$ (e.g., "triangle") or topology-based $\mathsf{Topo}(T)$. In $\mathsf{Topo}(T)$, we use AL-style descriptions with symbolic node labels, e.g., a triangle as $\mathsf{Topo}(T) = (A:B C, B:C)$.



We construct a mixed data set to train the Transformer, following the setup in Section~\ref{sec:syn_substructure}, but balance it with equal samples using topology-based and terminology-based prompts, denoted "Term", "Topo1" and "Topo2" in Table~\ref{tab:pertubs}. 
The "Term" uses semantic labels (e.g., "Triangle"), while "Topo1" gives direct node connections and "Topo2" describes the same structure with shuffled node names.
To examine how Transformers align different descriptions with graph inputs, we introduce two types of perturbations: symbol-level and token-level, where these tokens are used directly as question prompts. 
Symbol-level perturbations test the impact of structural phrasing, while token-level perturbations assess reliance on specific tokens within topology prompts.
Results are reported in Table~\ref{tab:pertubs}. In token-level perturbation, we use two different tokens to examine whether they have distinct impacts on Transformer performance. 

The results show that Transformers can use both terminology- and topology-based prompts, achieving over 70\% accuracy in each case. However, terminology-based prompts perform better, reaching over 85\% in Group 2. For example, Topology-based questions show limitations, struggling to represent diagonal substructures accurately. Perturbation results suggest that predictions often rely on specific symbolic cues or tokens rather than full structural understanding. For instance, in Group 1, triangles are identified via symbolic patterns, while in Group 2, diagonal and square structures are distinguished by tokens like “A” and “C.”. This suggests that Transformers do not explicitly learn full structural representations or map them back to the original graphs for answers. Instead, they abstract substructure concepts using a series of key tokens.



\begin{table}[]
\vspace{-0.5cm}
\label{Pertub}
\caption{Results on different question prompts. ``p" means the pad token.  }
\resizebox{\textwidth}{!}{\begin{tabular}{l|ll|ll|ll|ll|ll}
\toprule
Mixture training        & Term     & \multicolumn{1}{l|}{ACC} & Topo1          & \multicolumn{1}{l|}{ACC} & Topo2          & \multicolumn{1}{l|}{ACC} & Symbol-level  & \multicolumn{1}{l|}{ACC} & Token-level & ACC           \\\midrule
\multirow{2}{*}{Group1} & Triangle & 0.9782                  & A:BC,B:C      & 0.9794                  & B:AC,A:C      & 0.9166                  & : , :         & 0.9152                  & C/D         & 0.7074 / 0.1027 \\
                        & Square   & 0.8478                  & A:D,C:BA,D:B  & 0.8494                  & B:AD,A:C,C:D  & 0.8500                  & : , : , :     & 0.7444                  & C/D         & 0.7532 / 0.8470 \\
\multirow{2}{*}{Group2} & Diagonal & 0.9082                  & A:BCD,C:D,D:B & 0.2332                  & B:D,C:ABD,D:A & 0.7354                  & p:pp,p:p,p:p  & 0.7086                  & A/C         & 0.8566 / 0.2991 \\
                        & Square   & 0.8810                  & A:BC,C:D,D:B  & 0.8691                  & B:AD,A:C,C:D  & 0.9037                  & p:p,p:ppp,p:p & 0.7106                  & A/C         & 0.1271 / 0.9094
                        \\
                        \bottomrule
\end{tabular}}
\label{tab:pertubs}

\vspace{-0.5cm}
\end{table}
\vspace{-0.1cm}
\section{Consistency in LLM Graph Understanding Behavior} 
\vspace{-0.1cm}
\label{sec:LLMs}
As discussed in~\Cref{sec:pattern_extract}, we identify three key findings: In terms of the Transformer mechanism, (1) Transformers perform the ISF process to extract substructures simultaneously. Regarding input formulation, (2) both EL and AL can represent the adjacency matrix $A(G)$, though EL may perform slightly worse than AL due to sequence length limitations, and (3) Transformers tend to abstract substructures into a sequence of tokens in the question prompt, rather than fully capturing the underlying topological concepts. Since decoder-only Transformers are a common architecture in modern LLMs, we further evaluate whether LLMs exhibit the ISF process during substructure extraction, and whether our understanding of input formulation can explain their behavior.



\begin{wrapfigure}{r}{0.45\textwidth}
    \begin{minipage}[t]{0.27\textwidth}
        \centering
        \begin{subfigure}[t]{\textwidth}
            \includegraphics[width=\textwidth]{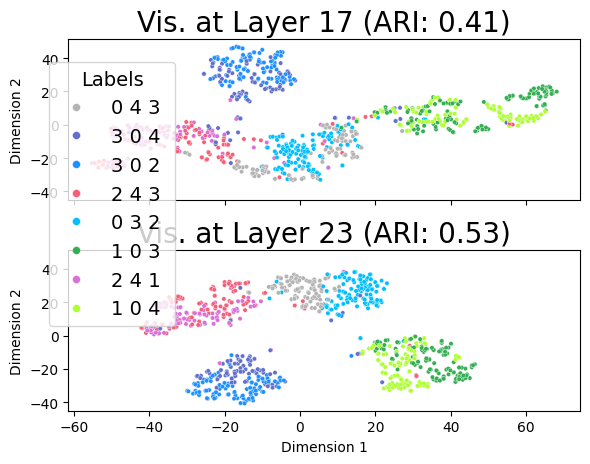}
        \end{subfigure}
        
        \caption{Visualization on Llama3.1-8B-Instruct}
        \label{fig:llm_inner}
    \end{minipage}
    \hspace{0.1cm} 
    \begin{minipage}[t]{0.16\textwidth}
        \centering
        \begin{subfigure}[t]{\textwidth}
            \includegraphics[width=\textwidth]{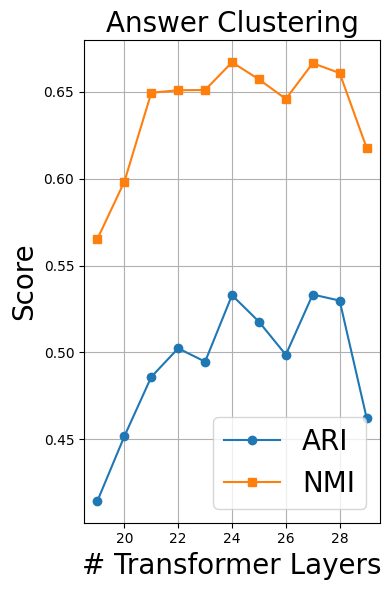}
        \end{subfigure}
        \caption{ARI and NMI across the layers.}
        \label{fig:llm_inner_trends}
    \end{minipage}

    \vspace{-0.5cm}
\end{wrapfigure}

\vspace{-0.1cm}
\paragraph{Induced Subgraph Filtration} To investigate whether LLMs exhibit the ISF process, we visualize the fine-tuned LLaMA 3.1-8B-Instruct model on a triangle detection task (details in~\Cref{app_sec:prompt}), as shown in~\Cref{fig:llm_inner}. The results align with our findings in~\Cref{subsec:ISF}: the model often identifies the correct answer before generating it, and deeper layers better distinguish between similar answers. For example, the responses of the substructures in node ID `302' and `304' become more separable at layer 23 than at layer 17. We quantify this trend in~\Cref{fig:llm_inner_trends}, where 
Adjusted Rand Index (ARI) and Normalized Mutual Information (NMI) scores increase with depth. 
However, unlike Transformers trained from scratch, which only predict answers, LLMs can generate explanatory content. For instance, 23\% of responses include Python code, while others describe node relationships in the graph. As a result, ARI and NMI scores slightly decrease in the ending layers.
\vspace{-0.1cm}
\paragraph{Text-based Graph Representation} Current evaluations have discussed the effect of how different graph representations influence the LLMs in graph reasoning tasks. For example, GraphMem~\cite{GraphMem} demonstrates that LLMs possess the ability to transfer knowledge across different graph descriptions. This is possible because both AL and EL can be mapped to $\mathsf{vec}(A(G))$, making them share the representations at the graph representation level. However, since the EL typically requires more tokens to describe a graph and LLMs have limitations in handling long contexts, prior studies~\cite{talkgraph,dai2024revisiting} report that EL representations sometimes perform worse than AL descriptions.


\vspace{-0.1cm}
\paragraph{Question Prompt} In the context of substructure understanding, GraphPatt~\cite{graphPatt} introduces both terminology-based and topology-based descriptions for the substructure extraction task, showing that terminology-based prompts generally lead to better performance. Moreover, a single terminology concept can correspond to multiple diverse topology-based descriptions reported in~\cite{graphPatt}. 




\section{Understanding on Complex Graphs}
\label{sec:Inference_conclusions}
In~\Cref{sec:pattern_extract}, we highlight the ISF process as a key mechanism enabling Transformers to solve the substructure extraction task. Moreover, we observe that when text-based descriptions can be mapped to the adjacency matrix $A(G)$, Transformers can perform substructure extraction on the given graph. Building on these insights, we explore broader applications: in~\Cref{sec:tis}, we introduce a new method for efficiently reasoning over composite substructures and in~\Cref{sub_sec:features}, we examine how Transformers adapt to attributed graphs.


\subsection{Thinking in substructures}
\label{sub_sec:tins}

As introduced in Section~\ref{sec:pattern_extract}, Transformers apply the ISF process to extract substructures. This process runs synchronously across different patterns, with smaller ones being easier to detect. Besides, complex structures often consist of simpler components, which we refer to as decomposing substructures. Building on these observations, we propose the Thinking-in-Substructure (Tins) method to explain how decoder-only Transformers solve complex substructure reasoning tasks. For example,~\cite{graphPatt} reports that reasoning language models decompose a house pattern into a triangle and a square to search for the substructures. We reformulate the answer generation part as $\mathsf{ANS_{Tins}}(\hat{G})=(\{P_1\},\{P_2\},\dots,\{P_t\}, \mathrm{<ANS>},\mathsf{ANS}(\hat{G}))$, where $\{P_i\}$ is the collection of each decomposing structure and $\mathrm{<ANS>}$ is a special token to indicate that the followings are final answers. 
This decomposition reduces the extraction complexity from 
$O(n^k)$ to $O(n^q)$, where $q$ and $k$ are the maximum size of decomposing substructures and target substructures, respectively, with $q < k$. We show the proof in~\Cref{thm:cot} in~\Cref{app_sec:cot_proof}.

\label{sec:tis}


To verify the efficiency of Tins, in the experiment, we design 4 different composite substructures with their decomposition process trained with 100K samples. The experiment settings are in \Cref{app_subsubsec:tins}. 
The results suggest that Tins can help transformers significantly improve the performance with limited training data. The overall performance can increase 10\%, and in the 3 layer for diagonal structure, the performance increases about 46\% percent. 

\noindent
\begin{minipage}[t]{0.56\textwidth}
  \centering

\captionof{table}{The results of Thinking-in-substructures (Tins)}
\setlength{\tabcolsep}{4pt}
\resizebox{\textwidth}{!}{\begin{tabular}{llllllll}
\toprule
\multicolumn{2}{l}{\multirow{2}{*}{Substructures}} & \multicolumn{2}{l}{Directly Preds}         & \multicolumn{2}{l}{\multirow{2}{*}{Decomposition}} & \multicolumn{2}{l}{Tins}  \\
\multicolumn{2}{l}{}                               & \multicolumn{1}{l}{4 layer}             & \multicolumn{1}{l}{3 layer} & \multicolumn{2}{l}{}                                & \multicolumn{1}{l}{4 layer}           & \multicolumn{1}{l}{3 layer} \\\midrule
Diagonal                     &     {\raisebox{0.cm}{\includegraphics[width=0.05\textwidth]{figs/substructures/diag.png}}}                & 0.6314                                  & 0.1998                      & \multicolumn{2}{l}{}                            {\raisebox{0.cm}{\includegraphics[width=0.23\textwidth]{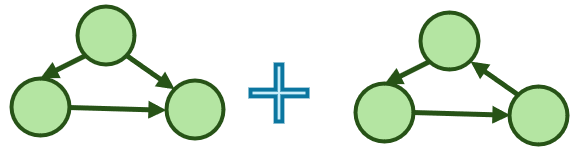}}}         & 0.8648                                & 0.6606                      \\
Diamond                      &     {\raisebox{0.cm}{\includegraphics[width=0.05\textwidth]{figs/substructures/diamond.png}}}                & 0.4756                                  & 0.1288                      & \multicolumn{2}{l}{}                               {\raisebox{0.cm}{\includegraphics[width=0.23\textwidth]{figs/cot/diag.png}}}      & 0.7792                                & 0.4338                     \\

House                        &    {\raisebox{0.cm}{\includegraphics[width=0.05\textwidth]{figs/substructures/house.png}}}                 & 0.5887                                 & 0.5640                       & \multicolumn{2}{l}{}                              {\raisebox{0.cm}{\includegraphics[width=0.23\textwidth]{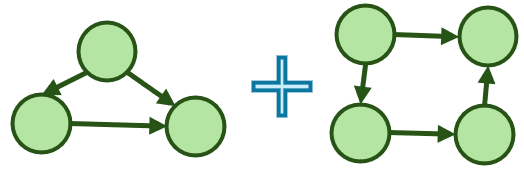}}}       & 0.8066                                & 0.6678                      \\

Complex                      &      {\raisebox{0.cm}{\includegraphics[width=0.05\textwidth]{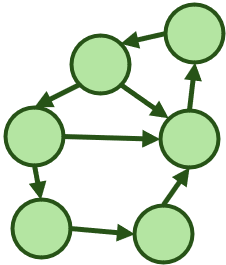}}}               & 0.1182                                  & 0.1208                      & \multicolumn{2}{l}{}                   {\raisebox{0.cm}{\includegraphics[width=0.23\textwidth]{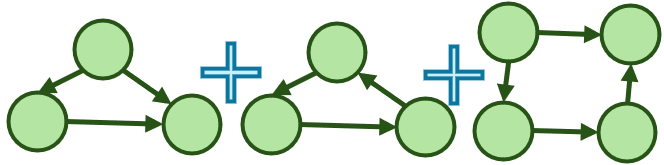}}}                  & 0.2268                                & 0.2124     \\    
\bottomrule
\end{tabular}}
  \label{tab:cot_structure}

\end{minipage}%
\hfill
\begin{minipage}[t]{0.33\textwidth}
  \centering
    \captionof{table}{molecular graphs}

\setlength{\tabcolsep}{4pt}
\resizebox{\textwidth}{!}{\begin{tabular}{llll}
\toprule
\multicolumn{2}{l}{Functional group} & \# Node                & ACC                       \\
\midrule
C-O(H)     &    {\raisebox{0.cm}{\includegraphics[width=0.15\textwidth]{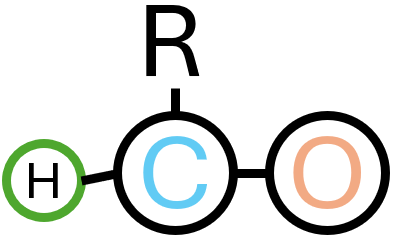}}}           & 9 & 0.9207 \\
COO(H)    &   {\raisebox{0.cm}{\includegraphics[width=0.15\textwidth]{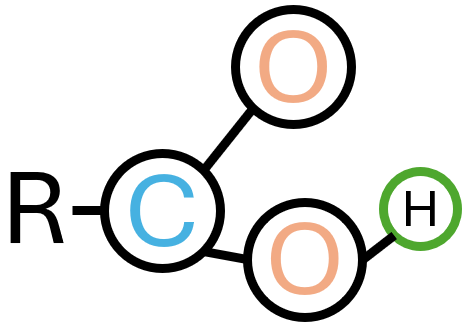}}}           &  121                      & 0.9159 \\
\ce{C6(H6)}        & {\raisebox{0.cm}{\includegraphics[width=0.15\textwidth]{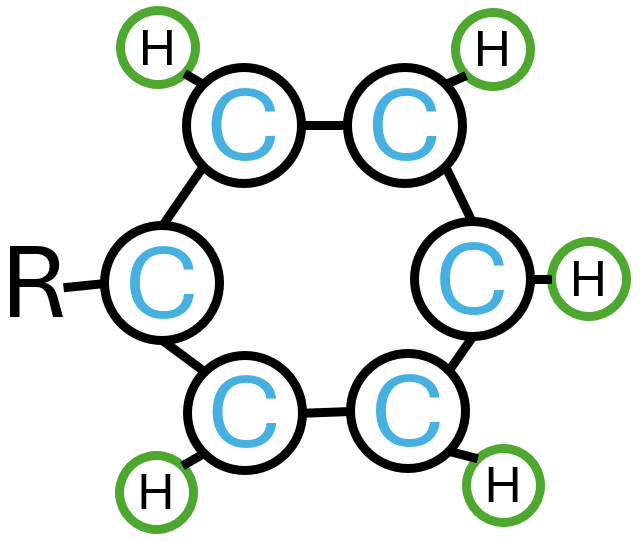}}}             &   121                     &  0.7245                         \\
Mix &  {\raisebox{0.cm}{\includegraphics[width=0.15\textwidth]{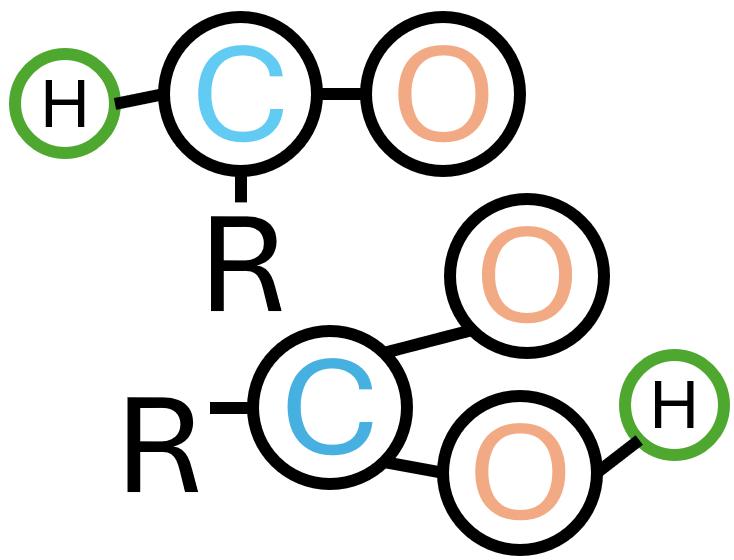}}}            &   121                     &        0.8946                  \\
\bottomrule
\end{tabular}}
  \label{tab:function_group}  
\end{minipage}





\subsection{Attributed graphs}
\label{sub_sec:features}

As shown in~\Cref{thm:unique_subgraph_transformer}, the only condition required for Transformers to extract substructures is $\mathcal{T}(G, G') = 1$. Therefore, if node features are uniquely assigned to ensure a distinct graph representation for each graph, Transformers can extract substructures while incorporating these features, as discussed in~\Cref{thm:feature_graphs}. Taking the AL-based feature description as an example, we define the attributed graph representation as $\mathsf{AL_f}(G) = (v_1 f_1; \text{``:''};  v_1^1 f_1^1;  \cdots ; v_1^{m_1} f_1^{m_1}; \text{``,''} ; \cdots ; \text{``,''};  v_n f_n; \text{``:''} ; v_n^1 f_n^1; \cdots ; v_n^{m_n} f_n^{m_n}). $ , where $f_i$ is the node features. 



We use molecular graphs as examples to test how well Transformers understand attributed graphs with the attributed AL list $\mathsf{AL_f}(G)$, where we set node feature $f_i$ as atoms. In the experiments, the transformers predict the positions of functional groups like Hydroxyl (C-O(H)), Carboxyl (COO(H)), and Benzene Ring \ce{C6(H6)}. We also use a mixed training setup where Hydroxyl and Carboxyl are combined as “Mix”. Molecules contain 1 to 4 target groups. Details are in~\Cref{app_subsubsec:mols} Results (see~\Cref{tab:function_group}) show that Transformers perform well, even with mixed training, aligning with our discussion in~\Cref{thm:multinum_subgraph_transformer} and~\Cref{thm:multishape_subgraph_transformer}.  

\section{Related work}
\paragraph{Evaluations for LLMs in graph understanding} 
Benchmarks reveal that LLMs can recover graph structure from text.  
NLGraph showed basic reachability and shortest-path competence \citep{wang2023can}; InstructGraph and GraphArena scaled tasks and graphs, with graph-aware verbalization and instruction-tuning boosting accuracy even on million-node inputs \citep{wang2024instructgraph,tang2024grapharena}. 
GPT-4 few-shot can rival GNNs on node classification but is sensitive to token order \citep{wang2025exploring}.  
Parallel work builds graph foundation models: GraphToken injects learned tokens, yielding substantial performance gains \citep{zhu2024parameter}; Graph2Token aligns molecules with text; and recent surveys chart cross-domain transfer \citep{wang2025towards,Wang_Yang_Shen_2025}. 
These two threads, task benchmarks and graph-biased LLMs, form the empirical backdrop for our ISF theory.

\paragraph{Understanding transformers for graphs} The mechanisms originate from graph learning methods, with detailed comparisons of graph neural networks, graph transformers, and decoder-only transformers in~\Cref{related_work}.
Theory now probes how vanilla Transformers perform graph reasoning.  
Log-depth models suffice, and are necessary, for connectivity and cycles \citep{sanford2024understanding}, while width can trade for depth \citep{yehudai2025depth}.  
ALPINE \citep{NEURIPS2024_d848cb2c} shows a GPT layer embeds adjacency and reachability, validating on planning tasks; two-layer decoders trained on shortest-path learn spectral line-graph embeddings instead of Dijkstra-style rules \citep{cohen2025spectral}.
Surveys relate attention power to Weisfeiler–Lehman bounds and over-squashing limits \citep{muller2023attending,shehzad2024graph}; scalable variants such as AnchorGT mitigate $O(n^{2})$ cost without losing accuracy \citep{zhu2024anchorGT}.  
Hierarchical distances or sparse global attention keep Transformers competitive on large or molecular graphs \citep{dwivedi2023graph,dwivedi2020generalization,ying2021transformers}.  
Collectively, these studies view attention heads as induced-neighbourhood selectors—exactly the mechanism ISF formalises via filtration depth.
\section{Conclusion}
This paper explores how decoder-only Transformers perform substructure reasoning over graphs represented as text. We propose ISF to model how substructures are progressively identified across layers. Our analysis shows that extraction accuracy depends on substructure size, model depth, and input format. We further validate ISF in LLMs, revealing consistent internal mechanisms. Extending this framework, we introduce the Tins method to handle composite and attributed graphs. These findings provide a unified view of how Transformers and LLMs reason over structured data.

\section*{Acknowledgement}
Xinnan Dai, Jay Revolinsky, Kai Guo, and Jiliang Tang are supported by the National Science Foundation (NSF) under grant numbers CNS2321416, IIS2212032, IIS2212144, IIS 2504089, DUE2234015, CNS2246050, DRL2405483 and IOS2035472, the Michigan Department of Agriculture and Rural Development, US Dept of Commerce, Gates Foundation, Amazon Faculty Award, Meta, NVIDIA, Microsoft and SNAP.

\bibliography{Reference}







\newpage
\appendix

\section{Boarder Impact}
In this work, we present new insights into how Transformers solve the substructure extraction task, offering a deeper understanding of their internal mechanisms. Our contributions span three key areas: (1) we introduce a novel concept of Induced Substructure Filtration (ISF), instructing LLMs in structure data understanding; (2) we propose a decomposition-based approach for tackling complex substructure reasoning by breaking down intricate patterns into simpler components, enabling more efficient extraction. This concept of thinking in substructures can generalize beyond graph tasks—for example, complementing step-by-step reasoning with pattern-by-pattern thinking; and (3) we provide both theoretical and empirical evidence supporting the development of graph foundation models, highlighting the potential of Transformers as backbones for structured learning tasks.
\section{Limitation}
In this work, we focus primarily on the fundamentals of the substructure extraction task. However, other substructure-related tasks remain to be explored in future research. Additionally, our current study provides a high-level overview of decoder-only Transformers, leaving out theoretical details. Future work can extend this foundation to develop a more comprehensive and rigorous understanding.
\section{Related work}
\label{related_work}
The mechanisms by which machine learning models learn graph-related problems have been widely studied, ranging from graph neural networks to transformers. However, due to their differing input–output formulations, the underlying mechanisms vary substantially, as summarized in~\Cref{tab:gnn_vs_llm}.



\begin{table}[t]
\centering
\caption{Comparison among GNNs, Graph Transformers, and Decoder-Only Transformers (LLMs).}
\renewcommand{\arraystretch}{1.25}
\resizebox{\textwidth}{!}{\begin{tabular}{p{3cm} p{5.2cm} p{5.2cm} p{5.2cm}}
\toprule
 & \textbf{GNNs} & \textbf{Graph Transformer} & \textbf{Decoder-Only Transformer} \\
\midrule
\textbf{Input} &
Discrete graph data (node features, adjacency matrix) &
Discrete graph data (node features, adjacency matrix) &
Textual description of a graph \\
\midrule
\textbf{Output} &
Scalar value (e.g., count) or fixed-size vector (e.g., for classification) &
Scalar value (e.g., count) or fixed-size vector (e.g., for classification) &
Text tokens forming a human-readable structural description in indefinite length \\
\midrule
\textbf{Learning Formulation} &
Encode graph via message passing and neighborhood aggregation; supervised on scalar outputs &
Encode graph via graph-aware self-attention; supervised on scalar outputs &
Next-token prediction over graph-structured text \\
\midrule
\textbf{Mechanism} &
1-WL~\cite{chen2020can} &
k-WL~\cite{muller2024towards} / SEG-WL~\cite{zhu2023structural} &
The proposed ISF (Ours) \\
\bottomrule
\end{tabular}}
\label{tab:gnn_vs_llm}
\end{table}
\newenvironment{thmbis}[1]
  {\renewcommand{\thetheorem}{\ref{#1}}%
   \addtocounter{theorem}{-1}%
   \begin{theorem}}
  {\end{theorem}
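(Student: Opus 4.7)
The proof will construct a fixed log-precision Transformer whose architecture does not depend on the particular target $G'$, but which reads the description of $G'$ from the query prompt and then runs the filtration pipeline of Theorem \ref{thm:filtration_transformer} in a target-agnostic way. The hidden budget $O(n^k)$ is what accommodates the largest possible indicator tensor $\gT(G,G')$ when $k'=k$; smaller tensors with $k'<k$ fit into the same block by zero-padding the trailing coordinates, since $n^{k'}\leq n^k$.

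I would split the construction into three constant-depth stages. First, two attention layers parse the input query into the residual stream, storing $A(G)$ in one block (valid for either the $\mathsf{AL}$ or $\mathsf{EL}$ representation, by \Cref{rmk:graph-rep}) and the prompt-supplied adjacency block $A(G')$ together with the integer $k'$ in a second block. Second, I invoke the filtration pipeline of Theorem \ref{thm:filtration_transformer} with $m\leq k'\leq k$ steps, but every hard-coded adjacency test in its construction is replaced by an inner-product check against the runtime copy of $A(G')$; this preserves constant depth and constant head count while writing $\mathsf{vec}(\gT(G,G'))$ into the hidden state of the final token. Third, \Cref{ass:unique_iso} guarantees a unique nonzero entry, which a single $\argmax$-style attention head can localize as $(i_1,\dots,i_{k'})$ inside the $O(n^k)$-dimensional tensor slot, and autoregressive generation then emits $v_{i_t}$ for $t=1,\dots,k'$ followed by a terminator at $t=k'+1$, gated by a comparison head that reads the stored value of $k'$.

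The main obstacle is the target-agnostic simulation of Theorem \ref{thm:filtration_transformer}, whose original proof uses attention patterns that depend on the structure of the hard-coded target, whereas here those patterns must be driven at inference time by the runtime block $A(G')$. I expect this to be handleable by replacing each compile-time edge indicator with a log-precision inner product against the stored block, a substitution that preserves head count and depth; log precision is enough to sharpen the resulting softmax scores to the hard-attention behavior that Theorem \ref{thm:filtration_transformer} relies on. A secondary subtlety is the variable output arity $k'$, which becomes straightforward once $k'$ is parsed into an $O(\log k)$-bit counter, because the uniqueness guaranteed by \Cref{ass:unique_iso} ensures the $\argmax$ step has no ties and therefore no $k'$-dependent tie-breaking circuitry is required.
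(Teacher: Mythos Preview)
Your approach differs substantively from the paper's, and the place you flag as ``the main obstacle'' is where the gap lies.

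The paper's proof of the Multi-Shape theorem never computes a variable-order tensor $\gT(G,G')\in\mathbb{R}^{n^{k'}}$. Instead it pads $G'$ with $k-k'$ isolated vertices to obtain $\hat G'$ on exactly $k$ nodes, and then runs the \emph{unmodified} fixed-$k$ pipeline of \Cref{thm:unique_subgraph_transformer} to build the $k$-dimensional tensor $\gT(G,\hat G')$. Because isolated nodes impose no edge constraints, this tensor now has many entries equal to $1$, not one; the paper borrows the cumulative-sum vector $\bm v^3$ from the Multi-Num proof and isolates the rank-$1$ entry via $\text{ReLU}(\bm v^3)+\text{ReLU}(\bm v^3-2)-2\,\text{ReLU}(\bm v^3-1)$. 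By \Cref{ass:unique_iso} all the $1$-entries agree on their first $k'$ coordinates, so outputting those suffices.

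Your plan instead tries to compute the genuine $k'$-order tensor and zero-pad it into the $n^k$ block. The gap is that the MLP of \Cref{lem:one-step-iso-cal} is not ``hard-coding adjacency tests'' on $G'$ --- it already reads $A(G')$ at runtime, so your proposed inner-product substitution is a no-op. What \emph{is} baked into that MLP is the tensor order: the summation range $x,y\le k_i$ and the distinctness penalty $\vone_{\exists\,1\le x<y\le k_i,\;j_x=j_y}$ are wired into the weights. Swapping in runtime lookups of $A(G')$ does nothing to make those ranges depend on a runtime $k'$. If you simply feed a zero-padded $A(G')$ into the fixed-$k$ circuit, you reproduce exactly $\gT(G,\hat G')$ with all its ties, and your single argmax head fails at the step you claimed ``has no ties.'' To make your route go through you would need additional $k'$-gated circuitry that disables the injectivity terms on indices beyond $k'$ and blanks the unused $n^k-n^{k'}$ slots; you have not supplied this, and the paper's padding-plus-rank-one trick is precisely the device that avoids needing it.
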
}

\section{More on Theoretical Analysis}

\subsection{Preliminaries}
\label{app_sec:preliminaries}

Let $G=(V, E)$ be a directed graph, where $V = \{v_1, \dots, v_n\}$.
For each vertex $v_i \in V$, denote $N(v_i) = \{v \in V \mid (v_i, v) \in E\} = \{v_i^1, \cdots, v_i^{m_i}\}$ as the set of its neighbors.
We formally define two sequence representations of the graph $G$ where each vertex identifier ($v_i, v_i^j$) and the special symbols (``:''; ``,''; ``|'') are treated as individual tokens.

The adjacency list sequence representation $\mathsf{AL}(G)$ is constructed by concatenating blocks of tokens for each vertex $v_i$, separated by special token ``,''. The block for vertex $v_i$ consists of the token $v_i$ and token ``:'', followed by the sequence of tokens representing its neighbors $v_{i}^1, \dots, v_{i}^{m_i}$. Formally, we have the following definition.

\begin{definition}[Adjacency List Graph Representation]
\label{def:adj_list_repre}
    For a directed graph $G = (V, E)$ with $V = \{v_1, \cdots, v_n\}$, denote $N(v_i) = \{v\in V \mid (v_i, v) \in E\} = \{v_i^1, \cdots, v_i^{m_i}\}$ as the set of its neighbors. The adjacency list graph representation of $G$ is defined as 
    \begin{align*}
        \mathsf{AL}(G) = (v_1; \text{``:''};  v_1^1;  \cdots ; v_1^{m_1}; \text{``,''} ; \cdots ; \text{``,''};  v_n; \text{``:''} ; v_n^1 ; \cdots ; v_n^{m_n}). 
    \end{align*}
\end{definition}

The edge list sequence representation $\mathsf{EL}(G)$ is constructed by sequentially listing token pairs $(v_i, v_i^j)$ representing edges, separated by the ``|'' token. We give the formal definition below.

\begin{definition}[Edge List Graph Representation]
\label{def:edge_list_repre}
    For a directed graph $G = (V, E)$ with $V = \{v_1, \cdots, v_n\}$, denote $N(v_i) = \{v\in V \mid (v_i, v) \in E\} = \{v_i^1, \cdots, v_i^{m_i}\}$ as the set of its neighbors. The edge list graph representation of $G$ is defined as 
    \begin{align*}
        \mathsf{EL}(G) = (v_1 ; v_1^1 ; \text{``|''} ; \cdots; \text{``|''} ; v_1;  v_1^{m_1} ;\text{``|''} ;\cdots ; \text{``|''} ; v_n ; v_n^1 ; \text{``|''} ; \cdots ; \text{``|''} ; v_n ; v_n^{m_n} ).
    \end{align*}
\end{definition}

We also define adjacency matrix for graph $G$ as $A(G)$, where 
\begin{align*}
    A(G)_{i,j} = \begin{cases}
        1, & (v_i, v_j) \in E \\
        0, & (v_i, v_j) \notin E
    \end{cases}.
\end{align*}

In the subsequent analysis, we need a vectorized representation of a matrix or tensor. We first define tensor vectorization as follows.

\begin{definition}[Tensor Vectorization]
\label{def:tensor_vec}
Let $\mathcal{A}$ be a $d$-dimensional tensor (or tensor of order $d$) with dimensions $(n_1, n_2, \dots, n_d)$, denoted as $\mathcal{A} \in \mathbb{R}^{n_1 \times n_2 \times \dots \times n_d}$. The elements of $\mathcal{A}$ are indexed by a tuple $(i_1, i_2, \dots, i_d)$, where $1 \le i_k \le n_k$ for $k \in \{1, 2, \dots, d\}$.

The vectorization of $\mathcal{A}$, denoted as $\mathsf{vec}(\mathcal{A})$, is a one-dimension vector $\mathbf{a} \in \mathbb{R}^N$, where $N = \prod_{k=1}^d n_k$ is the total number of elements in $\mathcal{A}$.

The elements of the vector $\mathbf{a} = (a_1, a_2, \dots, a_N)$ are obtained by arranging the elements $\mathcal{A}_{i_1, i_2, \dots, i_d}$ of the tensor $\mathcal{A}$ in row-major order. Specifically, the tensor element $\mathcal{A}_{i_1, i_2, \dots, i_d}$ maps to the vector element $a_j$, where the index $j$ ($1 \le j \le N$) is determined by the following formula:
\begin{align*} 
\label{eq:row_major_index_map}
j = 1 + \sum_{m=1}^{d} \left( (i_m - 1) \prod_{l=m+1}^{d} n_l \right).
\end{align*}
Here, the empty product convention is used, i.e., $\prod_{l=d+1}^{d} n_l \triangleq 1$.
\end{definition}
\begin{remark}
    This indexing scheme corresponds to ordering the elements such that the last index $i_d$ varies the fastest, followed by the second-to-last index $i_{d-1}$, and so on, with the first index $i_1$ varying the slowest. For example, in the case of a matrix ($d=2$), this corresponds to concatenating the rows of the matrix.
\end{remark}


\begin{definition}[Subgraph Isomorphism Indicator Tensor]
\label{def:subgraph_iso_tensor}
Let $G=(V, E)$ and $G'=(V', E')$ be two directed graphs, referred to as the target graph and the query graph, respectively. Let $n = |V|$ and $k = |V'|$, and denote $V = \{v_1, v_2, \dots, v_n\}, \; V' = (v'_1, v'_2, \dots, v'_k)$.

The subgraph isomorphism indicator tensor $\mathcal{T}(G, G')$ associated with $G, G'$, and the chosen vertex orderings is a $k$-dimensional tensor of size $n \times n \times \dots \times n$.
An element $\mathcal{T}(G,G')_{j_1, j_2, \dots, j_k}$ of the tensor $\mathcal{T}(G,G')$, indexed by a tuple $(j_1, j_2, \dots, j_k)$ where $1 \le j_l \le n$ for all $l \in \{1, \dots, k\}$, satisfies:
\begin{align*}
\mathcal{T}(G,G')_{j_1, j_2, \dots, j_k} 
\begin{cases}
=1, & \text{if the mapping } f: V' \to V \text{ defined by } f(v'_l) = v_{j_l} \text{ for } l=1, \dots, k \\
  & \text{satisfies both conditions:} \\
  & \quad \text{(i) Injectivity: } v_{j_1}, v_{j_2}, \dots, v_{j_k} \text{ are distinct vertices in } V \\
  & \quad \quad \text{(i.e., } j_l \neq j_m \text{ for all } 1 \le l < m \le k). \\
  & \quad \text{(ii) Edge Preservation: For every directed edge } (v'_p, v'_q) \in E', \\
  & \quad \quad \text{the directed edge } (f(v'_p), f(v'_q)) = (v_{j_p}, v_{j_q}) \text{ exists in } E. \\
\leq 0, & \text{otherwise.}
\end{cases}
\end{align*}
That is, $\mathcal{T}(G,G')_{j_1, \dots, j_k} = 1$ if and only if the sequence of target vertices $(v_{j_1}, \dots, v_{j_k})$ forms a subgraph in $G$ that is isomorphic to $G'$ under the mapping implied by the indices and the fixed vertex orderings.
\end{definition}

Throughout our theoretical analysis, we consider log-precision auto-regressive Transformer, instead of constant-precision Transformer. See \cite[Appendix B]{feng2024numericalprecisionaffectsmathematical} for more discussions.

\subsection{Technical Lemmas}
\label{app_sec:tech_lem}

\begin{lemma}[Adjacency Matrix Extraction]~
\label{lem:adjacency_mtr_extraction}
\begin{enumerate}[label=(\roman*)]
    \item For any integer $n$, there exists a two-layer log-precision Transformer with single attention head and hidden dimension $O(n^2)$, such that for any directed graph $G = (V, E)$ with $|V|=n$, the Transformer can output $\mathsf{vec}(A(G))$ for input sequence $\mathsf{AL}(G)$.
    \item For any integer $n$, there exists a two-layer log-precision Transformer with single attention head and hidden dimension $O(n^2)$, such that for any directed graph $G = (V, E)$ with $|V|=n$, the Transformer can output $\mathsf{vec}(A(G))$ for input sequence $\mathsf{EL}(G)$.
\end{enumerate}
\end{lemma}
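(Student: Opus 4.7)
The goal is to build a two-layer, single-head decoder-only Transformer whose hidden width is $O(n^2)$ and that, after processing $\mathsf{AL}(G)$ (resp.\ $\mathsf{EL}(G)$), produces the binary vector $\mathsf{vec}(A(G))\in\{0,1\}^{n^2}$ at its final token. The high-level idea is: (a) at each token position, identify an \emph{edge} $(v_i,v_j)$ that this position witnesses, if any, and encode it as a one-hot vector in $\mathbb{R}^{n^2}$ indexed by $(i-1)n+j$; (b) at the last position, aggregate these one-hot vectors over all edge-carrying tokens and threshold. Because there are at most $n^2$ distinct coordinates to fill, an $O(n^2)$-wide residual stream is enough to store both (i) a token-identity block of size $O(n)$ (one-hot over $\{v_1,\dots,v_n\}\cup\{\text{``:'',``,'',``\textbar''}\}$) and (ii) an edge-indicator block of size $n^2$.

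\textbf{Layer 1 (edge identification).} For the edge list $\mathsf{EL}(G)$ this step is trivial: the two tokens of each edge are adjacent, so causal attention at an even-indexed (``target'') token can attend to the immediately preceding (``source'') token using a standard ``previous token'' construction (a soft position-bias trick realizable with one head and then sharpened by an MLP). The MLP combines the two one-hot node identifiers into the one-hot indicator of $(i-1)n+j$ in the $n^2$-block. For the adjacency list $\mathsf{AL}(G)$, each neighbor token $v_i^j$ must recover its \emph{source} $v_i$, which is the last node token preceding the most recent ``:''. I would use the single attention head to implement a ``last-preceding-colon-source'' lookup: keys mark positions whose successor is ``:'' (such a position can be flagged by a tiny MLP that reads the next-token information routed through the positional encoding), queries at neighbor tokens ask for the nearest such flagged position to the left, and values are the one-hot identity of the flagged source node. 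This is a standard matching argument using position encodings and softmax sharpening to obtain hard ``nearest to the left'' retrieval at log precision. The MLP then writes the one-hot of $(i-1)n + j$ into the $n^2$-block; non-neighbor tokens (``:'', ``,'', ``\textbar'', node-position tokens that are not neighbors) write $\mathbf{0}$.

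\textbf{Layer 2 (aggregation and thresholding).} Only the final token matters. I would have the single head at the last position attend \emph{uniformly} over all tokens (equal keys, constant query), yielding, in the $n^2$-block of the output, the vector $\tfrac{1}{L}\sum_{p=1}^{L} \mathbf{e}_{(i_p,j_p)}$, where $L$ is the sequence length and $\mathbf{e}_{(i,j)}$ denotes the one-hot basis vector for coordinate $(i-1)n+j$ (with $\mathbf{e}_{\bot}:=\mathbf{0}$ for non-edge tokens). Every nonzero coordinate of this sum is at least $1/L$, and zero coordinates are exactly $0$. Since $L$ is bounded by a polynomial in $n$ and its value is representable at the final token (in $\mathsf{AL}$ this is the total number of ``neighbor'' positions, easily counted by the same uniform head writing an extra scalar; in $\mathsf{EL}$ it follows from sequence length, which log-precision position encodings expose), the final MLP computes a ReLU-based threshold $x\mapsto \mathbf{1}[x\ge 1/(2L)]$ coordinatewise to obtain $\mathsf{vec}(A(G))\in\{0,1\}^{n^2}$.

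\textbf{Main obstacle.} The nontrivial step is the layer-1 ``source recovery'' for $\mathsf{AL}(G)$: emulating ``nearest preceding ``:''-flagged position and copy its predecessor'' with a single softmax head requires the usual log-precision sharpening trick (large multiplicative gain on positional differences so that softmax concentrates on the correct position up to inverse-polynomial error), and care that this error does not spoil the binary thresholding in layer~2. All numerical quantities involved are integers bounded by $\mathrm{poly}(n)$, so log precision suffices; the edge-indicator block is stored sparsely across tokens, keeping the width at $O(n^2)$. The $\mathsf{EL}$ case follows from the same template with the first layer simplified to a ``look one token back'' operation, establishing both items~(i) and~(ii) uniformly.
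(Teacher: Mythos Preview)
Your overall strategy matches the paper's: layer~1 recovers the source node for each edge-carrying token, the following MLP writes a one-hot edge indicator in an $n^2$-block, and layer~2 performs uniform (mean) aggregation followed by a ReLU threshold; the $\mathsf{EL}$ case is essentially identical to the paper's ``copy from the previous token'' construction. The gap is in your $\mathsf{AL}$ source-recovery step. You propose to flag ``positions whose successor is `:'\,'' via ``a tiny MLP that reads the next-token information routed through the positional encoding.'' But in a decoder-only model the MLP is pointwise and sees only the current position's residual stream, and positional encodings carry no information about the \emph{token} at the next position. Hence at layer~0 a source occurrence of $v_i$ is indistinguishable from an occurrence of the same vertex as some other node's neighbor, so your keys cannot be flagged as you describe, and the ``nearest-flagged-to-the-left'' retrieval has nothing to match on. The construction therefore does not fit in two layers as written.

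The paper resolves this not by computation but by an embedding convention: it explicitly requires the token embedding to carry ``the type of the node ($v_i$ or $v_i^j$),'' i.e., the source-versus-neighbor role is part of the input encoding. With the type available at layer~0, the single layer-1 head can combine a large type-match score with a position bias to implement ``copy $n(i-1)$ from the nearest preceding source token'' (this is precisely the COPY primitive the paper invokes), after which the MLP produces the one-hot on coordinate $n(i-1)+j$ and layer~2 does the MEAN-then-threshold exactly as you outline. Without that convention, inferring the role causally (e.g., by checking whether the preceding token is ``,'') itself costs one attention step, which would push you to three layers. If you adopt the paper's embedding convention, the rest of your argument goes through unchanged.
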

\begin{proof}
    We need token embeddings to encode the index of the node ($i$ for vertex $v_i$), the type of the node ($v_i$ or $v_i^j$), and absolute positional embedding.
    
    The first attention layer finds all the edges in $G$. For adjacency list graph representation, we COPY the value of $n\times (i-1)$ (from the position $v_i$) to the positions $v_i^1, \cdots, v_i^{m_i}$. Applying \cite[Lemma C.7]{feng2023towards} and setting
    $\langle \vq_i, \vk_j\rangle = \|\bm x_i - \bm x_j\|_2^2$,
    where $\bm x_i$ is the type of the node in the embedding suffices.
    For edge list graph representation, it suffices to COPY from the previous token. Thus we can set $\langle \vq_i, \vk_j\rangle = (i-j-1)^2$. 

    The subsequent MLP calculates $\mathsf{vec}(A(G[\{v_i, v_j\}]))$ for each edge. Notice that the value on index $k$ can be formulated as 
    \begin{equation}
    \label{eqn:adj_extraction_idx}
    \begin{aligned}
        \vone_{k=n\times (i-1)+j} = \; & \text{ReLU}[k-n\times (i-1)-j+1] + \text{ReLU}[k-n\times (i-1)-j-1] \\
        & - 2 \text{ReLU}[k-n\times (i-1)-j].  
    \end{aligned}
    \end{equation}
    By \cite[Lemma C.2]{feng2023towards}, \Cref{eqn:adj_extraction_idx} can be calculated with constant hidden dimension. 

    The second attention layer aggregates all $\mathsf{vec}(A(G[\{v_i, v_j\}]))$ for each edge. We first calculate the MEAN of all valid $\mathsf{vec}(A(G[\{v_i, v_j\}]))$ from the last layer by \cite[Lemma C.8]{feng2023towards}. The result can be expressed as $\mathsf{vec}(A'(G))$ where 
    \begin{align*}
        A'(G)_{i,j}  \begin{cases}
            \geq \frac{1}{n^2}, & (v_i, v_j)\in E\\
            =0, & (v_i, v_j)\notin E
        \end{cases}.
    \end{align*}
    Thus we can get $A(G)_{i,j}$ by
    \begin{align*}
        A(G)_{i,j} = n^2 \cdot \text{ReLU}\left(\frac{1}{n^2} - A'(G)_{i,j}\right),
    \end{align*}
    which can be implemented in the subsequent MLP by \cite[Lemma C.2]{feng2023towards}.
\end{proof}

\begin{lemma}[One-Step Subgraph Isomorphism Indicator Tensor Calculation]
\label{lem:one-step-iso-cal}
    Fix integers $n, k, m \ge 1$. Let $V' = \{v_1', \dots, v_k'\}$ be a set of $k$ vertices, and let $\gF(V') = (V_1', V_2', \dots, V_m')$ be a $k$-node $m$-filtration on $V'$ (defined in \Cref{def:subgraph_filter}).
    There exists $m$ two-layer MLPs with GeLU activation $\bm f_1, \cdots, \bm f_m$ each with hidden dimension $O(n^2k^2)$, such that for any directed graphs $G=(V,E)$ with $V=\{v_1, \dots, v_n\}$ and $G'=(V',E')$, $\bm f_i$ can output $\mathsf{vec}(\gT(G, G'[V_i']))$ for input $\mathsf{vec}(A(G)) \oplus \mathsf{vec}(A(G')) \oplus \mathsf{vec}(\gT(G, G'[V_{i-1}']))$ ($\oplus$ denotes concatenation), where $V_0':=\varnothing$.
    

\end{lemma}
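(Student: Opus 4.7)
The plan is to express each entry of $\gT(G, G'[V_i'])$ as a conjunction of $O(k^2)$ atomic Boolean predicates on the input vector, and then to realise this conjunction through a two-layer GeLU MLP using thresholding gadgets in the spirit of \Cref{lem:adjacency_mtr_extraction}. Writing $k_i = |V_i'|$ and unfolding \Cref{def:subgraph_iso_tensor} at filtration levels $i-1$ and $i$ yields the local recurrence: $\gT(G, G'[V_i'])_{j_1,\ldots,j_{k_i}} = 1$ iff (a) the shorter entry $\gT(G, G'[V_{i-1}'])_{j_1,\ldots,j_{k_{i-1}}} = 1$, (b) every new index $j_p$ with $p > k_{i-1}$ is distinct from each $j_q$ with $q \neq p$, and (c) for every new edge $(v'_p, v'_q) \in E'[V_i'] \setminus E'[V_{i-1}']$ we have $A(G)_{j_p, j_q} = 1$, which can equivalently be written as $A(G')_{p,q}\cdot(1 - A(G)_{j_p, j_q}) = 0$. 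Verifying this equivalence is a one-line unfolding of the two definitions.

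Next, I would encode each atomic condition as a non-negative violation indicator (1 when the condition fails, 0 otherwise) so that the target entry equals $1$ precisely when the sum of violations is $0$. A two-layer GeLU MLP then computes this as follows: the first linear layer aggregates, for each output coordinate, the individual contributions into a pre-activation that represents $1$ minus the violation count, and a GeLU gate rescaled into an approximate ReLU thresholder (exactly as in the proof of \Cref{lem:adjacency_mtr_extraction}) outputs $1$ iff the count is $0$. Three gadget types are needed: a direct look-up into $\gT(G, G'[V_{i-1}'])$, an edge look-up into $A(G)$ multiplicatively weighted by $A(G')_{p,q}$ (both linear in the input), and a distinctness indicator $\vone_{j_p = j_q}$, which depends only on the output position and can be realised once, via the standard identity $\vone_{j_p = j_q} = \mathrm{ReLU}(1 - \mathrm{ReLU}(j_p - j_q) - \mathrm{ReLU}(j_q - j_p))$, and baked into the first-layer bias.

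Finally, a careful accounting should show that $O(n^2 k^2)$ hidden units suffice: the $O(n^2)$ adjacency look-ups into $A(G)$ can be shared across all output coordinates, the $O(k^2)$ edge coefficients drawn from $A(G')$ multiply them, and the distinctness and previous-tensor contributions reuse the same hidden budget; the second linear layer then becomes a routing matrix that copies each GeLU activation to its target output coordinate. The main obstacle is exactly this sharing: a naive construction that assigns one dedicated hidden unit per output entry would require $\Theta(n^{k_i})$ width, blowing the stated bound whenever $k_i > 2$, so the construction has to exploit the product structure of the output index $(j_1,\ldots,j_{k_i})$ by reusing a common pool of adjacency and structural queries through a structured first-layer weight pattern, much as \Cref{lem:adjacency_mtr_extraction} reuses the per-edge tensors $\mathsf{vec}(A(G[\{v_i,v_j\}]))$ across positions. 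A secondary subtlety is that entries of $\gT(G, G'[V_{i-1}'])$ are only guaranteed to be $\le 0$ rather than exactly $0$ in the negative case, so they must be passed through a half-threshold GeLU before being summed into the violation count; log precision makes all thresholds exact on the relevant discrete range.
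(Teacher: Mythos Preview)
Your recurrence in the first paragraph is correct and is essentially the identity the paper writes down. The gap is in how you realise it inside a two-layer MLP at width $O(n^2k^2)$.

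The second paragraph commits to a per-output-coordinate threshold: for each tuple $(j_1,\ldots,j_{k_i})$ one hidden unit forms $1-(\text{violation count})$ and a GeLU clips it. That is already $n^{k_i}$ hidden units, which you correctly flag as the obstacle. Your proposed fix in the third paragraph---share an $O(n^2k^2)$ pool of adjacency/product units and linearly route them to the outputs---cannot coexist with that threshold: in a two-layer MLP the only nonlinearity sits between the two linear maps, so if the hidden layer is spent on the shared products, the output layer is purely linear and cannot apply any per-coordinate gate afterwards. There is also a related slip: the term $A(G')_{p,q}\,(1-A(G)_{j_p,j_q})$ is not ``linear in the input'', since $A(G')$ is itself part of the input; this bilinear term already forces the single nonlinearity to be used for a product, not a threshold.

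The paper sidesteps the whole issue by dropping the threshold. \Cref{def:subgraph_iso_tensor} only requires non-matching entries of $\gT$ to be $\le 0$, not exactly $0$, so the paper simply outputs the raw arithmetic expression
\[
\gT(G,G'[V_{i-1}'])_{j_1,\ldots,j_{k_{i-1}}} \;+\! \sum_{\substack{x,y\le k_i\\ x>k_{i-1}\,\vee\, y>k_{i-1}}}\!\bigl[A(G')_{x,y}A(G)_{j_x,j_y}-A(G')_{x,y}\bigr] \;-\; \vone_{\exists\, x<y:\; j_x=j_y},
\]
which equals $1$ when all conditions hold and is $\le 0$ otherwise---no clipping required. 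The only genuinely nonlinear ingredients are the products $A(G')_{x,y}\,A(G)_{x',y'}$ over all $x,y\le k$ and $x',y'\le n$; there are $O(n^2k^2)$ of them, each realisable by a constant-width gadget, and the second linear layer just selects and sums the relevant ones for each output coordinate together with the linear look-ups and the distinctness bias. Your ``secondary subtlety'' about arbitrarily negative entries of $\gT(G,G'[V_{i-1}'])$ likewise disappears: they are added in directly and can only push the result further below zero.
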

\begin{proof}
    Denote $k_i = |V_i'|$. Without loss of generosity, we assume $V_i'=\{v_1, \cdots, v_{k_i}'\}$. 
    Notice that $\gT(G, G_i')_{j_1, \cdots, j_{k_i}}$ can be calculated as
\begin{align*}
    \gT(G, G_{i-1}')_{j_1, \cdots, j_{k_{i-1}}} & + \sum_{x,y\leq k_i;\; x>k_{i-1} \vee y>k_{i-1}} \left[ A(G')_{x,y} A(G)_{j_x, j_y} - A(G')_{x,y} \right] \\
   &  - \vone_{\exists 1\leq x<y\leq k_i, \; j_x=j_y}.
\end{align*}
Thus it suffices to calculate the value of $A(G')_{x,y}A(G)_{x',y'}$ for $1\leq x,y\leq k, \; 1\leq x',y'\leq n$. By \cite[Lemma C.1]{feng2023towards}, this can be implemented with $O(n^2k^2)$ hidden dimension.
\end{proof}

\subsection{Proofs for \Cref{sec:pattern_extract}}
\label{app:proof-sec:extraction}


\begin{thmbis}{thm:filtration_transformer}[Expressiveness for Progressive Identification]
    Given a $k$-node $m$-filtration $\gF(V')$ on $V'=\{v_1', \dots, v_k'\}$. For any directed graphs $G=(V,E)$ ($|V|=n$) and $G'=(V',E')$, a log-precision Transformer with $m+2$ layers, constant heads, and $O(n^k)$ hidden dimension can output $\mathsf{vec}(\gT(G, G'[V_i']))$ at layer $i+2$ for $i\in\{1,\dots,m\}$.
\end{thmbis}
\begin{theorem}[Formal Statement of \Cref{thm:filtration_transformer}]
\label{thm-formal:filtration_transformer}
    Fix integers $n\geq k\geq 1, m \ge 1$. Let $V' = \{v_1', \dots, v_k'\}$ be a set of $k$ vertices, and let $\gF(V') = (V_1', V_2', \dots, V_m')$ be a $k$-node $m$-filtration on $V'$.
    There exists a log-precision Transformer with $m+2$ layers, constant number of attention heads, and $O(n^k)$ hidden dimension, such that for any directed graphs $G=(V,E)$ with $V=\{v_1, \dots, v_n\}$ and $G'=(V',E')$, the Transformer processing $G,G'$ can output $\mathsf{vec}(\gT(G, G'[V_i']))$ at layer $i+2$ for $i\in\{1,\cdots, m\}$. Here, $\mathsf{vec}(\cdot)$ is the vectorization for a tensor (formal defined in \Cref{def:tensor_vec}).
\end{theorem}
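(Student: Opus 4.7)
The plan is to stack the two building blocks already proven in \Cref{lem:adjacency_mtr_extraction} and \Cref{lem:one-step-iso-cal}. First I would use the initial two layers to simultaneously extract $\mathsf{vec}(A(G))$ from the graph-representation portion of the input and $\mathsf{vec}(A(G'))$ from the query-prompt portion (which also uses an AL- or EL-style encoding over symbolic labels), by applying the construction of \Cref{lem:adjacency_mtr_extraction} in parallel on two disjoint slots of the hidden state. After layer 2, the last-position residual stream thus holds $\mathsf{vec}(A(G)) \oplus \mathsf{vec}(A(G'))$ in dedicated coordinates.

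For each $i \in \{1, \ldots, m\}$, at layer $i+2$ I would set the attention sub-layer to the identity (e.g. via zero query/key matrices so that every position attends only to itself) and use the feed-forward sub-layer to realize the MLP $\bm f_i$ from \Cref{lem:one-step-iso-cal}. This MLP reads $\mathsf{vec}(A(G)) \oplus \mathsf{vec}(A(G')) \oplus \mathsf{vec}(\gT(G, G'[V_{i-1}']))$ off the residual stream (with the third slot taken to be zero at layer $3$, consistent with the lemma's convention $V_0':=\varnothing$) and writes $\mathsf{vec}(\gT(G, G'[V_i']))$ into its designated slot. Residual connections carry the adjacency matrices and the newest indicator tensor forward, so that every subsequent layer still has all the ingredients it needs.

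The hidden-dimension budget is $O(n^k)$, dominated by the final tensor $\mathsf{vec}(\gT(G, G'[V_m']))$ which has $n^k$ entries; the per-step MLP cost $O(n^2 k^2)$ from \Cref{lem:one-step-iso-cal} is absorbed for $k\geq 2$. A constant number of attention heads suffices, since each invocation of \Cref{lem:adjacency_mtr_extraction} uses a single head and all subsequent attention sub-layers are the identity. Log precision suffices because every intermediate quantity is either a $\{0,1\}$-valued entry of an adjacency/indicator tensor or an integer count bounded by $n^k$, each representable in $O(\log n)$ bits.

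The main obstacle I anticipate is the bookkeeping of the multi-slot residual stream: the $O(n^k)$-dimensional hidden state must be partitioned into disjoint regions reserved for $\mathsf{vec}(A(G))$, $\mathsf{vec}(A(G'))$, the current indicator tensor, positional and token-type embeddings, and scratch space, and each sub-layer must write only into its designated region while leaving the others untouched. This is standard but tedious; once the allocation is fixed, correctness at each layer follows immediately by invoking \Cref{lem:adjacency_mtr_extraction} (for layers $1$ and $2$) or \Cref{lem:one-step-iso-cal} (for layer $i+2$ with $i\geq 1$).
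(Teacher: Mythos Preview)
Your plan is correct and follows the same two-phase structure as the paper: layers $1$--$2$ invoke \Cref{lem:adjacency_mtr_extraction}, and layers $3$ through $m+2$ invoke \Cref{lem:one-step-iso-cal}. The one place you diverge is in the attention sublayers of layers $3$ through $m+2$: you set them to the identity, whereas the paper uses them actively to COPY $\mathsf{vec}(A(G))$ and $\mathsf{vec}(A(G'))$ from the last token of each graph's encoding to the working position. In the paper's construction $A(G)$ lands at the final token of $G$'s block and $A(G')$ at the final token of $G'$'s block---two different positions---so the per-layer COPY is what consolidates them; your variant instead assumes both already sit at the last position after layer $2$, which is achievable but requires adapting (rather than black-box invoking) the extraction lemma to run on two token-type-tagged segments writing into disjoint slots. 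Either bookkeeping works.

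One small slip: zeroing the query and key matrices yields \emph{uniform} attention (softmax of all-zero scores), not self-attention. The clean way to make the attention sublayer a no-op is to zero the value or output projection so it contributes $0$ and the residual connection carries the state through unchanged.
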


\begin{proof}
    The first two layers of Transformer calculates $\mathsf{vec}(A(G))$ and $\mathsf{vec}(A(G'))$ for graph $G,G'$. The desired output is at the last token for each graph, respectively. By \Cref{lem:adjacency_mtr_extraction}, this can be implemented with $O(n^2)$ hidden dimension, regardless of the representation of $G,G'$.

    For the next $m$ layers, we first apply \cite[Lemma C.7]{feng2023towards} to COPY $\mathsf{vec}(A(G)),  \mathsf{vec}(A(G'))$ in the attention layer. This can be implemented by adding special marks on the last token for each graph. In the subsequent MLP layer, we apply \Cref{lem:one-step-iso-cal} to calculate desired results with $O(n^2k^2)$ hidden dimension. 
\end{proof}

\begin{thmbis}{thm:unique_subgraph_transformer}[Expressiveness for Pattern Extraction]
    Under \Cref{ass:unique_iso}, for directed graphs $G=(V,E)$ ($|V|=n$) and $G'=(V',E')$ ($|V'|=k$), a log-precision Transformer with constant depth, constant heads, and $O(n^k)$ hidden dimension can output the unique $k$-tuple of vertices $(v_{i_1}, \dots, v_{i_k})$ for which $\gT(G,G')_{i_1, \dots, i_k} = 1$.
\end{thmbis}
\begin{theorem}[Formal Statement of \Cref{thm:unique_subgraph_transformer}]
    Fix integers $n\geq k \ge 1$, and let $V' = \{v_1', \dots, v_k'\}$.
    There exists a log-precision Transformer with constant depth, constant number of attention heads, and $O(n^k)$ hidden dimension, such that for any directed graphs $G=(V,E)$ with $V=\{v_1, \dots, v_n\}$ and $G'=(V',E')$ satisfying \Cref{ass:unique_iso}, the Transformer processing $G,G'$ can output the unique tuple $(v_{i_1}, \dots, v_{i_k})$ for which $\gT(G,G')_{i_1, \cdots, i_k} = 1$. 
\end{theorem}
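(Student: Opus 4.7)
The plan is to reduce the statement to \Cref{thm:filtration_transformer} applied with the trivial $1$-step filtration $\gF(V') = (V')$. That theorem then gives a log-precision Transformer of $1+2=3$ layers, constant heads, and $O(n^k)$ hidden dimension that produces $\mathsf{vec}(\gT(G,G'))$ at layer $3$. Under \Cref{ass:unique_iso}, this length-$n^k$ vector contains exactly one entry equal to $1$ and all others $\leq 0$, so it can be clipped to a one-hot indicator $\ve_{j^{\star}}\in\mathbb{R}^{n^k}$ by an entrywise map such as $\mathrm{ReLU}(x)-\mathrm{ReLU}(x-1)$, which is realizable by a constant-width GeLU/ReLU MLP (e.g.\ by Feng et al.\ Lemma C.2, used repeatedly in \Cref{lem:adjacency_mtr_extraction} and \Cref{lem:one-step-iso-cal}). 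This step adds only constant depth and does not increase the asymptotic hidden dimension beyond $O(n^k)$.

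Next I would decode the flat index $j^{\star}$ into the tuple $(i_1,\dots,i_k)$ using the row-major convention of \Cref{def:tensor_vec}. For each coordinate $t\in\{1,\dots,k\}$, the set $J_t(i) = \{j\in\{1,\dots,n^k\} : \text{the }t\text{-th index of }j \text{ equals } i\}$ has size $n^{k-1}$ and depends only on $n,k,t$, not on the input. Define fixed linear maps $W_t\in\mathbb{R}^{n\times n^k}$ whose $(i,j)$-entry is $1$ iff $j\in J_t(i)$. Then $W_t\ve_{j^{\star}}$ is a length-$n$ one-hot vector pointing at $i_t$, and the stacked map $(W_1,\dots,W_k):\mathbb{R}^{n^k}\to\mathbb{R}^{kn}$ can be absorbed into a single constant-depth MLP layer on top of the indicator. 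At this point the entire answer tuple is already materialized in the hidden state of the last input token in the form of $k$ length-$n$ one-hot slots.

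For the autoregressive emission of the $k$ output tokens, I would propagate these decoded slots to each generation position via an attention COPY (the standard construction already used in \Cref{lem:adjacency_mtr_extraction} and in the proof of \Cref{thm-formal:filtration_transformer}, e.g.\ Feng et al.\ Lemma C.7), marking the source with a special token. Then at the $t$-th generation step a single attention head queries the relative position $t$ and reads out the corresponding one-hot slot, whose argmax is emitted as the next token $v_{i_t}$. Because all $k$ slots are precomputed in the input pass, the per-step depth is constant and independent of $k$, so the overall Transformer has constant depth, constant heads, and $O(n^k)$ hidden dimension.

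The main obstacle is keeping the depth constant while still emitting $k$ tokens autoregressively: a naive construction that re-decodes at each step would make depth scale with $k$. The resolution is to do all the decoding work once, on the input side, and reduce the per-step generation work to a positional lookup. A minor subtlety is to verify that $W_t$ and the COPY mechanism can address the correct block of the $O(n^k)$-dimensional hidden state with log-precision; this follows from the same index-arithmetic lemmas (e.g.\ Feng et al.\ Lemmas C.1, C.2, C.7, C.8) that are used throughout \Cref{lem:adjacency_mtr_extraction} and \Cref{lem:one-step-iso-cal}, so no new technical machinery is needed beyond what \Cref{thm:filtration_transformer} already supplies.
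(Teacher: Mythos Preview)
Your proposal is correct and follows essentially the same route as the paper: apply \Cref{thm:filtration_transformer} with the trivial $1$-step filtration to obtain $\mathsf{vec}(\gT(G,G'))$, clip it to a one-hot vector via ReLU (the paper just applies $\mathrm{ReLU}$, since the non-one entries are $\le 0$), decode the index tuple by a fixed linear map, and emit the $k$ tokens using positional encodings to select the correct coordinate at each step. The only cosmetic difference is that the paper decodes directly to the scalar indices $(i_1,\dots,i_k)$ via $i_x = \sum_{i_1,\dots,i_k} i_x\cdot \mathrm{ReLU}(\mathsf{vec}(\gT(G,G')))$ rather than to $k$ length-$n$ one-hot slots, but both are single linear maps on the one-hot indicator and lead to the same constant-depth, $O(n^k)$-width construction.
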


\begin{proof}
    We first take $m=1$ in \Cref{thm:filtration_transformer}, indicating that a constant depth Transformer can output $\mathsf{vec}(\gT(G, G'))$ for any directed graphs $G,G'$.
    By \Cref{ass:unique_iso}, $\text{ReLU}(\mathsf{vec}(\gT(G, G')))$ is a one-hot vector and can be obtained with a two-layer MLP via \cite[Lemma C.2]{feng2023towards}. 

    Notice that
    \begin{align*}
        i_x = \sum_{1\leq i_1,\cdots, i_k\leq n} i_x \cdot \text{ReLU}(\mathsf{vec}(\gT(G, G'))),
    \end{align*}
    thus by linear transformation we can obtain $(i_1,\cdots, i_k)$ for which $\gT(G,G')_{i_1, \cdots, i_k} = 1$, from the corresponding one-hot vector $\text{ReLU}(\mathsf{vec}(\gT(G, G')))$. 

    The final step is to output $(v_{i_1}, \dots, v_{i_k})$ sequentially. This can be obtained by adding one-hot positional encoding in all the output tokens to determine the current output position. Therefore, the next token can be obtained by calculating the inner product between $(i_1, \cdots, i_k)$ and the positional encoding. 
    Since all the steps can be finished in constant layers, we finished our proof.
\end{proof}

\begin{thmbis}{thm:multinum_subgraph_transformer}[Expressiveness for Single-Shape-Multi-Num Extraction]
    Fix integers $n\ge k \ge 1$.
    There exists a log-precision Transformer with constant depth, constant number of attention heads, and $O(n^k)$ hidden dimension that can complete Single-Shape-Multi-Num Extraction defined in~\Cref{def:multi-num-extraction} for directed graphs $G=(V,E)$ ($|V|=n$) and $G'=(V',E')$ ($|V'|=k$).
\end{thmbis}
\begin{theorem}[Formal Statement of \Cref{thm:multinum_subgraph_transformer}]
    Fix integers $n\ge k \ge 1$, and let $V' = \{v_1', \dots, v_k'\}$.
    There exists a log-precision Transformer with constant depth, constant number of attention heads, and $O(n^k)$ hidden dimension, such that for any directed graphs $G=(V,E)$ with $V=\{v_1, \dots, v_n\}$ and $G'=(V',E')$, the Transformer processing $G,G'$ can output all the tuples $(v_{i_1}, \dots, v_{i_k})$ for which $\gT(G,G')_{i_1,\cdots, i_k} = 1$.
\end{theorem}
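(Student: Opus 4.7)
The plan is to generalize the construction used for \Cref{thm:unique_subgraph_transformer} by inserting an enumeration mechanism that sequentially emits every $k$-tuple $(v_{i_1}, \dots, v_{i_k})$ with $\gT(G,G')_{i_1,\dots,i_k}=1$ in lexicographic order, separated by a delimiter token. The unique-instance proof extracts a single tuple from a one-hot indicator; here the indicator $\mathsf{vec}(\gT(G,G'))$ may contain many $1$'s, so the new ingredient is an auto-regressive rank-selection procedure that picks out the $j$-th match at the $j$-th block of output positions.

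First I would invoke \Cref{thm:filtration_transformer} with $m=1$ to obtain $\mathsf{vec}(\gT(G,G'))$ at some intermediate layer with constant depth and $O(n^k)$ hidden dimension. Applying \cite[Lemma C.2]{feng2023towards} yields the binary indicator $\bm b = \text{ReLU}(\mathsf{vec}(\gT(G,G'))) \in \{0,1\}^{n^k}$. A single linear map then produces the prefix-sum vector $\bm s$ with $s_p = \sum_{q\le p} b_q$; the total count $N = s_{n^k}$ of matching tuples is available at every subsequent token.

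During the generation phase, each output token is assigned a tuple index $j\in\{1,\dots,N\}$ and an intra-tuple coordinate $r\in\{1,\dots,k\}$, both recoverable by counting delimiter tokens and measuring offset from the most recent delimiter via the attention gadgets of \cite[Lemmas C.7 and C.8]{feng2023towards}. Given $(j,r)$, the Transformer forms the one-hot vector with entry $b_p \cdot \vone[s_p = j]$ at position $p$, which isolates the position $p^\ast$ of the $j$-th $1$ in $\bm b$; the equality test $\vone[s_p=j]$ is realized in constant depth with $O(n^k)$ hidden dimension via \cite[Lemma C.2]{feng2023towards}. From $p^\ast$, the same linear decoding used in the proof of \Cref{thm:unique_subgraph_transformer} recovers the tuple $(i_1^{(j)}, \dots, i_k^{(j)})$, the $r$-th coordinate is selected, and the corresponding vertex $v_{i_r^{(j)}}$ is emitted by matching against positional encodings. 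When $r=k$ and $j<N$, a delimiter token is emitted; when $r=k$ and $j=N$, an end-of-sequence token terminates generation.

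The main obstacle is implementing the rank-selection step, namely locating $p^\ast$ with $s_{p^\ast}=j$ and $b_{p^\ast}=1$, at every decoding step within constant depth. Because $\bm b$ and $\bm s$ are both stored as $n^k$-dimensional vectors at a single token, the step reduces to element-wise equality checks followed by a linear contraction, which fits the hidden-dimension budget; log-precision arithmetic suffices because $j \le N \le n^k$ so $s_p$ and $j$ take at most $O(k\log n)$ bits. Correctness follows because the lex order on positions inside $\mathsf{vec}(\gT(G,G'))$ coincides with the lex order on index tuples $(i_1,\dots,i_k)$ by \Cref{def:tensor_vec}, so each qualifying tuple is emitted exactly once, in the prescribed order.
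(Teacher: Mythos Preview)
Your proposal is correct and essentially identical to the paper's proof: both compute the binary indicator $\bm b=\text{ReLU}(\mathsf{vec}(\gT(G,G')))$, take its prefix sums (the paper's $\bm v^2$, your $\bm s$), and then perform rank selection to isolate the $j$-th match as a one-hot vector before reusing the decoding step from \Cref{thm:unique_subgraph_transformer}. The only cosmetic difference is that the paper first forms $\bm v^3_p = b_p\, s_p$ and extracts $\vone[\bm v^3_p=j]$ via the three-ReLU indicator, whereas you compute $b_p\cdot\vone[s_p=j]$ directly; these are equivalent for $j\ge 1$.
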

\begin{proof}
    Follow the proof of \Cref{thm:unique_subgraph_transformer}, we first calculate $\bm v^1 = \text{ReLU}(\mathsf{vec}(\gT(G, G')))$ which marks all feasible tuples with $1$, and $0$ otherwise. 

    Next, we calculate $\bm v^2$ where $\bm v^2_i = \bm v^1_1 + \cdots + \bm v^1_i$ by linear projection, and define $\bm v^3_i = \bm v^1_i \bm v_i^2$. In $\bm v^3_i$, the feasible tuples are marked as $1,2,\cdots$, and $0$ otherwise. By \cite[Lemma C.1]{feng2023towards}, $\bm v^2, \bm v^3$ can be obtained with a two-layer MLP.

    The final step is to determine which position in which tuple the next-token corresponds to. This can be obtained by adding special positional encoding in the outputs. When the Transformer need to output the $x$-th tuple, it can first obtain the corresponding one-hot vector by the following formula:
    \begin{align*}
        \text{ReLU}(\bm v^3 - x - 1) + \text{ReLU}(\bm v^3 - x + 1) - 2\cdot \text{ReLU}(\bm v^3 - x),
    \end{align*}
    then following with the proofs in \Cref{thm:unique_subgraph_transformer} to output the current position. By \cite[Lemma C.2]{feng2023towards}, the above steps can be obtained with constant-layer MLPs, which concludes our proof.
\end{proof}

\begin{thmbis}{thm:multishape_subgraph_transformer}[Expressiveness for Multi-Shape-Single-Num Extraction]
    Fix integers $n\ge k \ge 1$.
    There exists a log-precision Transformer with constant depth, constant heads, and $O(n^k)$ hidden dimension that can complete Multi-Shape-Single-Num Extraction defined in \Cref{def:multi-shape-extraction} for a directed graph $G=(V,E)$ ($|V|=n$) and any target subgraph $G'=(V',E')$ with $|V'|=k' \leq k$ satisfying \Cref{ass:unique_iso}.
\end{thmbis}
\begin{theorem}[Formal Statement of \Cref{thm:multishape_subgraph_transformer}]
    Fix integers $n\ge k \ge 1$.
    There exists a log-precision Transformer with constant depth, constant number of attention heads, and $O(n^k)$ hidden dimension, such that for any directed graph $G=(V,E)$ (with $V=\{v_1, \dots, v_n\}$) and $G'=(V',E')$ (with $V' = \{v_1', \cdots, v_{k'}'\}$ where $k'\leq k$) satisfying \Cref{ass:unique_iso}, the Transformer processing $G,G'$ can output the unique tuples $(v_{i_1}, \dots, v_{i_{k'}})$ for which $\gT(G,G')_{i_1,\cdots, i_{k'}} = 1$.
\end{theorem}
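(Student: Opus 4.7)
The plan is to lift the construction of \Cref{thm:unique_subgraph_transformer} to handle any target size $k'\le k$ with a single fixed-architecture Transformer. The enabling observation is that $\mathsf{vec}(\gT(G,G'))$ has size $n^{k'}\le n^k$, so it fits inside the $O(n^k)$ hidden-dimension budget regardless of the realized $k'$. The entire proof therefore consists of making every data-dependent choice (filtration length, tensor order, output length) uniform in $k'$ by first computing $k'$ from the input and using it as a control signal.

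First, I would apply \Cref{lem:adjacency_mtr_extraction} in the first two layers to extract $\mathsf{vec}(A(G))$ and $\mathsf{vec}(A(G'))$ from either $\mathsf{AL}(\cdot)$ or $\mathsf{EL}(\cdot)$. Because $k'$ equals the number of distinct vertex identifiers appearing inside the $G'$-block of the input, I would simultaneously produce a one-hot indicator vector $\vone_{k'}\in\{0,1\}^k$ using the token-position and token-type embeddings together with the standard piecewise-linear counting constructions (see \cite[Lemma C.2]{feng2023towards}). This costs only $O(k)$ additional hidden dimension and is carried through all subsequent layers as a control signal. I would also zero-pad $A(G')$ to a fixed $k\times k$ matrix, which is legitimate because padding rows/columns correspond to vertices with no incident edges and do not change the isomorphism tensor when we restrict to the first $k'$ coordinates.

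Second, I would run a fixed block of $k$ filtration layers indexed by $i=1,\dots,k$, where each layer invokes the MLP of \Cref{lem:one-step-iso-cal} for the filtration $V_i'=\{v_1',\dots,v_{\min(i,k')}'\}$. When $i\le k'$, the layer should perform the genuine one-step update producing $\mathsf{vec}(\gT(G,G'[V_i']))$; when $i>k'$, it should simply copy the previous tensor. To realize both branches with the same weights, I would compute both the filtration update and the identity, then gate the result by the scalar $\vone_{k'\ge i}$, which is a linear read-out of $\vone_{k'}$. The convex combination is implementable by a constant-width MLP using the product-of-two-bounded-quantities construction from \cite[Lemma C.1]{feng2023towards}, adding only constant overhead per layer. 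After $k$ layers the hidden state contains $\mathsf{vec}(\gT(G,G'))$, embedded with trivial extension in the unused $n^k-n^{k'}$ coordinates.

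Finally, the output stage reproduces the argument of \Cref{thm:unique_subgraph_transformer}: \Cref{ass:unique_iso} makes $\mathrm{ReLU}(\mathsf{vec}(\gT(G,G')))$ one-hot, so a constant-depth MLP recovers $(i_1,\dots,i_{k'})$ by a linear read-out weighted by coordinate indices. During autoregressive decoding, each output position $p$ uses its positional encoding to retrieve $v_{i_p}$ via an inner product with the one-hot index vector, and compares $p$ against the stored $\vone_{k'}$ to emit an $\mathrm{END}$ token once $p>k'$. The main obstacle is exactly the uniformity across $k'$: the filtration depth, the effective tensor order, and the generated sequence length are all data-dependent, yet must be handled by fixed weights. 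Extracting $\vone_{k'}$ in the preprocessing stage and using it to gate both the filtration updates and the termination check reduces this uniformity problem to a sequence of routine gating and read-out constructions that incur only constant blow-up in depth, heads, and hidden dimension, leaving the overall budget at constant depth, constant heads, and $O(n^k)$ width.
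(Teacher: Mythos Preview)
Your approach is correct but takes a different route from the paper. You make the construction uniform in $k'$ by gating the filtration: run $k$ one-step updates and mask out steps $i>k'$, so the hidden state ends up holding the genuine $k'$-order tensor $\mathsf{vec}(\gT(G,G'))$, which is one-hot under \Cref{ass:unique_iso}. The paper instead pads $G'$ to a $k$-vertex graph $\hat G'$ by adding $k-k'$ isolated vertices and computes the full $k$-order tensor $\gT(G,\hat G')$ directly (taking $m=1$ in \Cref{thm:filtration_transformer}). This tensor is \emph{not} one-hot---every assignment of the $k-k'$ dummy coordinates to distinct unused vertices gives a $1$---but all such $1$-entries share the same first $k'$ coordinates. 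The paper then reuses the cumulative-count vector $\bm v^3$ from the proof of \Cref{thm:multinum_subgraph_transformer} to isolate the first such entry and reads off $(i_1,\dots,i_{k'})$.

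The trade-off: the paper's padding trick reduces everything to previously built machinery (Multi-Num plus the fixed $k$-dimensional index read-out), so no per-layer gating is needed. Your route avoids the multiplicity issue but pays for it with data-dependent tensor order; in particular, the ``linear read-out weighted by coordinate indices'' you invoke is not a single linear map, since the row-major index-to-coordinate formula depends on $k'$. You would need $k$ separate read-out maps gated by $\vone_{k'}$ (or else store the tensor as a slice of a fixed $k$-order array, which changes your filtration layers). This is fixable within the $O(n^k)$ budget, but it is a detail you glossed over that the paper's approach sidesteps entirely.
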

\begin{proof}
    The proof is based on that of \Cref{thm:unique_subgraph_transformer}, and we extend $G'$ to $\hat G'$ with $k-k'$ extra isolated node.
    
    Now, for more general case that $k'\leq k$, there may exist multiple tuples $(v_{i_1}, \cdots, v_{i_k})$ such that $\gT(G,\hat G')_{i_1, \cdots, i_k} = 1$. However, by \Cref{ass:unique_iso}, all these tuples shares the same $i_1, \cdots, i_{k'}$. Therefore, we can first obtain a one-hot vector via 
    \begin{align*}
        \text{ReLU}(\bm v^3) + \text{ReLU}(\bm v^3 - 2) - 2\cdot \text{ReLU}(\bm v^3 - 1),
    \end{align*}
    where $\bm v^3$ is defined in the proof of \Cref{thm:multinum_subgraph_transformer}. Finally, it suffices to output the corresponding $(v_{i_1}, \cdots, v_{i_k'})$, which is similar to the final step of \Cref{thm:unique_subgraph_transformer}.
\end{proof}

\subsection{Theoretical Results for \Cref{sec:tis}}
\label{app_sec:cot_proof}

\begin{assumption}
\label{ass:cot}
For directed graphs $G=(V,E)$ with $V = \{v_1, \cdots, v_n\}$, $G' = (V',E')$ with $V' = \{v_1', \cdots, v_k'\}$ and $V_1', \cdots, V_t'\subseteq V'$, and a collection of $t$ vertex subsets $V'_1, \dots, V'_t \subseteq V'$. It is assumed that:
    \begin{enumerate}[label=(\roman*)]
        \item There exists a \emph{unique} $k$-tuple of distinct vertex indices $(i_1, \dots, i_k)$ from $\{1, \dots, n\}$ such that $\gT(G,G')_{i_1, \dots, i_k} = 1$.
        \item For each $j \in \{1, \dots, t\}$, there is a fixed constant $c \ge 1$ such that the number of distinct $|V'_j|$-tuples of distinct vertex indices $(i_1, \dots, i_{|V'_j|})$ from $\{1, \dots, n\}$ for which $\gT(G,G'[V'_j])_{i_1, \dots, i_{|V'_j|}} = 1$ is at most $c$.
    \end{enumerate}
\end{assumption}

\begin{theorem}
\label{thm:cot}
Fix integers $n \ge k \ge 1$ and $t \ge 1$. Let $G'=(V',E')$ be a fixed directed graph with $V' = \{v'_1, \dots, v'_k\}$.
    Let $V'_1, \dots, V'_t$ be a collection of subsets of $V'$ such that $G'$ is covered by the subgraphs induced by these subsets, meaning $V' = \bigcup_{j=1}^t V'_j$ and $E' \subseteq \bigcup_{j=1}^t E(G'[V'_j])$.
    Denote $q = \max_{j \in \{1,\dots,t\}} |V'_j|$.

    There exists a log-precision Transformer with constant depth, constant number of attention heads, and $O(n^q + c^t + c^2t^2n)$ hidden dimension, such that: 
    For any directed graph $G=(V,E)$ (with $V=\{v_1, \dots, v_n\}$) that, together with the predefined $G'$ and subsets $V'_1, \dots, V'_t$, satisfies \Cref{ass:cot} (where $c$ is the constant from \Cref{ass:cot}), the Transformer processing $G$ can
    \begin{enumerate}[label=(\roman*)]
        \item First, for each $j = 1, \dots, t$, output a special token $\langle S_j\rangle$, then identify and output all distinct $|V'_j|$-tuples of vertices $(v_{i_1}, \dots, v_{i_{|V'_j|}})$ from $G$ such that $\gT(G, G'[V'_j])_{i_1, \dots, i_{|V'_j|}} = 1$.
        \item Subsequently, output a special token $\langle \mathrm{ANS}\rangle$ and the unique $k$-tuple of vertices $(v_{i_1}, \dots, v_{i_k})$ from $G$ such that $\gT(G, G')_{i_1, \dots, i_k} = 1$.
    \end{enumerate}

\end{theorem}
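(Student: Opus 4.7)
}
The plan is to build the Transformer in three stages that mirror the three pieces of the hidden-dimension bound $O(n^q + c^t + c^2t^2n)$. In the first stage, for each $j \in \{1,\dots,t\}$, I would invoke \Cref{thm:multinum_subgraph_transformer} on the pair $(G, G'[V_j'])$ to output all $|V_j'|$-tuples $(v_{i_1},\dots,v_{i_{|V_j'|}})$ for which $\gT(G, G'[V_j'])_{i_1,\dots,i_{|V_j'|}}=1$, preceded by the special token $\langle S_j\rangle$. Since $|V_j'| \le q$, each such sub-Transformer needs hidden dimension $O(n^{|V_j'|})\le O(n^q)$, and because the target subgraphs $G'[V_j']$ are fixed ahead of time (they are hard-coded into the network weights), the $t$ parallel extractors share this $O(n^q)$ budget rather than multiplying it by $t$. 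This stage produces the first part of the required output and, crucially, places every candidate tuple into the autoregressive context in a format the later layers can retrieve.

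The second stage performs the consistency check that selects the unique global match. After $\langle\mathrm{ANS}\rangle$ is emitted, I would use attention layers to \emph{copy} the at most $c$ tuples found under each $\langle S_j\rangle$ back into the current token's hidden state. Concretely, using positional markers attached to $\langle S_j\rangle$ and to each tuple's vertex slots, \cite[Lemma C.7]{feng2023towards}-style COPY heads can retrieve, for each $(j,r,p)$ with $j\le t$, $r\le c$, $p\le |V_j'|$, a one-hot vector of length $n$ encoding the $p$-th vertex index of the $r$-th match of $G'[V_j']$. Storing these indicator vectors requires $O(c\cdot t\cdot n)$ dimensions per "position-in-$V'$", and since any two tuples that meet must agree on each shared coordinate, a direct MLP checks the $\binom{t}{2}$ pairwise agreement constraints in dimension $O(c^2 t^2 n)$ (this is where that term arises): for each ordered pair $(j_1,j_2)$, each pair of matches $(r_1,r_2)$, and each shared vertex $v'_\ell \in V'_{j_1}\cap V'_{j_2}$, compare the two one-hot vectors by an inner product.

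In the third stage, I would enumerate the $c^t$ combinations obtained by choosing one match from each $G'[V_j']$, and assign to each combination a score that is $1$ exactly when all pairwise constraints from stage two are satisfied. This is a product of $\binom{t}{2}$ bits, which is itself a single bit, computable by a two-layer MLP of width $O(c^t)$ via \cite[Lemma C.1]{feng2023towards}; under \Cref{ass:cot}(i) exactly one combination scores $1$. Because $V'=\bigcup_j V_j'$ and $E'\subseteq\bigcup_j E(G'[V_j'])$, the unique consistent combination reconstructs the entire mapping $v_p' \mapsto v_{i_p}$ and this mapping is necessarily the unique witness of $\gT(G,G')_{i_1,\dots,i_k}=1$. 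From the resulting one-hot selector I would recover the $k$-tuple $(v_{i_1},\dots,v_{i_k})$ one coordinate at a time, reusing the positional-encoding-based generation trick already spelled out at the end of the proof of \Cref{thm:unique_subgraph_transformer}.

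The main obstacle I expect is not any single step but the bookkeeping that keeps all three dimension terms separate. Stage one tempts one to multiply $O(n^q)$ by $t$; the fix is to exploit that each $G'[V_j']$ is a fixed (compile-time) subgraph so the $t$ extractors can share a single set of width-$O(n^q)$ features indexed by $j$. Stage two tempts one to store the tuples in their natural $O(c\,t\,q\log n)$ encoding and then decode inside the MLP, blowing up the width; the fix is to keep the tuples in one-hot form from the moment they are copied out of the context, so that consistency checking reduces to inner products. Stage three tempts one to enumerate all $c^t$ assignments symbolically, but since each assignment's validity is already a product of precomputed pairwise bits, only $O(c^t)$ hidden units are needed to hold the enumeration and select the winner. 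Once these accountings are separated, correctness follows directly from \Cref{thm:multinum_subgraph_transformer}, \Cref{thm:unique_subgraph_transformer}, and \Cref{ass:cot}.
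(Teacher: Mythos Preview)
Your plan mirrors the paper's proof almost step for step: extract the matches of each piece $G'[V_j']$ autoregressively with the multi-num machinery ($O(n^q)$), aggregate them at the $\langle\mathrm{ANS}\rangle$ position, check pairwise consistency across pieces, then enumerate the $c^t$ combinations to pick out the unique consistent one and read off the $k$-tuple. The decomposition, the role of each dimension term, and the reuse of \Cref{thm:multinum_subgraph_transformer} and the output trick from \Cref{thm:unique_subgraph_transformer} all match.

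One technical wrinkle in Stage~2: keeping each retrieved vertex index as a length-$n$ one-hot vector means every shared-coordinate comparison costs $\Theta(n)$ hidden units (for the inner product or $\ell_1$-difference), and with up to $O(c^2t^2q)$ such comparisons you land at $O(c^2t^2qn)$ rather than the claimed $O(c^2t^2n)$. The paper sidesteps this by storing, for each $(j,r)$, an $n$-dimensional vector of \emph{integer} indices (the value at coordinate $z$ is the $G$-vertex matched to $v'_z$, or $0$), so that each ``equal-or-one-is-zero'' test is an $O(1)$ ReLU formula and the $(ct)^2\cdot n$ tests give exactly $O(c^2t^2n)$. You can recover the stated bound simply by collapsing your one-hot vectors to integers via the linear map $a\mapsto\sum_i i\,a_i$ before running the consistency checks.
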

\begin{remark}
    If we assume $c,t$ are both constants, then the result becomes $O(n^q)$, which demonstrates the advantages of thinking in substructures.
\end{remark}
\begin{remark}
\Cref{thm:cot} highlights a trade-off concerning the hidden dimension complexity, $O(n^q + c^t + c^2t^2n)$. This complexity is influenced by:
    \begin{itemize}
        \item $t$: the number of intermediate decomposition steps, or the CoT steps.
        \item $q$: the maximum size of any intermediate subgraph $G'[V_j']$ considered during these steps.
        \item $c$: the maximum number of instances (matches) in $G$ for any such intermediate subgraph $G'[V_j']$.
    \end{itemize}
    
    When $t$ increases (employing more, potentially smaller, intermediate steps), $q$ generally decreases. However, $c$ may increase, as simpler or smaller intermediate subgraphs could appear more frequently. In this scenario, the $n^q$ component of the hidden dimension tends to decrease, while the $c^t + c^2t^2n$ components are likely to increase.
    
    Conversely, when $t$ decreases (employing fewer, potentially larger, intermediate steps), $q$ generally increases. Correspondingly, $c$ may decrease, as more complex or larger intermediate subgraphs could be less common. This tends to increase the $n^q$ component, while the $c^t + c^2t^2n$ components are likely to decrease.
    
    Thus, the optimal decomposition strategy for minimizing the required hidden dimension depends on the interplay between these parameters, dictated by the specific problem structure.
\end{remark}

\begin{proof}
    We first design the necessary embeddings.
    \begin{enumerate}
        \item For the special token $\langle S_j\rangle$, we need a $q^2$ dimension vector representing $\mathsf{vec}(A(G[V_j'])$ (if $|V_j'|<q$, then we need to add $q-|V_j'|$ isolated nodes); and a $n^q$ dimension vector representing $\mathsf{vec}(\gT^{(j)})$ where $\gT^{(j)}$ is a $q$-dimensional tensor of size $n\times n\times \cdots\times n$ defined as:
        \begin{align}
        \label{eqn:cot-embed}
            \gT^{(j)}_{i_1,\cdots, i_q} = \begin{cases}
                0, & \forall 1\leq x<y\leq |V_j'|, \;i_x \ne i_y; \; \forall |V_j'|<x\leq q, \; i_x = 1\\
                1, & \text{otherwise}
            \end{cases}.
        \end{align}
        \item For the output answer tokens in step (i), we need a $ctn$ dimension vector. For $v_{i_x}$ in the $y$-th tuple for the subgraph induced by $V_j'$, the embedding satisfies: the value on the $cn(j-1)+x'$ dimension is $i_x$, while the others are $0$. Here, $x'$ is the node $v_{i_x}$ corresponds to in origin $G'$ ($v_{x'}'$). 
 
    \end{enumerate}
    
To get the desired output sequence, we need to complete the following tasks:
\begin{itemize}
    \item \textbf{Task 1:} At the position of $\langle S_j\rangle$, we need to get all the tuples $(i_1, \dots, i_{|V'_j|})$ for which $\gT(G,G'[V'_j])_{i_1, \dots, i_{|V'_j|}} = 1$.
    \item \textbf{Task 2:} At the position of $\langle \text{ANS}\rangle$, we need to get the \textit{unique} tuple $(i_1, \dots, i_k)$ for which $\gT(G,G')_{i_1, \dots, i_k} = 1$.
\end{itemize}

For task 1, the idea is similar to the proof of \Cref{thm:multinum_subgraph_transformer}. We first use \Cref{lem:adjacency_mtr_extraction} to extract $\mathsf{vec}(A(G))$ for input graph $G$, then apply \cite[Lemma C.7]{feng2023towards} to COPY $\mathsf{vec}(A(G))$ to the current position. 
Next, we calculate $\mathsf{vec}(\gT'(G, \hat G'[V'_j]))$. Here, $\hat G'[V'_j]$ is obtained by adding $q - |V'_j|$ isolated nodes on $G'[V_j']$; and 
\begin{align*}
    \gT'\left(G, \hat G'[V'_j]\right)_{i_1, \cdots, i_q} \begin{cases}
        = 1, & \text{if }\gT\left(G, G'[V'_j]\right)_{i_1, \dots, i_{|V'_j|}} = 1 \text{ and } i_{|V_j'|+1} = \cdots = i_q = 1 \\
        \leq 0, & \text{otherwise}
    \end{cases}.
\end{align*}
Thus, $\mathsf{vec}(\gT'(G, \hat G'[V'_j]))$ is a $n^q$ dimensional vector. 
Notice that $\gT'(G, \hat G'[V'_j])_{i_1, \cdots, i_q}$ can be calculated as
\begin{align*}
    \sum_{1\leq x,y\leq q} \left[ A\left(\hat G'[V_j']\right)_{x,y} A(G)_{i_x, i_y} - A\left(\hat G'[V_j']\right)_{x,y}  \right] - \gT^{(j)}_{i_1,\cdots, i_q},
\end{align*}
where $\gT^{(j)}$ is defined in \Cref{eqn:cot-embed}. The following steps are similar to the proof of \Cref{thm:multinum_subgraph_transformer}, while the only difference is we only want the first $|V_j'|$ dimension. This can be implemented by modifying the positional encoding to give the correct position for the next-token.

For task 2, we first aggregate all the previous $|V'_j|$-tuples for $j=1,\cdots, t$ using MEAN operation in \cite[Lemma C.8]{feng2023towards}. We then multiplies the result with the sequence length (which can be obtained by absolute positional encoding). After this, we get a $ctn$-dimension vector $(\bm b_{1,1},\cdots, \bm b_{1,c}, \bm b_{2,1}, \cdots, \bm b_{t,1}, \cdots, \bm b_{t,c})$. Here, $\bm b_{i,j}$ is a $n$-dimension vector corresponds to the $j$-th tuple for the subgraph induced by $V_i'$.

Next, we maintain a $t$-dimension tensor $\gT^{\text{ans}}$ of size $c\times c\times \cdots \times c$, defined as 
\begin{align*}
    \gT^{\text{ans}}_{i_1,\cdots, i_t} \begin{cases}
        = 0, & \text{if } \bm b_{1,i_1},\cdots, \bm b_{t,i_t}\text{ can be combined as } G' \\ 
        \geq 1, & \text{otherwise}
    \end{cases}.
\end{align*}
By \Cref{ass:cot}, there exists a \emph{unique} $t$-tuple $(i_1, \dots, i_t)$ such that $\gT^{\text{ans}}_{i_1,\cdots, i_t} = 0$.
Notice that $\bm b_{1,i_1},\cdots, \bm b_{t,i_t}$ can be combined as $G'$ if and only if the following holds:
\begin{equation}
\label{eqn:cot-ans}
\left\{
    \begin{aligned}
        & \forall 1\leq x\leq t, \; \bm b_{x,i_x} \ne \bm 0 \\
        & \forall 1\leq x<y\leq t, \; \forall 1\leq z\leq n, \; (\bm b_{x,i_x})_z = (\bm b_{y,i_y})_z \text{ or } (\bm b_{x,i_x})_z = 0 \text{ or } (\bm b_{y,i_y})_z = 0
    \end{aligned}
\right.
\end{equation}
Since $(\bm b_{y,i_y})_z \in \{0,1,\cdots, n\}$, \Cref{eqn:cot-ans} is equivalent to 
\begin{equation*}
\left\{
    \begin{aligned}
        & \forall 1\leq x\leq t, \; \text{ReLU}\left[1 - \sum_{1\leq z\leq n} (\bm b_{x, i_x})_z\right] = 0 \\
        & \forall 1\leq x<y\leq t, \; \forall 1\leq z\leq n, \; \text{ReLU}[(\bm b_{x,i_x})_z - (\bm b_{y,i_y})_z] + \text{ReLU}[(\bm b_{y,i_y})_z - (\bm b_{x,i_x})_z] =0 \\
        & \qquad \qquad \qquad \qquad \qquad \qquad \;\;\text{ or } (\bm b_{x,i_x})_z = 0 \text{ or } (\bm b_{y,i_y})_z = 0
    \end{aligned}
\right.
\end{equation*}
The second condition is equivalent to $\forall 1\leq x<y\leq t, \; \forall 1\leq z\leq n,$
\begin{align*}
    & \text{ReLU}\left[1 - \text{ReLU}[(\bm b_{x,i_x})_z - (\bm b_{y,i_y})_z] - \text{ReLU}[(\bm b_{y,i_y})_z - (\bm b_{x,i_x})_z] \right]\\
    & + \text{ReLU}[1 - (\bm b_{x,i_x})_z] + \text{ReLU}[1 - (\bm b_{y,i_y})_z] \geq 1,
\end{align*}
or
\begin{align*}
    & \text{ReLU}\left[ 1 - \text{ReLU}\left[1 - \text{ReLU}[(\bm b_{x,i_x})_z - (\bm b_{y,i_y})_z] - \text{ReLU}[(\bm b_{y,i_y})_z - (\bm b_{x,i_x})_z] \right] \right.\\
    & \left. \; - \text{ReLU}[1 - (\bm b_{x,i_x})_z] - \text{ReLU}[1 - (\bm b_{y,i_y})_z] \right] = 0.
\end{align*}
Thus, for any tuple $(i_t,\cdots, i_t)$, we can get $\gT^{\text{ans}}_{i_1,\cdots, i_t}$ via an MLP with constant layers and $O(nt^2)$ hidden dimension. Notice that there are many components remaining the same when calculating different $\gT^{\text{ans}}_{i_1,\cdots, i_t}$. We can calculate 
\begin{align*}
    & \text{ReLU}\left[ 1 - \text{ReLU}\left[1 - \text{ReLU}[(\bm b_{p_1,q_1})_z - (\bm b_{p_2,q_2})_z] - \text{ReLU}[(\bm b_{p_2,q_2})_z - (\bm b_{p_1,q_1})_z] \right] \right.\\
    & \left. \; - \text{ReLU}[1 - (\bm b_{p_1,q_1})_z] - \text{ReLU}[1 - (\bm b_{p_2,q_2})_z] \right]
\end{align*}
for all $(p_1,q_1),(p_2,q_2)$ pairs and $1\leq z\leq n$, which are $O(c^2t^2n)$. Each can be calculated via an MLP with constant depth and constant hidden dimension. 

Finally, we will calculate the \emph{unique} $t$-tuple $(i_1, \dots, i_t)$ such that $\gT^{\text{ans}}_{i_1,\cdots, i_t} = 0$. Notice that 
\begin{align*}
    i_x = \sum_{1\leq i_1,\cdots, i_t\leq c} \text{ReLU}(1 - \gT^{\text{ans}}_{i_1,\cdots, i_t})\cdot \left(\sum_{1\leq j\leq t}\bm b_{j, i_j}\right),
\end{align*}
which can be calculated via an MLP with constant depth and $O(c^t)$ hidden dimension by \cite[Lemma C.1]{feng2023towards}.
\end{proof}

\subsection{Theoretical Results for \Cref{sub_sec:features}}

\begin{theorem}
\label{thm:feature_graphs}
    Fix integers $n\ge k \ge 1$, and let $V=\{v_1, \cdots, v_n\}, V' = \{v_1', \cdots, v_k'\}$.
    Fix a feature function $\varphi:V\cup V'\to \sZ$. 
    There exists a log-precision Transformer with constant depth, constant number of attention heads, and $O(n^k)$ hidden dimension, such that for any directed graphs $G=(V,E)$ and $G'=(V',E')$, the Transformer processing $G,G'$ can output all the tuples $(v_{i_1}, \cdots, v_{i_k})$ that satisfy both of the following conditions:
    \begin{enumerate}[label=(\roman*)]
        \item Subgraph Isomorphism: The subgraph of $G$ induced by the set of vertices $\{v_{i_1}, \cdots, v_{i_k}\}$ is isomorphic to $G'$ under the mapping $v'_p \mapsto v_{i_p}$ for $p \in \{1, \cdots, k\}$. That is, $\gT(G,G')_{i_1, \cdots, i_k} = 1$.
        \item Feature Matching: For all $p \in \{1, \dots, k\}$, the feature of the $p$-th vertex in the tuple from $G$ matches the feature of the $p$-th vertex in $V'$, i.e., $\varphi(v_{i_p}) = \varphi(v'_p)$.
    \end{enumerate}
\end{theorem}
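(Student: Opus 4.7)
The plan is to extend the proof of \Cref{thm:multinum_subgraph_transformer} by augmenting the subgraph isomorphism indicator tensor with a feature-matching mask, so that the final tensor encodes both structural and label constraints simultaneously. Since $\varphi$ is a fixed function mapping each vertex to an integer in $\sZ$, we treat $\varphi(v_i)$ and $\varphi(v'_p)$ as additional scalar components of the token embeddings for $G$ and $G'$ respectively, alongside the vertex identifier, type marker, and positional encoding used in the base construction.

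I would first reuse \Cref{lem:adjacency_mtr_extraction} to extract $\mathsf{vec}(A(G))$ and $\mathsf{vec}(A(G'))$ at the last token of each graph representation in two layers with $O(n^2)$ hidden dimension. In parallel, using the same COPY-then-aggregate machinery (\cite[Lemma C.7, C.8]{feng2023towards}), I would extract two feature vectors $\bm f(G) \in \sZ^n$ and $\bm f(G') \in \sZ^k$, whose $i$-th coordinates equal $\varphi(v_i)$ and $\varphi(v'_i)$ respectively. This extraction requires only $O(n+k)$ additional dimensions, which is dominated by the $O(n^k)$ budget.

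Next, I would construct the feature-matching indicator tensor $M \in \{0,1\}^{n \times \cdots \times n}$ of order $k$ defined by $M_{i_1, \cdots, i_k} = \prod_{p=1}^{k} \vone_{\varphi(v_{i_p}) = \varphi(v'_p)}$. Each equality check $\vone_{a=b}$ for integer $a,b$ is expressible as a constant-size ReLU network via
\begin{align*}
    \vone_{a=b} = \mathrm{ReLU}(1 - \mathrm{ReLU}(a-b) - \mathrm{ReLU}(b-a)),
\end{align*}
and the product of $k$ such indicators reduces to the indicator $\vone_{\sum_p (1 - \vone_{\varphi(v_{i_p}) = \varphi(v'_p)}) = 0}$, computable by one more ReLU. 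Applying this pointwise across all $n^k$ index tuples together with the previously computed $\gT(G, G')$ yields the augmented tensor $\tilde{\gT}_{i_1, \cdots, i_k} = \gT(G, G')_{i_1, \cdots, i_k} \cdot M_{i_1, \cdots, i_k}$, which equals $1$ precisely when both the isomorphism and feature-matching conditions hold. By \cite[Lemma C.1, C.2]{feng2023towards}, all these operations can be packed into an MLP of constant depth with $O(n^k)$ hidden dimension. Finally, I would invoke the counting-and-positional-encoding argument from the proof of \Cref{thm:multinum_subgraph_transformer} to sequentially emit every tuple $(v_{i_1}, \cdots, v_{i_k})$ for which $\tilde{\gT}_{i_1, \cdots, i_k} = 1$.

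The main obstacle is ensuring that the feature-matching layer can be merged with the existing pipeline without blowing past the $O(n^k)$ dimension budget; the key observation making this work is that $M$ is built by a pointwise rule depending on only $k$ scalar comparisons per tuple, so the same $O(n^k)$-dimension indexing scheme used to represent $\gT(G, G')$ suffices to store the mask and the product. Once this is in place, the multi-num output routine of \Cref{thm:multinum_subgraph_transformer} applies verbatim to $\tilde{\gT}$, completing the proof.
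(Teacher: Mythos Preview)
Your proposal is correct and follows essentially the same route as the paper: both reduce to the pipeline of \Cref{thm:multinum_subgraph_transformer} after augmenting $\gT(G,G')$ with a feature-matching penalty expressed via ReLU on the integer differences $\varphi(v_{i_p})-\varphi(v'_p)$. The only cosmetic difference is that the paper forms the augmented tensor by \emph{subtraction}, setting $\gT'(G,G')_{j_1,\dots,j_k}=\gT(G,G')_{j_1,\dots,j_k}-\sum_{x}\bigl[\mathrm{ReLU}(\varphi(v_{j_x})-\varphi(v'_x))+\mathrm{ReLU}(\varphi(v'_x)-\varphi(v_{j_x}))\bigr]$, whereas you build a $\{0,1\}$ mask $M$ and \emph{multiply}; both stay within the $O(n^k)$ budget and feed into the same output routine.
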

\begin{proof}
    The proof is similar to that of \Cref{thm:multinum_subgraph_transformer}. We define a $k$-dimensional tensor of size $n\times n\times \cdots \times n$ $\gT'(G,G')$ as
    \begin{align*}
        \gT'(G,G')_{j_1,\cdots, j_k} \begin{cases}
=1, & \text{if the mapping } f: V' \to V \text{ defined by } f(v'_l) = v_{j_l} \text{ for } l=1, \dots, k \\
  & \text{satisfies both conditions:} \\
  & \quad \text{(i) Injectivity: } v_{j_1}, v_{j_2}, \dots, v_{j_k} \text{ are distinct vertices in } V \\
  & \quad \quad \text{(i.e., } j_l \neq j_m \text{ for all } 1 \le l < m \le k). \\
  & \quad \text{(ii) Edge Preservation: For every directed edge } (v'_p, v'_q) \in E', \\
  & \quad \quad \text{the directed edge } (f(v'_p), f(v'_q)) = (v_{j_p}, v_{j_q}) \text{ exists in } E. \\
  & \quad \text{(iii) Feature Matching: For all } p\in\{1,\cdots, k\}, \text{ the features of the} \\
  & \quad \quad p\text{-th vertex match, i.e., } \varphi(v_{i_p}) = \varphi(v'_p). \\
\leq 0, & \text{otherwise.}
\end{cases}
    \end{align*}
    Notice that $\gT'(G,G')_{j_1,\cdots, j_k}$ can be obtained as 
    \begin{align*}
        \gT(G,G')_{j_1,\cdots, j_k} - \sum_{1\leq x\leq k} \vone_{\varphi(v_{i_x}) \ne \varphi(v'_x)},
    \end{align*}
    or 
    \begin{align*}
        \gT(G,G')_{j_1,\cdots, j_k} - \sum_{1\leq x\leq k} \left[ \text{ReLU}(\varphi(v_{i_x}) - \varphi(v'_x)) + \text{ReLU}(\varphi(v'_x) - \varphi(v_{i_x})) \right].
    \end{align*}
Therefore, it suffices to COPY the feature while constructing the adjacency matrix $A(G),A(G')$. And it suffices to further calculate the value of $\text{ReLU}\left[\varphi(v_i)-\varphi(v_j')\right], \; \text{ReLU}\left[\varphi(v_j')-\varphi(v_i)\right]$, which requires $O(nk)$ hidden dimension in total (by \cite[Lemma C.2]{feng2023towards}).
\end{proof}
\section{Experiments setting}
\label{app_sec:experiment_setting}
Here, we provide the details of our experimental setup. We use a lightweight version of the GPT-2 model, which is an implementation version of nano-GPT, with hyperparameters listed in~\Cref{app_tab:hyperparameters}. 10\% of the data is used for validation, and the model is saved when the validation loss reaches its minimum. All experiments are conducted on a machine equipped with 8 NVIDIA A6000 GPUs.e.

\begin{table}[]
\centering
\caption{Hyperparameter details}
\begin{tabular}{l|l|l|l|l|l}
\toprule
heads                  & embedding               & drop out rate           & batch size                 & learning rate           & max epoch                   \\
\midrule
12 & 384 & 0.2 & 2048 & 0.001 & 40000\\
\bottomrule
\end{tabular}
\label{app_tab:hyperparameters}
\end{table}

\subsection{training details in input formulations}
\label{app_sec:adj_vs_edge}
We take more 50,000 graphs for training and testing. Each graph contains a target substructure: either a triangle, square, or pentagon. While the number of training samples varies, the test set size remains fixed, as shown in Table~\ref{app_tab:al_vs_el}. Since this is a toy example, we set the Transformer’s hidden dimension to a small size of 192.

\begin{table}[]
\caption{The dataset information for the AL and EL comparison}
\centering
\begin{tabular}{llll}
\toprule
\multicolumn{1}{l}{} & \multicolumn{1}{l}{\#Training data} & \multicolumn{1}{l}{\#Test data} & \multicolumn{1}{l}{\#Node} \\
\midrule
Triangle             & 5000                                & 1000                            & 5                          \\
Square               & 15000                               & 1000                            & 8                          \\
Pentagon              & 35000                               & 1000                            & 8                         \\
\bottomrule
\end{tabular}
\label{app_tab:al_vs_el}
\end{table}

\subsection{Multi-Shape setting}
\label{app_sec:multi-shape}
 To evaluate the discrimination ability of Transformers in detecting multiple structures, we set the evaluations from four perspectives: 1. different numbers of nodes (Triangle vs. Square); 2. the same number of nodes but different numbers of edges (Square vs. Diamond); 3. the same number of nodes and edges, but different edge directions (F-triangle vs. T-triangle); 4. whether the substructure forms a closed loop (Square vs. Path). We construct 600K question-answer pairs to train a 4-layer transformer model. Since triangles require less training data, as suggested in the Multi-num task, we set the training sample ratio of Triangle to Square to 1:6, while maintaining a 1:1 ratio for the other substructure pairs. 

\subsection{Efficient of EL and AL}
\label{app_sec:AL_EL_vs}
 EL performs worse than AL when trained for the same number of epochs, but it eventually reaches comparable performance. In our results, we selected the epoch at which AL achieves its best performance. However, we will also provide additional information indicating when EL catches up with AL, as shown in the~\Cref{tab:AL_EL_vs} below:

\begin{table}[t]
\centering
\caption{Performance across epochs for Square (4 layers) and Pentagon (5 layers).}
\resizebox{\textwidth}{!}{\begin{tabular}{lllllll}
\toprule
\textbf{Epoch} & \textbf{10000} & \textbf{20000} & \textbf{30000} & \textbf{40000} & \textbf{50000} & \textbf{60000} \\
\midrule
\multicolumn{7}{l}{\textbf{Square (4 layers)}} \\
AL & 0.97 $\pm$ 0.004 & 0.98 $\pm$ 0.003 & -- & -- & -- & -- \\
EL & 0.83 $\pm$ 0.078 & 0.93 $\pm$ 0.050 & 0.99 $\pm$ 0.006 & -- & -- & -- \\
\midrule
\multicolumn{7}{l}{\textbf{Pentagon (5 layers)}} \\
AL & 0.69 $\pm$ 0.017 & 0.73 $\pm$ 0.058 & 0.84 $\pm$ 0.031 & 0.92 $\pm$ 0.004 & -- & -- \\
EL & 0.61 $\pm$ 0.017 & 0.60 $\pm$ 0.019 & 0.74 $\pm$ 0.018 & 0.87 $\pm$ 0.010 & 0.89 $\pm$ 0.0056 & 0.93 $\pm$ 0.044 \\
\bottomrule
\end{tabular}}
\label{tab:AL_EL_vs}
\end{table}

EL with longer input lengths requires more training epochs to achieve the same performance as AL. Although EL and AL are theoretically equivalent in their ability to represent graph structures, the longer input sequences in EL lead to less efficient learning. We will clarify this point in the revision.

\subsection{LLMs Experiments}
\subsubsection{Evaluation on substructure detection}
In substructure detection, we set the question prompt as:

\emph{Given a structure G, Node 1 is connected to Node 2, 3; Node 2 is connected to.... List all of the square patterns in the graph in the form of: [\#1, \#2, \#3, ...]}

Meanwhile, we set the answer as:

\emph{The answer is [1, 2, 3]}

For the triangle detection task, we use 1,000 training samples and 200 for evaluation. Using supervised fine-tuning (SFT) over 4 epochs, we achieve 58.86\% accuracy on the test set. The model responses suggest that LLaMA3.1-8B-Instruct still generates explanatory content, including code, during answer generation.

We also evaluate LLM performance on the square detection task using 283 test samples, which contain only four distinct answer types. As shown in the~\Cref{tab:llm_vis}, lightweight LLMs fail to extract meaningful patterns without fine-tuning. Due to the high computational cost of training LLMs, we limit the training data to only include samples with these four answer types, using 480 samples for training and 120 for validation. After fine-tuning, we observe a significant improvement in accuracy. Additionally, visualization shows that graphs corresponding to similar answers tend to cluster together.
\begin{table}[]
\centering
\caption{Large Language Models do the ISF process in the middle layers}
\resizebox{\textwidth}{!}{\begin{tabular}{l|lll}
\toprule
Model                  & Llama3.2-3B-Instruct           & Qwen3-4B-Base           & Llama3.1-8B-Instruct       \\
\midrule
ACC / finetuned ACC    & 0.0035 / 0.4982 & 0.0141 / 0.5724 & 0.0035/0.6572 \\
\midrule
Vis. for non-finetuned &    {\raisebox{0.cm}{\includegraphics[width=0.25\textwidth]{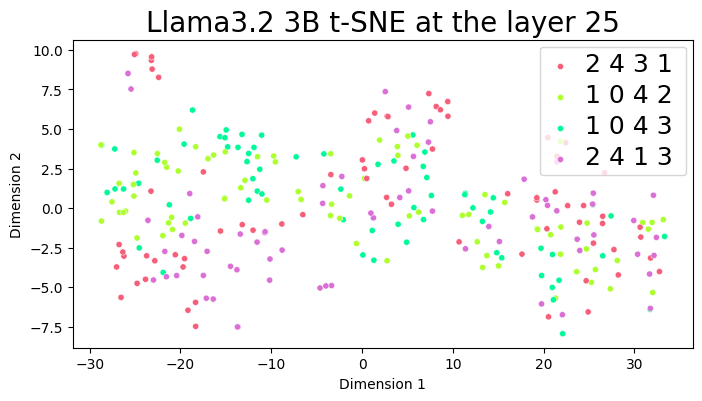}}}             &   {\raisebox{0.cm}{\includegraphics[width=0.25\textwidth]{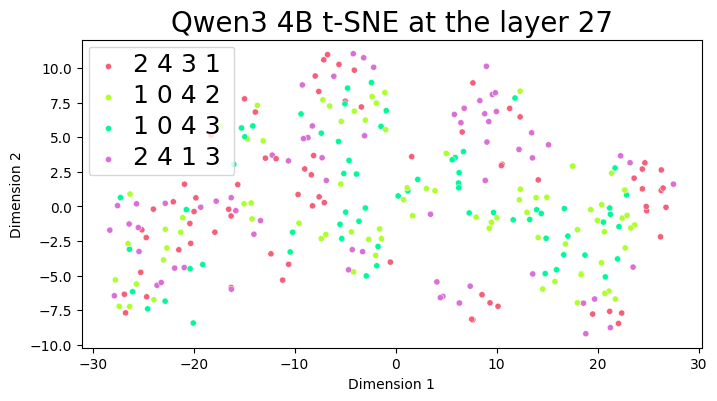}}}              &   {\raisebox{0.cm}{\includegraphics[width=0.25\textwidth]{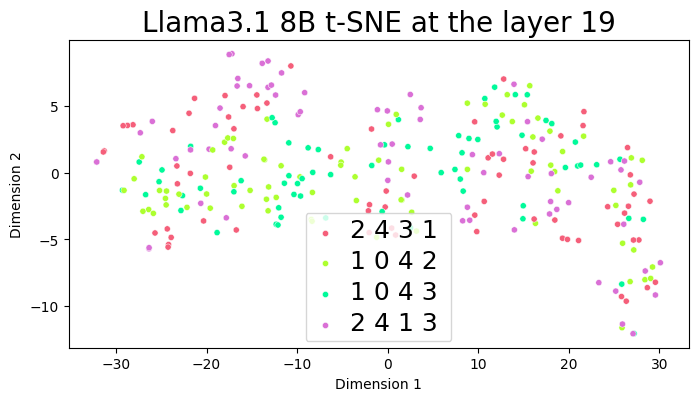}}}             \\
Vis. for finetuned     & {\raisebox{0.cm}{\includegraphics[width=0.25\textwidth]{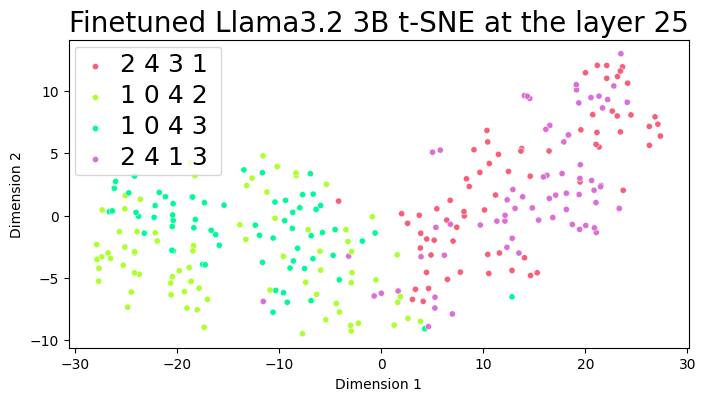}}}                 &    {\raisebox{0.cm}{\includegraphics[width=0.25\textwidth]{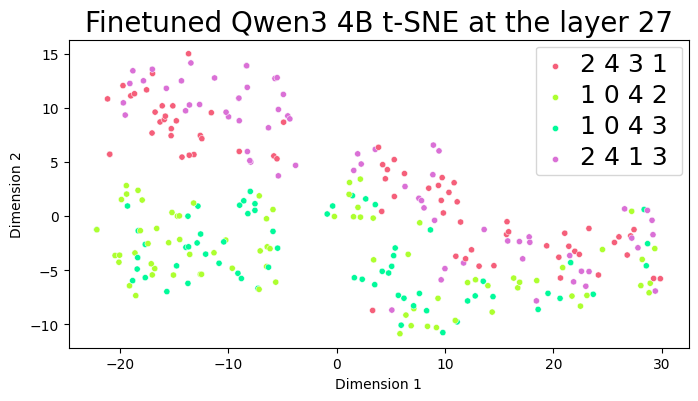}}}     &         {\raisebox{0.cm}{\includegraphics[width=0.25\textwidth]{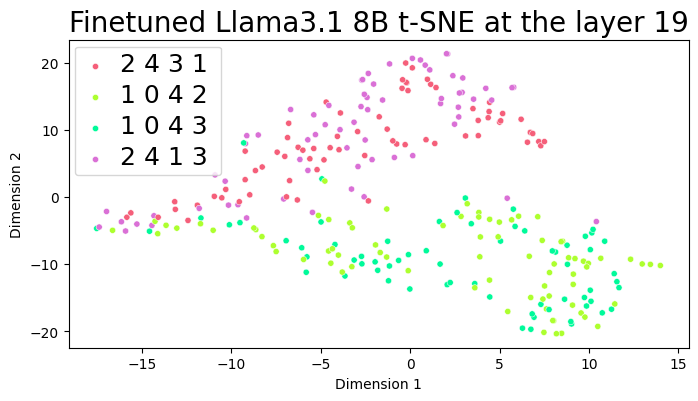}}}    
\\
\bottomrule
\end{tabular}}
\label{tab:llm_vis}
\end{table}
\label{app_sec:llm_substructure_detect}
\subsubsection{Evaluation on question prompt}
\label{app_sec:prompt}
e evaluate how LLMs align conceptual descriptions with underlying topological structures. Specifically, we test LLaMA3.2-3B-Instruct, Qwen2.5-1.5B, and Qwen2.5-3B by setting the temperature to 0.6, running 20 dialogue turns per model, and manually evaluating the responses.

First, we assess whether the models understand the concept of a "house" in graph terminology by prompting them with:
\emph{What is house in graphs? Giving the graph in the formulate of: 'Node 1 is connected to nodes 2, 3'}
In the baseline setting (without topological prompts), we explicitly teach the concept using natural language definitions generated by Gemini-1.5-Pro. We fine-tune each model using 200 such concept-descriptive sentences. For example, a training sample might look like:
\emph{In graph theory, a "house" isn't a standard term like "tree" or "cycle."  It usually refers to a specific small graph resembling a house drawing.  This graph consists of five vertices and six edges.  It's formed by a cycle of four vertices (the "walls" and "floor") with an additional vertex connected to one of the cycle vertices (the "roof peak"). }
In the train with topos setting, we add a topology description to the house, which is:
\emph{The house is described as: G describes an undirected graph among 1, 2, 3, 4, and 5.In this graph: Node 1 is connected to nodes 2, 5.Node 2 is connected to nodes 1, 3, 5. Node 3 is connected to nodes 2, 4. Node 4 is connected to nodes 3, 5. Node 5 is connected to nodes 1, 2, 4.}

As shown in~\Cref{app_tab:concept}, the LLMs only learn the topological descriptions training with the terminology terms together. The LLMs do not generate new concepts by the already known knowledge.
\begin{table}[]
\centering
\caption{Concept learning}
\begin{tabular}{llll}
\toprule
        & no-finetune & without topos & with topos \\
\midrule
llama   & 0           & 0                 & 0.50            \\
Qwen1.5 & 0           & 0                 & 0.75            \\
Qwen3B  & 0           & 0                 & 0.85           \\
\bottomrule
\end{tabular}
\label{app_tab:concept}
\end{table}

\subsubsection{Thinking-in-substructures}
\label{app_subsubsec:tins}
We use 100K samples for training and 5,000 for testing. Since each composite substructure is composed of different sets of decomposing substructures, the required thinking length varies accordingly. A summary of these decompositions is provided in~\Cref{app_tan:decompose_sub}

\begin{table}[]
\centering
\caption{Max length for each composite substructure extraction}
\begin{tabular}{l|llll}
\toprule
 Substructures        & $|\{P_1\}|$ & $|\{P_2\}|$ & $|\{P_3\}|$ & overall length \\ \midrule
Diagnoal & 95         & 55         & -          & 150             \\
Diamond  & 55         & 95         & -          & 150             \\
House    & 75         & 115        & -          & 190             \\
Complex  & 80         & 100        & 110        & 290             \\ \bottomrule
\end{tabular}
\label{app_tan:decompose_sub}
\end{table}
\subsubsection{Transformers for moleculars}
\label{app_subsubsec:mols}
In this subsection, we introduce the experimental setup for applying transformers to molecular data. Specifically, we focus on the task of functional group recognition, where the goal is to identify the atomic positions corresponding to specific functional groups within molecules. We then introduce the dataset, functional group and experimental dataset construction.

\paragraph{Dataset} We conduct experiments on QM9~\cite{ramakrishnan2014quantum} and PCBA~\cite{wu2018moleculenet}. The QM9 dataset primarily contains quantum mechanical calculated properties of approximately 134,000 molecules, suitable for molecular property prediction and quantum chemistry research. The PCBA dataset, on the other hand, contains activity data for approximately 440,000 molecules against 128 biological assays, making it more suitable for drug screening and bioactivity prediction.

\paragraph{Functional Group}
We search for molecules containing basic functional groups in the QM9 and PCBA datasets. Specifically, we extract 33,000 molecules containing hydroxyl groups (C-O-H) from QM9, and 13,000 molecules containing carboxyl groups (-COOH) as well as 33,000 molecules containing benzene rings (\ce{C6H6}) from the PCBA dataset. For all these molecules, H atoms are ignored during processing. The maximum number of atoms is 9 for molecules containing hydroxyl groups, while it is 121 for both carboxyl- and benzene-containing molecules.

\paragraph{Experimental Dataset Construction}
For the hydroxyl group identification task, we first convert molecular graphs into molecular description inputs by omitting H atom. A simple example of such a description is "0 C : 1 O", and the corresponding answer for the position of the C–O–(H) group is "0,1". We then select molecules containing hydroxyl groups, using 30,000 molecules for the training set and 3,000 for the test set.

Similarly, for the identification of molecules containing carboxyl groups and benzene rings, we also convert molecular graphs into molecular description inputs by omitting hydrogen atoms, and generate the corresponding position answers for the target functional groups. We use 10,000 molecules for training and 3,000 for testing in the carboxyl group recognition task. For the benzene ring recognition task, we construct a dataset with 30,000 molecules for training and 3,000 for testing.
The maximum input lengths for molecular descriptions are 100, 1000, and 1000 for molecules containing hydroxyl group, carboxyl group, and benzene ring, respectively.

In addition, we construct a mixed dataset containing molecules with hydroxyl and carboxyl groups. Specifically, we use 10,000 hydroxyl-containing molecules and 10,000 carboxyl-containing molecules for training, and 1,500 molecules of each type for testing.




\newpage
\section*{NeurIPS Paper Checklist}

\begin{enumerate}

\item {\bf Claims}
    \item[] Question: Do the main claims made in the abstract and introduction accurately reflect the paper's contributions and scope?
    \item[] Answer:\answerYes{} 
    \item[] Justification:\answerYes{} We have claimed the scope. 
    \item[] Guidelines:
    \begin{itemize}
        \item The answer NA means that the abstract and introduction do not include the claims made in the paper.
        \item The abstract and/or introduction should clearly state the claims made, including the contributions made in the paper and important assumptions and limitations. A No or NA answer to this question will not be perceived well by the reviewers. 
        \item The claims made should match theoretical and experimental results, and reflect how much the results can be expected to generalize to other settings. 
        \item It is fine to include aspirational goals as motivation as long as it is clear that these goals are not attained by the paper. 
    \end{itemize}

\item {\bf Limitations}
    \item[] Question: Does the paper discuss the limitations of the work performed by the authors?
    \item[] Answer:\answerYes{} 
    \item[] Justification:\answerYes{} We have discussed limitation 
    \item[] Guidelines:
    \begin{itemize}
        \item The answer NA means that the paper has no limitation while the answer No means that the paper has limitations, but those are not discussed in the paper. 
        \item The authors are encouraged to create a separate "Limitations" section in their paper.
        \item The paper should point out any strong assumptions and how robust the results are to violations of these assumptions (e.g., independence assumptions, noiseless settings, model well-specification, asymptotic approximations only holding locally). The authors should reflect on how these assumptions might be violated in practice and what the implications would be.
        \item The authors should reflect on the scope of the claims made, e.g., if the approach was only tested on a few datasets or with a few runs. In general, empirical results often depend on implicit assumptions, which should be articulated.
        \item The authors should reflect on the factors that influence the performance of the approach. For example, a facial recognition algorithm may perform poorly when image resolution is low or images are taken in low lighting. Or a speech-to-text system might not be used reliably to provide closed captions for online lectures because it fails to handle technical jargon.
        \item The authors should discuss the computational efficiency of the proposed algorithms and how they scale with dataset size.
        \item If applicable, the authors should discuss possible limitations of their approach to address problems of privacy and fairness.
        \item While the authors might fear that complete honesty about limitations might be used by reviewers as grounds for rejection, a worse outcome might be that reviewers discover limitations that aren't acknowledged in the paper. The authors should use their best judgment and recognize that individual actions in favor of transparency play an important role in developing norms that preserve the integrity of the community. Reviewers will be specifically instructed to not penalize honesty concerning limitations.
    \end{itemize}

\item {\bf Theory assumptions and proofs}
    \item[] Question: For each theoretical result, does the paper provide the full set of assumptions and a complete (and correct) proof?
    \item[] Answer: \answerYes{}
    \item[] Justification: \answerYes{} We have the assumptions and proofs
    \item[] Guidelines:
    \begin{itemize}
        \item The answer NA means that the paper does not include theoretical results. 
        \item All the theorems, formulas, and proofs in the paper should be numbered and cross-referenced.
        \item All assumptions should be clearly stated or referenced in the statement of any theorems.
        \item The proofs can either appear in the main paper or the supplemental material, but if they appear in the supplemental material, the authors are encouraged to provide a short proof sketch to provide intuition. 
        \item Inversely, any informal proof provided in the core of the paper should be complemented by formal proofs provided in appendix or supplemental material.
        \item Theorems and Lemmas that the proof relies upon should be properly referenced. 
    \end{itemize}

    \item {\bf Experimental result reproducibility}
    \item[] Question: Does the paper fully disclose all the information needed to reproduce the main experimental results of the paper to the extent that it affects the main claims and/or conclusions of the paper (regardless of whether the code and data are provided or not)?
    \item[] Answer: \answerYes{}
    \item[] Justification: \answerYes{} We have the experimental results
    \item[] Guidelines:
    \begin{itemize}
        \item The answer NA means that the paper does not include experiments.
        \item If the paper includes experiments, a No answer to this question will not be perceived well by the reviewers: Making the paper reproducible is important, regardless of whether the code and data are provided or not.
        \item If the contribution is a dataset and/or model, the authors should describe the steps taken to make their results reproducible or verifiable. 
        \item Depending on the contribution, reproducibility can be accomplished in various ways. For example, if the contribution is a novel architecture, describing the architecture fully might suffice, or if the contribution is a specific model and empirical evaluation, it may be necessary to either make it possible for others to replicate the model with the same dataset, or provide access to the model. In general. releasing code and data is often one good way to accomplish this, but reproducibility can also be provided via detailed instructions for how to replicate the results, access to a hosted model (e.g., in the case of a large language model), releasing of a model checkpoint, or other means that are appropriate to the research performed.
        \item While NeurIPS does not require releasing code, the conference does require all submissions to provide some reasonable avenue for reproducibility, which may depend on the nature of the contribution. For example
        \begin{enumerate}
            \item If the contribution is primarily a new algorithm, the paper should make it clear how to reproduce that algorithm.
            \item If the contribution is primarily a new model architecture, the paper should describe the architecture clearly and fully.
            \item If the contribution is a new model (e.g., a large language model), then there should either be a way to access this model for reproducing the results or a way to reproduce the model (e.g., with an open-source dataset or instructions for how to construct the dataset).
            \item We recognize that reproducibility may be tricky in some cases, in which case authors are welcome to describe the particular way they provide for reproducibility. In the case of closed-source models, it may be that access to the model is limited in some way (e.g., to registered users), but it should be possible for other researchers to have some path to reproducing or verifying the results.
        \end{enumerate}
    \end{itemize}

\item {\bf Open access to data and code}
    \item[] Question: Does the paper provide open access to the data and code, with sufficient instructions to faithfully reproduce the main experimental results, as described in supplemental material?
    \item[] Answer: \answerYes{}
    \item[] Justification: \answerYes{} We have the description in the appendix and provide the code in the supplement material
    \item[] Guidelines:
    \begin{itemize}
        \item The answer NA means that paper does not include experiments requiring code.
        \item Please see the NeurIPS code and data submission guidelines (\url{https://nips.cc/public/guides/CodeSubmissionPolicy}) for more details.
        \item While we encourage the release of code and data, we understand that this might not be possible, so “No” is an acceptable answer. Papers cannot be rejected simply for not including code, unless this is central to the contribution (e.g., for a new open-source benchmark).
        \item The instructions should contain the exact command and environment needed to run to reproduce the results. See the NeurIPS code and data submission guidelines (\url{https://nips.cc/public/guides/CodeSubmissionPolicy}) for more details.
        \item The authors should provide instructions on data access and preparation, including how to access the raw data, preprocessed data, intermediate data, and generated data, etc.
        \item The authors should provide scripts to reproduce all experimental results for the new proposed method and baselines. If only a subset of experiments are reproducible, they should state which ones are omitted from the script and why.
        \item At submission time, to preserve anonymity, the authors should release anonymized versions (if applicable).
        \item Providing as much information as possible in supplemental material (appended to the paper) is recommended, but including URLs to data and code is permitted.
    \end{itemize}

\item {\bf Experimental setting/details}
    \item[] Question: Does the paper specify all the training and test details (e.g., data splits, hyperparameters, how they were chosen, type of optimizer, etc.) necessary to understand the results?
    \item[] Answer: \answerYes{}
    \item[] Justification: \answerYes{}we've set the experiment details
    \item[] Guidelines:
    \begin{itemize}
        \item The answer NA means that the paper does not include experiments.
        \item The experimental setting should be presented in the core of the paper to a level of detail that is necessary to appreciate the results and make sense of them.
        \item The full details can be provided either with the code, in appendix, or as supplemental material.
    \end{itemize}

\item {\bf Experiment statistical significance}
    \item[] Question: Does the paper report error bars suitably and correctly defined or other appropriate information about the statistical significance of the experiments?
    \item[] Answer: \answerYes{}
    \item[] Justification: \answerYes{} Yes, the experiment statistical significant.
    \item[] Guidelines:
    \begin{itemize}
        \item The answer NA means that the paper does not include experiments.
        \item The authors should answer "Yes" if the results are accompanied by error bars, confidence intervals, or statistical significance tests, at least for the experiments that support the main claims of the paper.
        \item The factors of variability that the error bars are capturing should be clearly stated (for example, train/test split, initialization, random drawing of some parameter, or overall run with given experimental conditions).
        \item The method for calculating the error bars should be explained (closed form formula, call to a library function, bootstrap, etc.)
        \item The assumptions made should be given (e.g., Normally distributed errors).
        \item It should be clear whether the error bar is the standard deviation or the standard error of the mean.
        \item It is OK to report 1-sigma error bars, but one should state it. The authors should preferably report a 2-sigma error bar than state that they have a 96\% CI, if the hypothesis of Normality of errors is not verified.
        \item For asymmetric distributions, the authors should be careful not to show in tables or figures symmetric error bars that would yield results that are out of range (e.g. negative error rates).
        \item If error bars are reported in tables or plots, The authors should explain in the text how they were calculated and reference the corresponding figures or tables in the text.
    \end{itemize}

\item {\bf Experiments compute resources}
    \item[] Question: For each experiment, does the paper provide sufficient information on the computer resources (type of compute workers, memory, time of execution) needed to reproduce the experiments?
    \item[] Answer: \answerYes{}
    \item[] Justification: \answerYes{}We've included it in the appendix
    \item[] Guidelines:
    \begin{itemize}
        \item The answer NA means that the paper does not include experiments.
        \item The paper should indicate the type of compute workers CPU or GPU, internal cluster, or cloud provider, including relevant memory and storage.
        \item The paper should provide the amount of compute required for each of the individual experimental runs as well as estimate the total compute. 
        \item The paper should disclose whether the full research project required more compute than the experiments reported in the paper (e.g., preliminary or failed experiments that didn't make it into the paper). 
    \end{itemize}
    
\item {\bf Code of ethics}
    \item[] Question: Does the research conducted in the paper conform, in every respect, with the NeurIPS Code of Ethics \url{https://neurips.cc/public/EthicsGuidelines}?
    \item[] Answer: \answerYes{}
    \item[] Justification: \answerYes{}We've reviewed
    \item[] Guidelines:
    \begin{itemize}
        \item The answer NA means that the authors have not reviewed the NeurIPS Code of Ethics.
        \item If the authors answer No, they should explain the special circumstances that require a deviation from the Code of Ethics.
        \item The authors should make sure to preserve anonymity (e.g., if there is a special consideration due to laws or regulations in their jurisdiction).
    \end{itemize}

\item {\bf Broader impacts}
    \item[] Question: Does the paper discuss both potential positive societal impacts and negative societal impacts of the work performed?
    \item[] Answer: \answerYes{}
    \item[] Justification: \answerYes{}Yes, we've discussed
    \item[] Guidelines:
    \begin{itemize}
        \item The answer NA means that there is no societal impact of the work performed.
        \item If the authors answer NA or No, they should explain why their work has no societal impact or why the paper does not address societal impact.
        \item Examples of negative societal impacts include potential malicious or unintended uses (e.g., disinformation, generating fake profiles, surveillance), fairness considerations (e.g., deployment of technologies that could make decisions that unfairly impact specific groups), privacy considerations, and security considerations.
        \item The conference expects that many papers will be foundational research and not tied to particular applications, let alone deployments. However, if there is a direct path to any negative applications, the authors should point it out. For example, it is legitimate to point out that an improvement in the quality of generative models could be used to generate deepfakes for disinformation. On the other hand, it is not needed to point out that a generic algorithm for optimizing neural networks could enable people to train models that generate Deepfakes faster.
        \item The authors should consider possible harms that could arise when the technology is being used as intended and functioning correctly, harms that could arise when the technology is being used as intended but gives incorrect results, and harms following from (intentional or unintentional) misuse of the technology.
        \item If there are negative societal impacts, the authors could also discuss possible mitigation strategies (e.g., gated release of models, providing defenses in addition to attacks, mechanisms for monitoring misuse, mechanisms to monitor how a system learns from feedback over time, improving the efficiency and accessibility of ML).
    \end{itemize}
    
\item {\bf Safeguards}
    \item[] Question: Does the paper describe safeguards that have been put in place for responsible release of data or models that have a high risk for misuse (e.g., pretrained language models, image generators, or scraped datasets)?
    \item[] Answer: \answerNA{}
    \item[] Justification: \answerNA{}No such risks
    \item[] Guidelines:
    \begin{itemize}
        \item The answer NA means that the paper poses no such risks.
        \item Released models that have a high risk for misuse or dual-use should be released with necessary safeguards to allow for controlled use of the model, for example by requiring that users adhere to usage guidelines or restrictions to access the model or implementing safety filters. 
        \item Datasets that have been scraped from the Internet could pose safety risks. The authors should describe how they avoided releasing unsafe images.
        \item We recognize that providing effective safeguards is challenging, and many papers do not require this, but we encourage authors to take this into account and make a best faith effort.
    \end{itemize}

\item {\bf Licenses for existing assets}
    \item[] Question: Are the creators or original owners of assets (e.g., code, data, models), used in the paper, properly credited and are the license and terms of use explicitly mentioned and properly respected?
    \item[] Answer:\answerYes{} 
    \item[] Justification: \answerYes{} We've cited 
    \item[] Guidelines:
    \begin{itemize}
        \item The answer NA means that the paper does not use existing assets.
        \item The authors should cite the original paper that produced the code package or dataset.
        \item The authors should state which version of the asset is used and, if possible, include a URL.
        \item The name of the license (e.g., CC-BY 4.0) should be included for each asset.
        \item For scraped data from a particular source (e.g., website), the copyright and terms of service of that source should be provided.
        \item If assets are released, the license, copyright information, and terms of use in the package should be provided. For popular datasets, \url{paperswithcode.com/datasets} has curated licenses for some datasets. Their licensing guide can help determine the license of a dataset.
        \item For existing datasets that are re-packaged, both the original license and the license of the derived asset (if it has changed) should be provided.
        \item If this information is not available online, the authors are encouraged to reach out to the asset's creators.
    \end{itemize}

\item {\bf New assets}
    \item[] Question: Are new assets introduced in the paper well documented and is the documentation provided alongside the assets?
    \item[] Answer:  \answerNA{}
    \item[] Justification:  \answerNA{} The paper does not release new assets. 
    \item[] Guidelines:
    \begin{itemize}
        \item The answer NA means that the paper does not release new assets.
        \item Researchers should communicate the details of the dataset/code/model as part of their submissions via structured templates. This includes details about training, license, limitations, etc. 
        \item The paper should discuss whether and how consent was obtained from people whose asset is used.
        \item At submission time, remember to anonymize your assets (if applicable). You can either create an anonymized URL or include an anonymized zip file.
    \end{itemize}

\item {\bf Crowdsourcing and research with human subjects}
    \item[] Question: For crowdsourcing experiments and research with human subjects, does the paper include the full text of instructions given to participants and screenshots, if applicable, as well as details about compensation (if any)? 
    \item[] Answer: \answerNA{}
    \item[] Justification: \answerNA{} The paper does not involve  crowdsourcing nor research with human subjects
    \item[] Guidelines:
    \begin{itemize}
        \item The answer NA means that the paper does not involve crowdsourcing nor research with human subjects.
        \item Including this information in the supplemental material is fine, but if the main contribution of the paper involves human subjects, then as much detail as possible should be included in the main paper. 
        \item According to the NeurIPS Code of Ethics, workers involved in data collection, curation, or other labor should be paid at least the minimum wage in the country of the data collector. 
    \end{itemize}

\item {\bf Institutional review board (IRB) approvals or equivalent for research with human subjects}
    \item[] Question: Does the paper describe potential risks incurred by study participants, whether such risks were disclosed to the subjects, and whether Institutional Review Board (IRB) approvals (or an equivalent approval/review based on the requirements of your country or institution) were obtained?
    \item[] Answer: \answerNA{}
    \item[] Justification: \answerNA{the paper does not involve crowdsourcing nor research with human subjects}
    \item[] Guidelines:
    \begin{itemize}
        \item The answer NA means that the paper does not involve crowdsourcing nor research with human subjects.
        \item Depending on the country in which research is conducted, IRB approval (or equivalent) may be required for any human subjects research. If you obtained IRB approval, you should clearly state this in the paper. 
        \item We recognize that the procedures for this may vary significantly between institutions and locations, and we expect authors to adhere to the NeurIPS Code of Ethics and the guidelines for their institution. 
        \item For initial submissions, do not include any information that would break anonymity (if applicable), such as the institution conducting the review.
    \end{itemize}

\item {\bf Declaration of LLM usage}
    \item[] Question: Does the paper describe the usage of LLMs if it is an important, original, or non-standard component of the core methods in this research? Note that if the LLM is used only for writing, editing, or formatting purposes and does not impact the core methodology, scientific rigorousness, or originality of the research, declaration is not required.
    \item[] Answer: \answerYes{} 
    \item[] Justification: \answerYes{we only use LLMs for writing and editing}
    \item[] Guidelines:
    \begin{itemize}
        \item The answer NA means that the core method development in this research does not involve LLMs as any important, original, or non-standard components.
        \item Please refer to our LLM policy (\url{https://neurips.cc/Conferences/2025/LLM}) for what should or should not be described.
    \end{itemize}

\end{enumerate}

\end{document}